%% file: neurips_2026.tex
\documentclass{article}

\usepackage[preprint]{neurips_2026}

\usepackage[utf8]{inputenc} % allow utf-8 input
\usepackage[T1]{fontenc}    % use 8-bit T1 fonts
\usepackage{hyperref}       % hyperlinks
\usepackage{url}            % simple URL typesetting
\usepackage{booktabs}       % professional-quality tables
\usepackage{amsfonts}       % blackboard math symbols
\usepackage{nicefrac}       % compact symbols for 1/2, etc.
\usepackage{microtype}      % microtypography
\usepackage{amsmath}
\usepackage{amssymb}
\usepackage{mathtools}
\usepackage{amsthm}
\usepackage{algorithm}      
\usepackage{algorithmicx}  
\usepackage{algpseudocode}  
\usepackage{hyperref}
\usepackage[most]{tcolorbox}
\usepackage[table]{xcolor}
\usepackage{xcolor}
\usepackage{colortbl}
\hypersetup{
  colorlinks=true,
  citecolor={blue!50!black},
  linkcolor={red!50!black},
  urlcolor={blue!60!black}
}
\theoremstyle{plain}
\newtheorem{theorem}{Theorem}[section]
\newtheorem{remark}{Remark}[section]
\newtheorem{proposition}{Proposition}[section]
\newtheorem{corollary}{Corollary}[section]
\theoremstyle{definition}

\definecolor{UCPO}{RGB}{31,119,230}
\title{Uniform-Correct Policy Optimization: Breaking RLVR's Indifference to Diversity}

\author{ 
    \textbf{Anamika Lochab},  \ 
    \textbf{Bolian Li},  \ 
     \textbf{Ruqi Zhang}\\
    Department of Computer Science\\Purdue University, West Lafayette\\ 
    \texttt{\{alochab,  li4468, ruqiz\}@purdue.edu} }

\begin{document}

\maketitle

\begin{abstract}
Reinforcement Learning with Verifiable Rewards (RLVR) has achieved substantial gains in single-attempt accuracy (Pass@1) on reasoning tasks, yet often suffers from reduced multi-sample coverage (Pass@K), indicating diversity collapse. We identify a structural cause for this degradation: common RLVR objectives, such as GRPO, are indifferent to how probability mass is distributed among correct solutions. Combined with stochastic training dynamics, this indifference induces a self-reinforcing collapse, in which probability mass concentrates on a narrow subset of correct outputs while alternative valid solutions are suppressed. We formalize this collapse mechanism and further characterize the optimal policy structure under two complementary criteria: robustness and entropy-regularized optimality, which identify the Uniform-Correct Policy as uniquely optimal.
Motivated by this analysis, we propose  \emph{Uniform-Correct Policy Optimization} (UCPO), a modification to GRPO that adds a conditional uniformity penalty on the policy’s distribution over correct solutions. The penalty redistributes gradient signal toward underrepresented correct responses, encouraging uniform allocation of probability mass within the correct set. Across three models (1.5B–7B parameters) and five mathematical reasoning benchmarks, UCPO improves Pass@K and diversity while maintaining competitive Pass@1, achieving up to +10\% absolute improvement on AIME24 at Pass@64 and up to 45\% higher equation-level diversity within the correct set.
\end{abstract}

\section{Introduction}

\input{sections/introduction}

\section{Related Works}

\input{sections/new_related_works}

\section{Preliminaries}
\input{sections/preliminiaries}

\section{Structural Causes of Diversity Collapse in RLVR} 
\input{sections/struc_collapse_of_div}

\section{The Optimal Policy}
\input{sections/optimal_policy}

\section{UCPO: Uniform Correct Policy Optimization}
\input{sections/ucpo}

\section{Experiments}
\input{sections/experiments}

\section{Conclusion}

We identify a structural limitation of RLVR training: while it increases the probability of correct solutions, it is indifferent to how probability mass is distributed within the correct set. This indifference, coupled with on-policy sampling dynamics, leads to diversity collapse and degraded multi-sample coverage (Pass@K). We show that the optimal policy is the Uniform-Correct Policy, which distributes probability evenly across correct solutions. 
To achieve this structure, we proposed UCPO, to reweight gradients conditional on correctness, amplifying underrepresented correct solutions while tempering dominant ones. Across multiple models and mathematical reasoning benchmarks, UCPO consistently improves Pass@K while preserving Pass@1, achieving up to a 10\% absolute gain on AIME24 at Pass@64. These results highlight the importance of explicitly shaping the diversity structure of policies in RL for LLMs.

\bibliography{neurips_2026}
\bibliographystyle{plainnat}
%%%%%%%%%%%%%%%%%%%%%%%%%%%%%%%%%%%%%%%%%%%%%%%%%%%%%%%%%%%%
\appendix
\section{Theoretical Details and Proofs}
\input{appendix/Appendix_A_proofs}

\input{appendix/App_Optimal_policy_proofs}
\input{appendix/App_UCPO_proofs}
\input{appendix/choice_of_divergence}
\section{Controlled Empirical Validation of Collapse Dynamics}
\input{appendix/empirical_div_collapse}
\section{Why Global entropy regularization fails}
\input{appendix/why_global_ent_fails}

\section{Experimental Details}
\input{appendix/App_exp_details}
\section{Additional Results}
\input{appendix/App_additional_results}

\end{document}

%% file: sections/introduction.tex
Reinforcement Learning with Verifiable Rewards (RLVR) has emerged as a powerful paradigm for improving large language models (LLMs) on reasoning tasks such as mathematics and code generation, yielding substantial gains in single-attempt accuracy (Pass@1) across challenging reasoning benchmarks \citep{DeepSeekAI2025DeepSeekR1IR, shao2024deepseekmathpushinglimitsmathematical, basereasoningsmater, yue2025does}.

Despite these successes, RLVR often degrades Pass@K even as it improves Pass@1~\citep{he_rewarding, cobbe2021trainingverifierssolvemath,song2025outcome}. This suggests that RL fine-tuning contracts the model’s solution space~\citep{wu2026invisibleleashrlvrescape, basereasoningsmater}, concentrating probability mass on a small subset of correct solutions while suppressing alternative valid reasoning paths present in the base model.

Most prior efforts address this issue \citep{8020rule,he_rewarding,jiang2025rethinking, stabilizingknowledge, entropymech, chen2025pass} 
through entropy bonuses, exploration incentives, or token-level constraints. While these methods can alleviate diversity loss, they share a common limitation: none directly specifies how probability mass should be distributed among correct solutions. Lacking a principled target, they rely on proxies that may satisfy local heuristics (e.g., token-level entropy) while the policy still concentrates mass on a few correct solutions.

In this work, we analyze RLVR from the perspective of \emph{optimal policy structure}. We ask three questions: (i) what policies are optimal under standard RLVR objectives, and what policies are actually attained in practice; (ii) what policy should be considered optimal when multiple correct solutions exist; and (iii) how such a policy can be learned via RL. These questions guide our analysis and the design of a new RL objective.

We first show that the standard RLVR objective is fundamentally underspecified: it is indifferent to how probability mass is distributed among correct solutions. As a result, a fully collapsed policy and a maximally diverse policy will be considered equally optimal. Moreover, the stochastic dynamics of on-policy sampling induce a \emph{self-reinforcing collapse} in which frequently sampled solutions are amplified by gradient updates and suppress alternatives. Thus, diversity collapse arises jointly from objective indifference and training dynamics.

We then study what the optimal policy should be when multiple correct solutions exist. We formalize this through two criteria: robustness optimality (which policy retains the best performance under distribution shift) and entropy-regularized optimality (which policy optimally balances correctness with diversity). Both identify the same solution: the Uniform-Correct Policy, which places zero mass on incorrect outputs while distributing probability uniformly across all correct solutions.

Motivated by this analysis, we propose \emph{Uniform-Correct Policy Optimization} (UCPO), a simple modification of GRPO that admits the Uniform-Correct Policy as its unique optimum. UCPO adds a uniformity penalty over correct solutions, which redistributes gradient signal to amplify underrepresented solutions while tempering dominant ones.

Our contributions are as follows:

\begin{itemize}
    \item We identify a fundamental limitation of RLVR: the objective is indifferent to how probability mass is distributed among correct solutions, allowing stochastic training dynamics to drive irreversible mode collapse. We validate this mechanism both theoretically (Section ~\ref{sec:collapse}) and empirically through controlled experiments (Section ~\ref{sec:finding_1_collpase}).
    \item We theoretically characterize the optimal policy structure for RLVR through two criteria: robustness and entropy-regularized optimality (Section ~\ref{sec:optimal_policy}), proving that the Uniform-Correct Policy is optimal under both criteria.
    \item  We derive Uniform-Correct Policy Optimization (UCPO) to achieve this optimal structure, and prove that the Uniform-Correct Policy is its unique optimum (Theorem ~\ref{thm:stationary}). UCPO preserves the total advantage mass of GRPO but reallocates it toward underrepresented correct responses.
    \item Across three models (1.5B–7B parameters) and five mathematical benchmarks, UCPO consistently improves Pass@K and diversity while maintaining competitive Pass@1, achieving up to +10\% absolute improvement on AIME24 at Pass@64 and up to 45\% higher equation-level diversity within the correct set, with negligible training overhead. 
\end{itemize}

%% file: sections/new_related_works.tex
Prior works have documented diversity collapse in RLVR: fine-tuning improves Pass@1 but can reduce Pass@K by contracting the support of correct outputs and sharpening around high-likelihood base-model modes~\citep{matsutani2025rl, wu2026invisibleleashrlvrescape, he_rewarding, yue2025does, ni2025grpohelpllmstranscend}. A growing body of work aims to address this collapse, which we organize into analysis-driven methods that diagnose a collapse mechanism and derive a targeted fix, and methods that encourage diversity indirectly through token-level stochasticity or trajectory-level exploration.

\subsection{Diagnosing and Addressing Diversity Collapse}
Several concurrent works analyze mechanisms underlying diversity collapse and propose targeted interventions. \citet{entropymech} attribute collapse to entropy decay driven by positive probability-advantage covariance, and propose covariance-based gradient clipping and KL regularization. \citet{gai2025differential} show that selection and reinforcement biases in KL-regularized RLVR amplify base-policy imbalance within the correct set, and propose differential smoothing using sequence log-probabilities\footnote{Their analysis is for KL-regularized RLVR, whereas our focus is on the RLVR objective itself. Throughout our analysis and experiments, we use $\beta=0$, isolating collapse arising from the advantage-maximization term rather than reference regularization.}. \citet{ply2026f} show that easy prompts with high success rates drive the strongest concentration of correctness mass, and propose focal advantage scaling to down-weight updates from these prompts.
LAD \citep{li2026lad} states that expected-advantage maximization distorts relative proportions among correct responses, and replaces it with an 
$f$-divergence objective that preserves the previous policy's proportions. By contrast, our work identifies that RLVR advantage-maximization objectives are indifferent to within-correct set allocation. The proposed UCPO reshapes the within-correct distribution toward uniformity by redistributing gradient mass among correct responses.

 \subsection{Diversity via Token-Level Regularization and Trajectory-Level Exploration}
Entropy-based methods aim to mitigate collapse by maintaining stochasticity during training. Global entropy regularization  acts over the full output space, which can increase probability mass on incorrect responses and conflict with correctness. More targeted token-level methods preserve local stochasticity through entropy bonuses, high-uncertainty token updates, or selective entropy regularization~\citep{cheng2025reasoning,8020rule,stabilizingknowledge,jiang2025rethinking}. However, a model can remain uncertain at the token-level while still collapsing to a narrow set of final solutions \citep{wu2026invisibleleashrlvrescape}. 
Another line of work promotes diversity by broadening trajectory-level exploration through adaptive rollout allocation, targeted exploration, or external data~\citep{liao2025enhancing,fr3e,dong2025agentic}. These methods are complementary to UCPO as they expand the set of reasoning trajectories explored during training, while UCPO operates at the conditional-distribution level, reshaping how gradient mass is allocated among correct responses once they are sampled.

%% file: sections/preliminiaries.tex
We review the Reinforcement Learning with Verifiable Rewards (RLVR) setting and Group Relative Policy Optimization (GRPO).

\subsection{RLVR Objective}

Let $x$ denote a prompt and $y$ a model completion sampled from a policy $\pi_\theta(y \mid x)$. In RLVR, correctness is determined by a verifier $R(x, y) \in \{0, 1\}$, and the set of verifier-accepted (correct) solutions is
$$\mathcal{Y}^+(x) = \{y : R(x, y) = 1\}.$$

The RLVR objective maximizes the probability of producing any correct solution:
\begin{equation}
\begin{aligned}
    J(\pi_\theta) = \mathbb{E}_{x \sim\mathcal{D}} \mathbb{E}_{y \sim \pi_\theta(\cdot|x)}[R(x, y)] = \mathbb{E}_{x\sim\mathcal{D}} \left[ \sum_{y \in \mathcal{Y}^+(x)} \pi_\theta(y|x) \right]
    \label{eq:rlvr_objective}
    \end{aligned}.
\end{equation}
We denote the total probability mass the policy places on correct outputs as
\begin{equation}
Z_\theta(x) = \sum_{y \in \mathcal{Y}^+(x)} \pi_\theta(y \mid x),
\label{eq:correctness_mass}
\end{equation}
so that $J(\pi_\theta) = \mathbb{E}_{x \sim \mathcal{D}}[Z_\theta(x)]$. Maximizing the RLVR objective is therefore equivalent to maximizing $Z_\theta(x)$, or equivalently $\log Z_\theta(x)$.
Taking the gradient of Eq.~\eqref{eq:rlvr_objective} yields the standard policy gradient form:
\begin{equation}
\nabla_\theta J(\pi_\theta)
=
\mathbb{E}_{x\sim\mathcal D,\; y\sim\pi_\theta(\cdot\mid x)}
\left[
R(x,y)\nabla_\theta \log \pi_\theta(y\mid x)
\right].
\label{eq:rlvr_pg}
\end{equation}
The gradient signal depends only on whether 
$y$ is correct, not on how probability mass is distributed among different correct outputs. 
% This property carries through to  algorithms such as GRPO built on this objective.

\subsection{Group Relative Policy Optimization (GRPO)}
\label{sec:grpo_and_collapse}
GRPO~\citep{shao2024deepseekmathpushinglimitsmathematical} estimates RLVR policy gradients using group-relative baselines. For each prompt $x$, it samples $K$ rollouts $\{y_i\}_{i=1}^{K} \sim \pi_{\mathrm{old}}(\cdot \mid x)$ from the rollout policy, obtains rewards $R_i=R(x,y_i)$, and updates the current policy $\pi_\theta$ using importance ratios. Let $\sigma_R = \operatorname{std}(\{R_j\}_{j=1}^K)$ denote the within-group reward standard deviation. The group-relative baseline and advantage are
\begin{equation}
\bar{R} = \frac{1}{K} \sum_{i=1}^{K} R_i,  \quad \quad
A_i = \frac{R_i - \bar{R}}{\sigma_R + \varepsilon}.
\label{eq:grpo_advantage}
\end{equation}

Following PPO \citep{schulman2017proximal}, GRPO uses the clipped surrogate objective:
\begin{equation}
\begin{aligned}
L_{\text{GRPO}}(\theta)
&= \mathbb{E}_{x \sim \mathcal{D}, \{y_i\}_{i=1}^{K} \sim \pi_{\text{old}}(\cdot \mid x)}
\Bigg[
\sum_{i=1}^{K}
\min\Big(
    r_i(\theta) A_i,
    \mathrm{clip}\!\left(r_i(\theta), 1-\varepsilon, 1+\varepsilon\right) A_i
\Big)
\Bigg]  
\label{eq:grpo}
\end{aligned}
\end{equation}
where $r_i(\theta) = \frac{\pi_\theta(y_i|x)}{ \pi_{\mathrm{old}}(y_i|x)}$ is the importance ratio.  All correct responses share advantage $A^+$, and all incorrect responses share $A^-.$ Consequently, GRPO inherits the RLVR objective’s indifference to how probability mass is allocated within the correct set.

%% file: sections/struc_collapse_of_div.tex
\label{sec:the_problem}

We identify a structural limitation of RLVR objectives: they are indifferent to how probability mass is distributed among correct solutions. Combined with optimization dynamics, this indifference leads to systematic diversity collapse.

\subsection{Objective Indifference}
\label{sec:obj_indifference}
The RLVR objective (Eq.~\eqref{eq:rlvr_objective}) specifies what the policy should produce (correct outputs) but not how probability mass should be distributed among multiple correct solutions. The following theorem formalizes this observation.
\begin{theorem}[Stationary Points of RLVR]
\label{thm:rlvr_obj_indifference}
Let $\pi$ be any policy such that $\operatorname{supp}(\pi(\cdot \mid x)) \subseteq \mathcal{Y}^{+}(x)$, i.e., $\pi$ places all probability mass on correct outputs. Then $\pi$ is a stationary point of the RLVR objective (Eq.~\eqref{eq:rlvr_objective}).
\end{theorem}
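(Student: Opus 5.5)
The plan is to interpret ``stationary point'' in the natural sense for a constrained problem over the probability simplex: $\pi$ is stationary if no feasible first-order perturbation increases $J$. This is equivalent to the KKT conditions for maximizing a linear objective over the simplex. Working per prompt $x$ suffices, since $J(\pi)=\mathbb{E}_{x\sim\mathcal D}[Z(x)]$ decouples across prompts. So I would fix $x$ and view $Z(x)=\sum_{y\in\mathcal Y^+(x)}\pi(y\mid x)=\sum_y R(x,y)\pi(y\mid x)$ as a linear function of the vector $(\pi(y\mid x))_y$ on the simplex.

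\textbf{First-order condition.} The partial derivative of $Z$ with respect to $\pi(y\mid x)$ is $R(x,y)\in\{0,1\}$. Any feasible direction $d$ at $\pi$ satisfies $\sum_y d_y=0$, and $d_y\ge 0$ whenever $\pi(y\mid x)=0$. Suppose $\operatorname{supp}(\pi)\subseteq\mathcal Y^+$. Then every incorrect $y$ has $\pi(y\mid x)=0$, so $d_y\ge 0$ there. The directional derivative is
\[
\sum_y R(x,y)\,d_y=\sum_{y\in\mathcal Y^+}d_y=-\sum_{y\notin\mathcal Y^+}d_y\le 0 .
\]
Hence no feasible ascent direction exists.

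\textbf{KKT multipliers.} Equivalently, I would exhibit the multipliers explicitly. Take $\lambda=1$ for the equality constraint and $\mu_y=1-R(x,y)\ge 0$ for the nonnegativity constraints. Then $R(x,y)-\lambda+\mu_y=0$ holds for all $y$, and complementary slackness $\mu_y\pi(y\mid x)=0$ holds because $\pi$ vanishes off $\mathcal Y^+$.

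\textbf{Policy-gradient form.} I would also note the parametric version in terms of Eq.~\eqref{eq:rlvr_pg}, since that is the form the paper uses. At such a $\pi$, $R(x,y)=1$ $\pi$-a.s., so
\[
\nabla_\theta J=\mathbb{E}_{y\sim\pi}\!\left[\nabla_\theta\log\pi_\theta(y\mid x)\right]=\nabla_\theta\sum_y\pi_\theta(y\mid x)=\nabla_\theta 1=0 .
\]
This holds for any differentiable parameterization, such as a softmax, in which the policy remains normalized; for softmax, $\pi$ is attained as a limit point. Finally, because $Z(x)=1$ is the global maximum of $J$, such $\pi$ are in fact global maximizers regardless of how mass is split within $\mathcal Y^+$. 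This is the indifference the paper is after.

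\textbf{Main obstacle.} The only subtle step is the meaning of stationarity when $\pi$ sits on the boundary of the simplex, where it assigns zero mass to incorrect outputs. Under a softmax parameterization such policies are not attained at finite logits. I would handle this with the constrained KKT formulation above, and remark that the parametric gradient identity, as a vanishing expectation, holds whenever $\pi$ is representable.
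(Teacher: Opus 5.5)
Your proof is correct, but it takes a different route from the paper. The paper's argument is purely global: $Z(x)\le 1$ on the simplex, and any $\pi$ supported on $\mathcal{Y}^+(x)$ attains $Z(x)=1$, so $\pi$ is a global maximizer. It then reads stationarity off that, without ever saying what stationarity means at a boundary point of the simplex.

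You instead fix a precise notion, namely no feasible first-order ascent direction (equivalently, KKT with $\lambda=1$ and $\mu_y=1-R(x,y)\ge 0$), and verify it directly. You then add the parametric statement $\nabla_\theta J=0$ and flag that a softmax cannot attain such $\pi$ at finite logits. What your route buys is rigor at the boundary and a link to the policy-gradient form, Eq.~\eqref{eq:rlvr_pg}, that the paper uses later. The paper's route buys brevity and the stronger conclusion of global optimality, which is what the follow-up ``indifference'' remark actually relies on. Your closing observation recovers that conclusion as well.

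One small tightening in your parametric step. The identity $\mathbb{E}_{y\sim\pi}[\nabla_\theta\log\pi_\theta(y\mid x)]=\nabla_\theta\sum_y\pi_\theta(y\mid x)$ tacitly requires $\nabla_\theta\pi_\theta(y\mid x)=0$ for every $y$ outside the support. This does hold, because such a $\theta$ minimizes the nonnegative differentiable function $\pi_\theta(y\mid x)$. A more direct argument is that $J(\theta)\le 1=J(\theta_0)$, so $\theta_0$ is an unconstrained maximizer on an open parameter set and $\nabla_\theta J(\theta_0)=0$ follows immediately.
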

\begin{proof}
The RLVR objective is $J(\pi) \;=\; \sum_{y \in \mathcal{Y^{+}}(x)} \pi(y \mid x).$
For any policy with $\operatorname{supp}(\pi(\cdot \mid x)) \subseteq \mathcal{Y}^{+}(x)$, we have $J(\pi) = 1$, the maximum possible value. Since $J$ is linear in $\pi$ and the constraint set
\[
\left\{ \pi \;\middle|\; \sum_{y} \pi(y \mid x) = 1,\; \pi(y \mid x) \ge 0 \right\}
\]
is a simplex, any policy achieving $J(\pi) = 1$ is a global maximizer of the RLVR objective.
\end{proof}

\begin{remark}[Implication of Theorem ~\ref{thm:rlvr_obj_indifference}]
Theorem ~\ref{thm:rlvr_obj_indifference} implies that the following policies are all stationary points of RLVR with identical objective value $J(\pi) = 1$: 
\vspace{-0.4em}
\begin{itemize} 
\item \textbf{Collapsed:} $\pi(y_1 \mid x) = 1$ for some $y_1 \in \mathcal{Y}^{+}(x)$, and $\pi(y \mid x) = 0$ otherwise. 
\item \textbf{Uniform-correct:} $\pi(y \mid x) = \frac{1}{|\mathcal{Y}^{+}(x)|}$ for all $y \in \mathcal{Y}^{+}(x)$. 
\vspace{-0.4em}
\item \textbf{Any mixture:} Any distribution supported exclusively on $\mathcal{Y}^{+}(x)$. 
\end{itemize} \end{remark}
Thus, the RLVR objective (and by extension GRPO and other variants) provides no gradient signal to distinguish among these policies once all mass is on correct outputs. The choice among stationary points is determined entirely by optimization dynamics, not by the objective itself.

\subsection{The Collapse Mechanism}
\label{sec:collapse}
Section~\ref{sec:obj_indifference} established that RLVR objectives do not specify how probability mass should be allocated among correct responses. We now show that, under GRPO and related group-relative policy-gradient methods, on-policy sampling resolves this ambiguity by favoring concentration: sampling imbalances are amplified across updates, concentrating mass on a subset of correct responses. We formalize this as a four-step cycle and validate each step in controlled experiments where the full policy distribution is observable (Appendix~\ref{app:toy_exp_collpase}); proofs are provided in Appendix~\ref{app:proofs_for_collpase}.

\subsubsection{Setup}
\label{sec:collapse_setup}
 
Consider training a policy $\pi_\theta$ initialized from $\pi_{\text{ref}}$. For a prompt $x$, let $\mathcal{Y}$ be the finite output space, partitioned into correct responses $\mathcal{Y}^+=\{y_1,\ldots,y_m\}$ and incorrect responses $\mathcal{Y}^-$. Under GRPO, all correct responses share advantage $A^+$ and all incorrect responses share $A^-$.
Throughout this section, $y_i$ denotes a cluster of solutions sharing the same underlying reasoning strategy.

\subsubsection{Sampling Bias}
\label{sec:sampling_bias}
 
The GRPO objective involves an expectation over the policy:
\begin{equation}
\mathcal{L}(\theta) = \mathbb{E}_{y \sim \pi_\theta(\cdot|x)}\left[A(y) \cdot \log \pi_\theta(y|x)\right].
\label{eq:grpo_expectation}
\end{equation}
Since this expectation is intractable, GRPO approximates it via Monte Carlo: drawing $K$ i.i.d.\ rollouts from $\pi_\theta(\cdot|x)$ and computing the empirical gradient. Let $N_y$ denote the number of samples from solution $y$ among the $K$ rollouts.
 
\begin{proposition}[Sampling Bias]
\label{prop:sampling}
Under on-policy sampling, solutions with higher probability are sampled more frequently. For $K$ i.i.d.\ rollouts from $\pi_\theta(\cdot|x)$, the sample count $N_y$ follows:
\begin{equation}
N_y \sim \mathrm{Binomial}(K, \pi_\theta(y|x)), \quad \mathbb{E}[N_y] = K \cdot \pi_\theta(y|x).
\label{eq:sampling}
\end{equation}
\end{proposition}
\begin{proof}
Each rollout independently produces solution $y$ with probability $\pi_\theta(y|x)$. The count of successes in $K$ independent Bernoulli trials is binomial. 
\end{proof}
Higher-probability solutions are therefore overrepresented in sampled batches. Even under a uniform policy, finite-batch multinomial sampling can produce unequal counts across correct responses; Proposition~\ref{prop:symmetry} formalizes this sampling-induced symmetry breaking. Appendix~\ref{app:finding_2_always_skewed} empirically validates this effect in the controlled environment: under uniform initialization, GRPO amplifies whichever correct response is overrepresented, leading to collapse to an arbitrary winner. The key question is therefore whether the subsequent gradient step corrects this imbalance or amplifies it.

\subsubsection{Gradient Amplification }
\label{sec:gradient_amplification}
Since sampling produces unequal counts $\{N_y\}$, we now examine how this imbalance enters the gradient update.
The GRPO gradient over the batch is:
\begin{equation}
\nabla_\theta \mathcal{L} = \sum_{i=1}^{K} A_i \cdot \nabla_\theta \log \pi_\theta(y_i|x).
\label{eq:grpo_gradient}
\end{equation}
Grouping by solution, the gradient contribution of each correct $y \in \mathcal{Y}^+$ is: $N_y \cdot A^+ \cdot \nabla_\theta \log \pi_\theta(y|x).$

\begin{theorem}[Gradient Amplification]
\label{thm:gradient_amplification}
Under GRPO with on-policy sampling, the expected gradient contribution from a correct solution $y \in \mathcal{Y}^+$ is:
\begin{equation}
\mathbb{E}\left[\sum_{i: y_i = y} A^+ \nabla_\theta \log \pi_\theta(y|x)\right] = A^+ \cdot K \cdot \pi_\theta(y|x) \cdot \nabla_\theta \log \pi_\theta(y|x).
\label{eq:amplification}
\end{equation}
\end{theorem}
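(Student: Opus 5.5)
We will prove this by direct computation, using the binomial sample count from Proposition~\ref{prop:sampling}. First we rewrite the sum inside the expectation. Every summand is indexed by a rollout $i$ with $y_i = y$, and each such summand is the same quantity $A^+ \nabla_\theta \log \pi_\theta(y\mid x)$, which depends only on $y$. The sum therefore collapses to $N_y \cdot A^+ \cdot \nabla_\theta \log \pi_\theta(y\mid x)$, which is exactly the grouping displayed just before the theorem.

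Next we pin down what is random. Following the setup in Section~\ref{sec:collapse_setup}, we treat $A^+$ as the common advantage value attached to every correct response. The factor $\nabla_\theta \log \pi_\theta(y\mid x)$ is a deterministic function of $\theta$ and $y$, so it does not depend on which rollouts were drawn. Once $A^+$ is regarded as fixed (conditioned on), the only random quantity is $N_y$.

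By linearity of expectation, the expected contribution is $A^+ \, \mathbb{E}[N_y] \, \nabla_\theta \log \pi_\theta(y\mid x)$. Proposition~\ref{prop:sampling} gives $\mathbb{E}[N_y] = K\,\pi_\theta(y\mid x)$, and substituting this yields Eq.~\eqref{eq:amplification}. Equivalently, one can write $N_y = \sum_{i=1}^K \mathbf{1}[y_i = y]$ and take the expectation of each indicator separately; this route needs only that each rollout is drawn from $\pi_\theta(\cdot\mid x)$.

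The main obstacle is the status of $A^+$. In GRPO, $A^+ = (1-\bar R)/(\sigma_R+\varepsilon)$ depends on the same batch, so it is correlated with $N_y$. Our plan is to state explicitly that $A^+$ is treated as a fixed per-step advantage, matching the paper's convention that all correct responses share $A^+$. One justification is to condition on the number of correct samples $N^+$, which determines both $\bar R$ and $\sigma_R$ and hence $A^+$. Given $N^+$, each correct draw lands on $y$ with probability $\pi_\theta(y\mid x)/Z_\theta(x)$, so $\mathbb{E}[N_y \mid N^+] = N^+\pi_\theta(y\mid x)/Z_\theta(x)$. This conditional version also shows a contribution proportional to $\pi_\theta(y\mid x)$, which is the qualitative amplification claim. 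We will present the constant-$A^+$ calculation as the proof and note the conditional version as a remark.
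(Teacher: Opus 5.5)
Your proposal is correct and matches the paper's proof: group the correct-rollout terms into $N_y \cdot A^+ \cdot \nabla_\theta \log \pi_\theta(y\mid x)$, treat $A^+$ as a constant pulled out of the expectation, and apply $\mathbb{E}[N_y] = K\,\pi_\theta(y\mid x)$ from Proposition~\ref{prop:sampling}. Your caveat that $A^+$ depends on the batch, together with the conditional version $\mathbb{E}[N_y \mid N^+] = N^+\pi_\theta(y\mid x)/Z_\theta(x)$, is a valid refinement that still gives proportionality to $\pi_\theta(y\mid x)$; the paper's proof does not address this point.
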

Thus, expected gradient contribution scales with current probability: high-probability solutions receive larger updates, while solutions with $\pi_\theta(y\mid x)\ll 1/K$ are likely unsampled and lose mass through softmax renormalization.

\subsubsection{Divergence of Probability Ratios}
\label{sec:divergence}
The combination of sampling bias and gradient amplification causes probability ratios to diverge over training.

\begin{theorem}[Compounding Divergence]
\label{thm:divergence}
Let $\pi_t$ denote the policy after $t$ updates. Under GRPO, for two correct solutions $y_1, y_2 \in \mathcal{Y}^+(x)$ with $\pi_0(y_1|x) > \pi_0(y_2|x)$, the expected change in their log-probability ratio at step $t$ is:
\begin{equation}
\mathbb{E}\left[\Delta \log \frac{\pi_t(y_1|x)}{\pi_t(y_2|x)}\right] = \eta \cdot A^+ \cdot K \cdot \bigl(\pi_t(y_1|x) - \pi_t(y_2|x)\bigr) > 0,
\label{eq:ratio_divergence}
\end{equation}
where $\eta$ is the effective learning rate. The log-ratio is a submartingale with positive drift that increases as the probability gap widens.
\end{theorem}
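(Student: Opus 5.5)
We work in the tabular softmax parameterization $\pi_\theta(y\mid x)=\exp(\theta_y)/\sum_{y'}\exp(\theta_{y'})$, which makes the claimed expression exact. One step of GRPO is $\Delta\theta=\eta\,\nabla_\theta\mathcal{L}$, with the batch gradient given by Eq.~\eqref{eq:grpo_gradient}. The quantity of interest is the log-ratio $\log\pi_t(y_1\mid x)-\log\pi_t(y_2\mid x)=\theta_{y_1}-\theta_{y_2}$. The normalizer cancels in this difference, so to first order $\Delta\log\frac{\pi_t(y_1\mid x)}{\pi_t(y_2\mid x)}=\eta\,(e_{y_1}-e_{y_2})^\top\nabla_\theta\mathcal{L}$. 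Under the softmax this is exact in the logits and needs no Taylor remainder.

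Next we evaluate $(e_{y_1}-e_{y_2})^\top\nabla_\theta\log\pi_\theta(y\mid x)$ for each possible sampled $y$. Since $\nabla_\theta\log\pi_\theta(y\mid x)=e_y-\pi_\theta$, the projection equals $\mathbf{1}[y=y_1]-\mathbf{1}[y=y_2]-(\pi(y_1)-\pi(y_2))$.

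We then sum over the $K$ rollouts with advantages $A_i$ and take the expectation using Proposition~\ref{prop:sampling}, exactly as in Theorem~\ref{thm:gradient_amplification}. This gives
\[
\mathbb{E}\Bigl[\sum_i A_i\bigl(\mathbf{1}[y_i=y_1]-\mathbf{1}[y_i=y_2]\bigr)\Bigr]=A^+K\bigl(\pi_t(y_1)-\pi_t(y_2)\bigr),
\]
since $\mathbb{E}[N_{y_j}]=K\pi_t(y_j)$ and both responses are correct. The remaining term is $-(\pi(y_1)-\pi(y_2))\sum_i A_i$. Group-normalized advantages have zero sum, $\sum_i A_i=0$, by Eq.~\eqref{eq:grpo_advantage}, so this term vanishes identically. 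Together these give Eq.~\eqref{eq:ratio_divergence}.

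For the sign and submartingale claims we use $A^+>0$, which holds whenever the group contains both correct and incorrect responses. When the advantages are all zero the update is trivial, so we condition on a mixed group, or treat $A^+$ as the conditional expected positive advantage. Given $A^+>0$, the drift is positive exactly when $\pi_t(y_1)>\pi_t(y_2)$. That ordering holds at $t=0$ by hypothesis.

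To propagate the ordering we argue inductively. As long as the log-ratio stays positive the drift stays positive, so $L_t=\log\frac{\pi_t(y_1)}{\pi_t(y_2)}$ satisfies $\mathbb{E}[L_{t+1}\mid\mathcal F_t]\ge L_t$ on that event. This gives the submartingale property, stopped at the first time the ratio crosses zero. Finally, $\pi_t(y_1)-\pi_t(y_2)=\pi_t(y_2)\,(e^{L_t}-1)$ is increasing in $L_t$ for fixed $\pi_t(y_2)$, which matches the statement that the drift grows as the gap widens.

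\textbf{Main obstacle.} There is a subtlety in the expectation step: $A^+$ is itself random, depending on the batch through $\bar R$ and $\sigma_R$, and it is correlated with $N_{y_1}$ and $N_{y_2}$. I would first handle this by conditioning on the number of correct samples $n^+$. Given $n^+$, $A^+$ is fixed, and the correct samples are i.i.d.\ draws from $\pi_t(\cdot\mid\mathcal Y^+)$. That yields $\mathbb{E}[N_{y_1}-N_{y_2}\mid n^+]=n^+(\pi_t(y_1)-\pi_t(y_2))/Z_t$, and the sign claim then holds exactly. The paper's formula with a single constant $A^+$ is then read as the effective version of this, in the same sense that Theorem~\ref{thm:gradient_amplification} treats $A^+$ as constant.
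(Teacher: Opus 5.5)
Your proposal is correct and follows the same route as the paper. Both write the log-ratio under a softmax policy as the logit difference $z_{y_1}-z_{y_2}$, take the expected logit update $\mathbb{E}[\Delta z_y]=\eta A^+K\,\pi_t(y\mid x)$ from the sample counts, and subtract; yours is tighter than the paper's, which simply asserts that logit update with $\eta$ ``absorbing conversion factors'', because you derive it from $\nabla_\theta\log\pi_\theta(y\mid x)=e_y-\pi_\theta$ together with $\sum_i A_i=0$, handle the randomness of $A^+$ by conditioning on $n^+$, and correctly restrict the positivity and submartingale claim to the process stopped once $\pi_t(y_1\mid x)\le\pi_t(y_2\mid x)$, which is all either argument actually establishes.
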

This feedback causes small initial asymmetries to compound: the dominant solution receives larger updates, increasing its probability and expected sample count in subsequent batches, widening the gap and reinforcing the collapse.

\subsubsection{Irreversible Pruning}

The collapse becomes irreversible once minority solutions fall below the sampling threshold.

\begin{corollary}[Sampling Threshold]\label{cor:threshold}
When $\pi_t(y|x) \ll 1/K$, the probability that solution $y$ receives at least one sample in a batch of size $K$ is:
\[
P(N_y \geq 1) = 1 - (1 - \pi_t(y|x))^K \approx K \cdot \pi_t(y|x) \ll 1.
\]
Below this threshold, solution $y$ receives effectively zero gradient signal and cannot recover under on-policy sampling.
\end{corollary}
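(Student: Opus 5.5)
The corollary has two parts. The first is an exact formula together with a first-order approximation. The second is a qualitative claim about vanishing gradient signal. I would handle them in that order, using Proposition~\ref{prop:sampling} and Theorem~\ref{thm:gradient_amplification}.

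\textbf{The formula.} By Proposition~\ref{prop:sampling}, $N_y \sim \mathrm{Binomial}(K, p)$ with $p = \pi_t(y\mid x)$. Hence $P(N_y = 0) = (1-p)^K$, and therefore $P(N_y \ge 1) = 1-(1-p)^K$.

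\textbf{The approximation.} I would use Bernoulli's inequality $(1-p)^K \ge 1-Kp$, which gives the upper bound $P(N_y\ge 1)\le Kp$. For a matching lower bound I would use the union-bound/inclusion--exclusion estimate $1-(1-p)^K \ge Kp - \binom{K}{2}p^2 = Kp\bigl(1 - \tfrac{(K-1)p}{2}\bigr)$. Combining the two,
\[
P(N_y \ge 1) = Kp\,\bigl(1 - O(Kp)\bigr).
\]
Since $Kp \ll 1$, this gives $P(N_y \ge 1)\approx Kp \ll 1$.

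\textbf{The ``zero gradient signal'' claim.} I would make this precise by conditioning on the event $\{N_y = 0\}$, which has probability at least $1-Kp$. On this event the batch gradient Eq.~\eqref{eq:grpo_gradient} contains no term $A_i\nabla_\theta\log\pi_\theta(y\mid x)$ with $y_i = y$. The only effect on $y$ then comes indirectly, through the other sampled responses. In the softmax/tabular parametrization this indirect effect is the renormalization pressure: pushing up the sampled correct responses and pushing down the sampled incorrect ones changes $\log \pi(y)$ by $-\eta\sum_{y'} N_{y'}A(y')\pi(y')$. This quantity is generically nonpositive once correct mass dominates.

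For the expected direct contribution, I would invoke Theorem~\ref{thm:gradient_amplification}: it equals $A^+Kp\,\nabla\log\pi(y)$, which is $O(Kp)$ and is small compared with the $O(1)$ contributions of the dominant solutions. Via Theorem~\ref{thm:divergence}, the log-ratio drift against any dominant $y_1$ is then positive, of order $\eta A^+K(\pi_t(y_1)-p)$. This drift pushes $p$ further down, which in turn lowers the sampling probability further.

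\textbf{Main obstacle.} The hard part is the non-recovery statement. It is not a theorem in full generality: with a positive probability $\approx Kp$ per step, $y$ can be sampled and receive a boost. To bound the chance of recovery, I would first show that $p_t$ is (approximately) a supermartingale below the threshold. Under that assumption, the expected number of future sampling events is $\sum_t K\,\mathbb{E}[p_t]$. This sum is finite, and small, if $p_t$ decays geometrically due to the renormalization drift. Where geometric decay cannot be rigorously established, I would state non-recovery as holding \emph{with high probability} under that decay assumption, rather than deterministically.
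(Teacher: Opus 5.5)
Your derivation of $P(N_y\ge 1)=1-(1-p)^K$ from the binomial law is exactly the paper's first step. The remaining two parts take a different and more careful route.

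For the approximation, the paper just chains $(1-p)^K\approx e^{-Kp}\approx 1-Kp$. Your Bernoulli and second-order Bonferroni bounds instead give the rigorous sandwich $Kp\bigl(1-\tfrac{(K-1)p}{2}\bigr)\le P(N_y\ge 1)\le Kp$, i.e.\ an explicit relative error below $Kp/2$.

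For non-recovery, the paper argues in two moves. It asserts that an unsampled $y$ can only lose mass through softmax renormalization, and that the expected waiting time to the next sample is $\Omega(1/(Kp))$. You instead compute the renormalization effect, $\Delta\log\pi(y)\approx-\eta\sum_{y'}N_{y'}A(y')\pi(y')$. You then bound the expected number of future sampling events by $\sum_t K\,\mathbb{E}[p_t]$ under an explicit decay hypothesis, which by Markov's inequality gives non-recovery with high probability.

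What your version buys is clarity about where the heuristic actually lives. A long waiting time alone does not preclude eventual recovery: at fixed $p$, the event $N_y\ge 1$ occurs almost surely. So the paper implicitly relies on the same decay you state openly. Your formula also shows that the paper's ``can only decrease'' is conditional. With zero-sum advantages it equals $-\eta A^+ n(\bar\pi^{+}-\bar\pi^{-})$, where $n$ is the number of correct rollouts and $\bar\pi^{\pm}$ are the average probabilities of the sampled correct and incorrect rollouts. Its sign is therefore not guaranteed.

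If you write this up, make two points explicit. First, the renormalization formula uses $\sum_i A_i=0$, which holds for GRPO. Second, your decay hypothesis is weakest when $Z_t\approx 1$. In that regime most batches are all-correct, so $A_i=0$ and $p_t$ stays put except on the rare mixed-reward batches. A finite-horizon bound is the cleaner statement there. Alternatively, count only sampling events in mixed-reward batches, since those are the only ones that can boost $y$.
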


The four steps trace a complete trajectory: (i) initial asymmetries arise from sampling bias, (ii) GRPO amplifies these asymmetries into the gradient,  (iii) the updated policy increases the probability gap, which worsens sampling imbalance in subsequent iterations, and (iv) once low-probability solutions fall below the sampling threshold, recovery becomes unlikely. This mechanism explains the empirical Pass@1 versus Pass@$K$ trade-off( Figure~\ref{fig:div_collpase_base_vs_grpo}): GRPO improves single-sample accuracy while reducing multi-sample coverage.

\subsection{Controlled Experiment}
\label{sec:finding_1_collpase}
To validate the proposed collapse mechanism, we construct a controlled RLVR environment with a softmax policy over a finite discrete output space, of which three outputs are correct and the remainder are incorrect. This makes the full policy distribution exactly observable throughout training (details in Appendix~\ref{app:toy_exp_collpase}). Figure~\ref{fig:collapse_figure} confirms all four stages of the collapse cycle; for details see Appendix~\ref{app:finding_1_collpase}. 
\begin{figure*}[!ht]
        \centering
        \includegraphics[width=\linewidth]{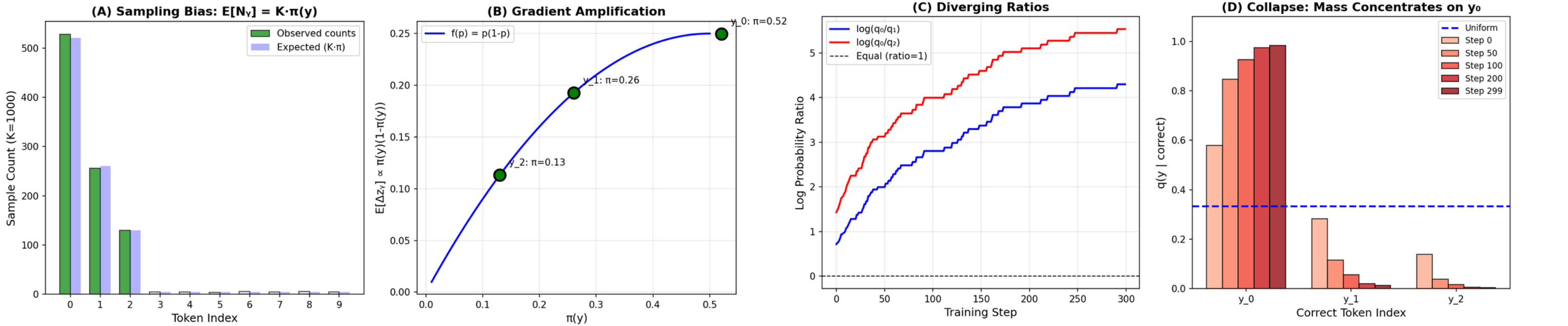}
    \caption{Controlled verification of the GRPO collapse mechanism in the toy RLVR environment.
(A) On-policy sampling over-represents high-probability correct tokens, matching $E[N_y]=K\cdot \pi(y).$
(B) Because gradient contribution scales with sample count, dominant correct tokens receive larger expected updates.
(C) The log-ratio between dominant and minority correct tokens increases over training, confirming compounding divergence.
(D) Probability mass eventually concentrates on the initially preferred correct token, illustrating mode collapse within the correct set.}
    \label{fig:collapse_figure}
\end{figure*}
\vspace{-10pt}

%% file: sections/optimal_policy.tex
\label{sec:optimal_policy}
We established that GRPO induces systematic collapse toward dominant correct solutions, we now ask: \textit{what should the optimal policy look like when multiple correct solutions exist?} 
We formalize this question through the conditional distribution over correct outputs and prove that a specific structure, uniform probability across all correct solutions, is optimal under two independent criteria.

\subsection{Conditional Distribution on Correct Outputs}

\begin{tcolorbox}[
    colback=green!4,
    colframe=green!50!black,
    title=\textbf{Conditional Correct Distribution},
    fonttitle=\bfseries,
    boxrule=0.6pt,
    arc=2mm,
    left=6pt,
    right=6pt,
    top=2pt,
    bottom=0pt
]
For a policy $\pi$, define the conditional distribution over correct outputs:
\begin{equation}
    q(y|x) = \frac{\pi(y|x)}{Z(x)}, \quad \text{where} \quad Z(x) = \sum_{y \in \mathcal{Y^{+}}(x)} \pi(y|x).
    \label{eq:conditional_distribution}
\end{equation}
\end{tcolorbox}

Here $Z(x)$ is the total probability that the policy produces a correct answer. While RLVR optimizes $Z(x)$, it leaves the structure of $q(y \mid x)$ unconstrained. Diversity collapse corresponds to $q$ concentrating on a small subset of $\mathcal{Y^{+}}$.

\subsection{The Uniform-Correct Policy}
\label{sec:uniform_correct_policy}
\begin{tcolorbox}[
    colback=green!4,
    colframe=green!50!black,
    title=\textbf{Uniform-Correct Policy (UCP)},
    fonttitle=\bfseries,
    boxrule=0.6pt,
    arc=2mm,
    left=6pt,
    right=6pt,
    top=4pt,
    bottom=2pt
]
% \begin{definition}[Uniform-Correct Policy]
A policy $\pi^*$ is \emph{uniform-correct} if:
\begin{enumerate}
    \item $\pi^*(y|x) = 0$ for all $y \notin \mathcal{Y^{+}}(x)$ 
    %(no mass on incorrect outputs)
    \item $q^*(y|x) = 1/|\mathcal{Y^{+}}|$ for all $y \in \mathcal{Y^{+}}(x)$ 
    %(uniform among correct)
\end{enumerate}
% \end{definition}
\end{tcolorbox}
This policy places all probability mass on correct outputs and distributes it uniformly among them.
We show that this structure is optimal under two complementary criteria: Robustness under distribution shift, which guards against test-time conditions that differ from training, and Entropy-regularized Optimality, a standard theoretical principle for resolving reward indifference (proofs in Appendix~\ref{app:optimal_policy_proofs}). 
\begin{theorem}[Robustness under distribution shift]
\label{thm:robustness}
Suppose a distribution shift disables a subset $S \subset \mathcal{Y^{+}}$ of correct outputs, where $|S| \leq s$. Consider the robust objective:
\begin{equation}
    \max_\pi \min_{S: |S| \leq s} \sum_{y \in \mathcal{Y^{+}} \setminus S} \pi(y|x).
\end{equation}
Then the Uniform-Correct Policy uniquely maximizes the worst-case retained probability mass.
\end{theorem}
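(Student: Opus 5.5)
The plan is to fix a prompt $x$, write $m=|\mathcal{Y}^{+}|$, and assume $1\le s<m$; when $s\ge m$ the adversary can remove every correct output and every policy has value $0$, so uniqueness cannot hold there. The first step is to solve the inner minimization in closed form. For a fixed $\pi$, the adversary best spends its whole budget, taking $|S|=s$, and removes the $s$ correct outputs of largest probability. Ordering the correct probabilities as $p_{(1)}\ge p_{(2)}\ge\cdots\ge p_{(m)}$ with $p_{(j)}=\pi(y_{(j)}\mid x)$, the robust value is
\[
V(\pi)=\sum_{j=s+1}^{m} p_{(j)},
\]
the sum of the $m-s$ smallest correct probabilities.

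The second step is an upper bound on $V(\pi)$. Let $Z=\sum_{y\in\mathcal{Y}^{+}}\pi(y\mid x)\le 1$. The $m-s$ smallest of $m$ numbers have average at most the overall average, which gives
\[
V(\pi)\le \frac{m-s}{m}\,Z\le \frac{m-s}{m}.
\]
For the Uniform-Correct Policy, $Z=1$ and every correct output has mass $1/m$, so any $s$ removals leave exactly $(m-s)/m$. The bound is therefore attained, and the Uniform-Correct Policy is a maximizer.

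The third step, uniqueness, is where the care is needed. Suppose $V(\pi)=(m-s)/m$. Then both inequalities in the chain must be equalities. The second equality forces $Z=1$, so $\pi$ puts no mass on $\mathcal{Y}^{-}$, which is condition 1 of the definition. For the first, I will show that the average of the $m-s$ smallest entries equals the overall average only if all entries are equal, using $s\ge1$. Write the overall mean as a convex combination:
\[
\frac{Z}{m}=\frac{s}{m}\,\overline{p}_{\text{top}}+\frac{m-s}{m}\,\overline{p}_{\text{bot}},
\]
where $\overline{p}_{\text{top}}$ is the mean of the $s$ largest entries and $\overline{p}_{\text{bot}}$ the mean of the $m-s$ smallest. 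Since $\overline{p}_{\text{top}}\ge\overline{p}_{\text{bot}}$, and both weights are positive because $1\le s<m$, equality $\overline{p}_{\text{bot}}=Z/m$ forces $\overline{p}_{\text{top}}=\overline{p}_{\text{bot}}$. That in turn forces $p_{(s)}\le\overline{p}_{\text{top}}=\overline{p}_{\text{bot}}\le p_{(s+1)}$, and since $p_{(s)}\ge p_{(s+1)}$, all of these are equal. Both groups then lie between $p_{(s+1)}$ and $p_{(s)}$, so all entries are equal, which is condition 2.

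The main subtlety is handling ties in the sorted order and keeping the boundary cases explicit. The ordering argument above does not depend on how ties are broken. The statement also needs $s\ge1$, since with $s=0$ every policy supported on $\mathcal{Y}^{+}$ is optimal, and $s<m$, as noted at the start. I will record both conditions as standing assumptions of the statement.
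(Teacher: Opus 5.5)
Your proof is correct and follows the same route as the paper's. Both solve the inner minimization by letting the adversary delete the $s$ largest correct probabilities, then show the retained mass is at most $1-s/m$, with equality only for the uniform allocation. Your version is more complete than the paper's, which restricts from the outset to $\pi$ supported on $\mathcal{Y}^{+}$ and simply asserts the equality case. Your factor $Z\le 1$ is what rules out mass on incorrect outputs (condition 1 of the UCP), and your convex-combination argument supplies the equality analysis the paper omits. You are also right that uniqueness needs $1\le s<m$, since for $s=0$ every policy supported on $\mathcal{Y}^{+}$ is optimal.
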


\begin{theorem}
\label{thm:entropy_regularized}
Consider the entropy-regularized objective
\begin{equation}
\max_{\pi(\cdot \mid x)} \;
\mathbb{E}_{y \sim \pi}[R(x,y)]
+ \tau \, \mathrm{H}(\pi(\cdot \mid x)).
\label{eq:entropy_regularized_objective}
\end{equation}
There exists a threshold $\tau_0 > 0$ such that for all $0 < \tau < \tau_0$, the UCP uniquely maximizes Eq.~\eqref{eq:entropy_regularized_objective}.
\end{theorem}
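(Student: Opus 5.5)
The plan is to split the objective along the same correct/incorrect structure used to define $q(y\mid x)$ in Eq.~\eqref{eq:conditional_distribution}. I will then optimize the within-correct allocation and the correctness mass $Z$ separately. Write any $\pi(\cdot\mid x)$ as $\pi(y\mid x)=Z\,q(y\mid x)$ on $\mathcal{Y}^+$ and $\pi(y\mid x)=(1-Z)\,q^-(y\mid x)$ on $\mathcal{Y}^-$, where $q^-$ is the conditional distribution on incorrect outputs. Since $\mathbb{E}_{y\sim\pi}[R(x,y)]=Z$, the grouping (chain) rule for entropy gives
\begin{equation*}
\mathbb{E}_{\pi}[R]+\tau\mathrm{H}(\pi)
= Z+\tau\Bigl(\mathrm{H}_b(Z)+Z\,\mathrm{H}(q)+(1-Z)\,\mathrm{H}(q^-)\Bigr),
\end{equation*}
where $\mathrm{H}_b$ is the binary entropy.

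\textbf{Step 1 (within-correct uniformity).} Fix $Z$. The only term involving $q$ is $\tau Z\,\mathrm{H}(q)$. By strict concavity of Shannon entropy, it is maximized uniquely by $q=\mathrm{Unif}(\mathcal{Y}^+)$, with value $\log|\mathcal{Y}^+|$, whenever $Z>0$. This holds for every $\tau>0$ and every feasible $Z$, so condition~2 of the UCP holds at any maximizer. This is the core content of the theorem, and the step that genuinely breaks the indifference of Theorem~\ref{thm:rlvr_obj_indifference}: among the RLVR optima (policies with $Z=1$), the objective reduces to $1+\tau\mathrm{H}(q)$, whose unique maximizer is the UCP.

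\textbf{Step 2 (the correctness mass).} After substituting the optimal $q$ and $q^-$, the problem reduces to a one-dimensional concave problem in $Z$:
\begin{equation*}
f(Z)=Z+\tau\bigl(\mathrm{H}_b(Z)+Z\log m+(1-Z)\log|\mathcal{Y}^-|\bigr),
\end{equation*}
where $m=|\mathcal{Y}^+|$. Solving $f'(Z)=0$ gives the Gibbs form $Z^\ast=m e^{1/\tau}/(m e^{1/\tau}+|\mathcal{Y}^-|)$. The incorrect mass is then $1-Z^\ast\le (|\mathcal{Y}^-|/m)\,e^{-1/\tau}$.

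\textbf{Step 3 (threshold, the main obstacle).} Condition~1 is the main obstacle. Because $\mathrm{H}_b'(Z)\to-\infty$ as $Z\to1$, the unconstrained maximizer never sets the incorrect mass exactly to zero when $\mathcal{Y}^-\neq\emptyset$. So condition~1 can only hold in one of three senses.
\begin{itemize}
\item Trivially, when $\mathcal{Y}^-=\emptyset$: then $Z^\ast=1$ and the result is exact for every $\tau$.
\item Over the class of policies supported on $\mathcal{Y}^+$, i.e.\ the RLVR optima that the theorem is meant to discriminate among: there Step~1 gives uniqueness for every $\tau>0$.
\item Up to the exponentially small leak $e^{-1/\tau}$.
\end{itemize}
I would state $\tau_0$ concretely so that $1-Z^\ast$ is below any prescribed tolerance, or equivalently below the sampling resolution of Corollary~\ref{cor:threshold}. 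I would present the exact uniqueness claim on the feasible class in the second bullet, and flag that the unrestricted maximizer coincides with the UCP only in this limiting sense.
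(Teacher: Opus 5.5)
Your analysis is correct, and it reaches a different conclusion from the paper's proof, which argues by perturbation. That conclusion is the right one. The paper moves the incorrect mass $\epsilon$ onto one correct output, gaining $\epsilon$ in reward. It asserts that entropy changes by at most $C\epsilon$ with $C\le\log|\mathcal{Y}|$, so every maximizer has $Z=1$ once $\tau<1/C$. It then uses strict concavity of entropy on the face $\{Z=1\}$, which is your Step~1 specialized to $Z=1$.

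Your Step~2 shows that the first half cannot hold, because entropy is not Lipschitz at the boundary of the simplex. For small $\epsilon$, the correct continuity bound for such a transfer is $\epsilon\log(|\mathcal{Y}|-1)+\mathrm{H}_b(\epsilon)$, and its second term is superlinear: $\mathrm{H}_b(\epsilon)\ge\epsilon\log(1/\epsilon)$. Concretely, take one correct and one incorrect output. Passing from $\pi=(1-\epsilon,\epsilon)$ to the UCP changes the objective by $\epsilon-\tau\mathrm{H}_b(\epsilon)\le\epsilon\bigl(1-\tau\log(1/\epsilon)\bigr)$, which is negative whenever $\epsilon<e^{-1/\tau}$, for every $\tau>0$.

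Hence, under the paper's own standing assumption $\emptyset\neq\mathcal{Y}^+\neq\mathcal{Y}$, no $\tau_0$ exists. Both the main-text statement and the appendix version (``any optimal policy places zero mass on incorrect outputs'') are false. The unique maximizer is $\pi_\tau(y\mid x)\propto e^{R(x,y)/\tau}$. The Gibbs variational identity $\mathbb{E}_\pi[R]+\tau\mathrm{H}(\pi)=\tau\log\sum_{y}e^{R(x,y)/\tau}-\tau\,\mathrm{KL}(\pi\,\|\,\pi_\tau)$ gives this, with uniqueness, in one line, and could replace your chain-rule reduction.

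So you are right not to force the literal statement; present a corrected one, as you propose. There are two clean versions.
\begin{itemize}
\item Exact uniqueness of the UCP among policies with $Z=1$, for every $\tau>0$, with no threshold. This is all the paper's argument actually establishes.
\item A limiting statement. For every $\tau>0$, $\pi_\tau$ has conditional distribution $q=u$ exactly. Its total-variation distance from the UCP is $1-Z^\ast\le(|\mathcal{Y}^-|/m)e^{-1/\tau}$, so $\pi_\tau\to$ UCP as $\tau\to0^+$.
\end{itemize}
If an exact optimality statement over the whole simplex is wanted, the regularizer has to act on $q$ rather than on $\pi$. The objective $Z+\tau\mathrm{H}(q)$ is uniquely maximized by the UCP for every $\tau>0$. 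This is the same structure that makes Theorem~\ref{thm:stationary} true.
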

Theorem~\ref{thm:robustness} establishes that the Uniform-Correct Policy is the minimax-optimal solution against the loss of any bounded subset of correct outputs, and Theorem~\ref{thm:entropy_regularized} shows that it is the unique solution to the entropy-regularized objective for sufficiently small $\tau$.

%% file: sections/ucpo.tex
\label{sec:UCPO}
Section~\ref{sec:optimal_policy} established that the uniform-correct policy is uniquely optimal. We now derive an objective that targets this policy during training.

\subsection{UCPO Motivation}
\label{sec:ucpo}

A direct way to operationalize Theorem~\ref{thm:entropy_regularized} is global entropy regularization. However, the optimization dynamics of this approach hinder convergence to the UCP. Because global entropy acts over the full output space, it can increase mass on incorrect outputs rather than controlling the conditional distribution over correct responses. Empirically, it yields small and less consistent Pass@K gains and is sensitive to the entropy coefficient; see Appendix~\ref{app:global-entropy-fails} for details.

This motivates an objective that directly controls how the policy allocates mass among correct responses.
We therefore propose \emph{Uniform-Correct Policy Optimization} (UCPO), which augments correctness-mass maximization with a conditional uniformity penalty that encourages the policy to match the uniform distribution over correct responses. UCPO therefore increases total correctness mass while promoting a balanced allocation of probability mass within the correct set.

\subsection{Objective}

For a prompt $x\sim\mathcal{D}$, let $\mathcal{Y}^+(x)$ be the set of correct responses.
During training, we sample $N$ rollouts from $\pi_\theta (\cdot|x)$. The correct rollouts $y \in \mathcal{Y}^+(x)$ are effectively samples from the conditional policy $q_\theta(\cdot|x)$: 
\begin{equation}
q_\theta(y \mid x) = \frac{\pi_\theta(y \mid x)}{Z_\theta(x)},
\qquad Z_\theta(x) = \sum_{y \in \mathcal{Y}^+(x)} \pi_\theta(y \mid x)
\notag
\end{equation}
where $Z_\theta(x)$ is the total probability mass that the model places on all the correct responses. 
We also define the uniform distribution over the correct set:
\[u(y \mid x) = \frac{1}{|\mathcal{Y}^+(x)|}.
\] 

We propose the \emph{Uniform-Correct Policy Optimization} (UCPO) objective:
\begin{equation}
\label{eq:ucpo-objective}
L_{\mathrm{UCPO}}(\theta) = \mathbb{E}_{x \sim \mathcal{D}} \left[
\underbrace{\log Z_\theta(x)}_{\text{correctness mass}}
- \tau\,\underbrace{{\color{UCPO}\mathrm{KL}(u(\cdot\mid x)\|q_\theta(\cdot\mid x)}}_{\text{uniformity penalty}}
\right]
\end{equation}
Relative to GRPO, UCPO introduces a single additional term: the conditional uniformity penalty $\mathrm{KL}(u \| q_\theta)$. At $\tau = 0$, UCPO reduces to GRPO. For $\tau > 0$, the added term breaks GRPO’s indifference to within-correct mass allocation and biases the policy toward the uniform-correct target. In particular, the Uniform-Correct Policy is the unique optimum of the UCPO objective (Theorem~\ref{thm:stationary}). This modification incurs negligible additional computational overhead, since the penalty can be estimated from the same verifier-accepted rollouts already used by GRPO. As $\tau$ increases, the penalty increasingly discourages concentration, pushing $q_\theta$ toward the uniform target established in Section~\ref{sec:optimal_policy}.

\begin{theorem}[Uniform-correct policy is the unique optimum of UCPO]
\label{thm:stationary}
Fix a prompt $x$ and consider the UCPO objective:
\[
L_{\mathrm{UCPO}}(x)
=
\log Z_\theta(x)
-
\tau\,\mathrm{KL}\!\bigl(u(\cdot\mid x)\,\|\,q_\theta(\cdot\mid x)\bigr).
\]
For any $\tau>0$, the unique maximizer of $L_{\mathrm{UCPO}}(x)$ over policies $\pi_\theta(\cdot\mid x)$ is the Uniform-Correct Policy.
\end{theorem}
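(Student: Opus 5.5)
The approach is to decouple the objective: reparametrize the policy $\pi(\cdot\mid x)$ by the pair $(Z, q)$ together with the distribution on incorrect outputs. Here $Z=Z_\theta(x)\in(0,1]$ is the correctness mass, $q=q_\theta(\cdot\mid x)$ is a distribution on $\mathcal{Y}^+(x)$, and the incorrect mass $1-Z$ is spread arbitrarily over $\mathcal{Y}^-(x)$. This map is a bijection onto policies with $Z>0$, since $\pi(y\mid x)=Z\,q(y\mid x)$ for $y\in\mathcal{Y}^+$. When $Z=0$, $q$ is undefined and $\log Z=-\infty$, so such policies cannot be maximizers. Under this parametrization the objective separates as
\[
L_{\mathrm{UCPO}}(x)=\log Z-\tau\,\mathrm{KL}(u\,\|\,q),
\]
where the first term depends only on $Z$ and the second only on $q$. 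The incorrect-output allocation does not appear at all. I will assume, as in Section~\ref{sec:optimal_policy}, that $\mathcal{Y}^+(x)$ is finite and nonempty, so that $u$ is well defined.

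The argument then proceeds in two independent maximizations. First, $\log Z\le 0$, with equality iff $Z=1$, which is equivalent to $\pi(y\mid x)=0$ for every $y\notin\mathcal{Y}^+(x)$. This is condition~1 of the UCP definition. Second, by Gibbs' inequality, $\mathrm{KL}(u\,\|\,q)\ge 0$, with equality iff $q=u$. If some $q(y)=0$ on the correct set, the KL is $+\infty$, because $u$ has full support on $\mathcal{Y}^+$; such $q$ are strictly worse. Since $\tau>0$, the term $-\tau\,\mathrm{KL}$ is maximized, with value $0$, exactly at $q=u$, which is condition~2. Because the two terms are functions of disjoint coordinates, the supremum of the sum equals the sum of the suprema. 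It is attained iff both are attained, which gives $L_{\mathrm{UCPO}}\le 0$ with equality iff $Z=1$ and $q=u$. That is precisely $\pi(y\mid x)=1/|\mathcal{Y}^+(x)|$ on $\mathcal{Y}^+$ and $0$ elsewhere.

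Uniqueness follows because the equality conditions pin down every coordinate. $Z=1$ forces zero mass on $\mathcal{Y}^-$, so the free choice of incorrect allocation disappears. $q=u$ fixes the correct part. Hence exactly one policy achieves the value $0$.

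The only real subtlety is making the decoupling rigorous. That means checking that $(Z,q)$ ranges over the full product set $(0,1]\times\Delta(\mathcal{Y}^+)$ as $\pi$ ranges over the simplex; given a target $(Z,q)$ with $Z<1$, one can always place the remaining mass $1-Z$ on $\mathcal{Y}^-$, which requires $\mathcal{Y}^-$ nonempty when $Z<1$. The other point is handling the boundary cases $Z=0$ and $q$ with zero entries via the conventions $\log 0=-\infty$ and $\mathrm{KL}=+\infty$. If $\theta$ parametrizes only a subfamily of policies, such as a softmax, I will state the result over the set of policies $\pi(\cdot\mid x)$, as the theorem does. Note that under a strict softmax the maximizer is approached but not attained, since softmax cannot place exactly zero mass on $\mathcal{Y}^-$.
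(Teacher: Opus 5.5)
Your proposal is correct and follows essentially the same argument as the paper: $\log Z_\theta(x)\le 0$ with equality iff all mass lies on $\mathcal{Y}^+(x)$, and $\mathrm{KL}(u\,\|\,q_\theta)\ge 0$ with equality iff $q_\theta=u$, so $L_{\mathrm{UCPO}}(x)\le 0$ with equality exactly at the Uniform-Correct Policy. The surjectivity check for the $(Z,q)$ reparametrization is not needed, because the pointwise bound together with the UCP attaining $0$ already gives existence and uniqueness; your boundary-case handling and the remark that a strict softmax never attains the maximizer are sensible refinements the paper omits.
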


\subsection{Gradient Estimation}

UCPO counteracts collapse through its gradient structure. Although GRPO is indifferent to within-correct allocation at the objective level, its sampled update effectively weights correct responses according to the current conditional distribution $q_\theta$. UCPO instead interpolates between $q_\theta$ and the uniform target, shifting gradient mass away from dominant correct solutions and toward underrepresented ones.

\begin{proposition}[Gradient Decomposition]
\label{prop:grad-decomp}
For any prompt $x$ with $\mathcal{Y}^+(x) \neq \emptyset$,
\begin{equation}
\label{eq:grad-decomp}
\nabla_\theta L_{\mathrm{UCPO}}(x) = \sum_{y \in \mathcal{Y}^+(x)} \Big[ (1-\tau)\, q_\theta(y \mid x) + \tau\, u(y \mid x) \Big] \nabla_\theta \log \pi_\theta(y \mid x).
\end{equation}
\end{proposition}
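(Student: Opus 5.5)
We compute the gradient of each of the two terms of $L_{\mathrm{UCPO}}(x)$ separately and then add them. Throughout, $\mathcal{Y}^+ = \mathcal{Y}^+(x)$ is finite and nonempty, so $Z_\theta(x)>0$ whenever $\pi_\theta$ has full support (as for a softmax policy). This makes all logarithms and ratios below well defined.

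\textbf{Correctness term.} Since $Z_\theta=\sum_{y\in\mathcal{Y}^+}\pi_\theta(y\mid x)$, the log-derivative trick gives
\[
\nabla_\theta \log Z_\theta(x)=\frac{1}{Z_\theta(x)}\sum_{y\in\mathcal{Y}^+}\pi_\theta(y\mid x)\,\nabla_\theta\log\pi_\theta(y\mid x)=\sum_{y\in\mathcal{Y}^+} q_\theta(y\mid x)\,\nabla_\theta\log\pi_\theta(y\mid x).
\]
This makes precise the earlier remark that GRPO effectively weights correct responses by $q_\theta$.

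\textbf{Uniformity penalty.} Expand the penalty as
\[
\mathrm{KL}(u\|q_\theta)=\sum_{y\in\mathcal{Y}^+}u(y)\log u(y)-\sum_{y\in\mathcal{Y}^+}u(y)\log q_\theta(y\mid x).
\]
The first sum does not depend on $\theta$. In the second, substitute $\log q_\theta(y\mid x)=\log\pi_\theta(y\mid x)-\log Z_\theta(x)$ and use $\sum_{y\in\mathcal{Y}^+}u(y)=1$:
\[
\nabla_\theta \mathrm{KL}(u\|q_\theta)=-\sum_{y\in\mathcal{Y}^+}u(y)\,\nabla_\theta\log\pi_\theta(y\mid x)+\nabla_\theta\log Z_\theta(x).
\]
We then replace $\nabla_\theta\log Z_\theta(x)$ by the expression from the correctness step.

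\textbf{Combining.} Adding the two pieces, and multiplying the penalty gradient by $-\tau$, gives
\[
\nabla_\theta L_{\mathrm{UCPO}}(x)=\sum_{y\in\mathcal{Y}^+}\bigl[q_\theta(y\mid x)-\tau\,q_\theta(y\mid x)+\tau\,u(y\mid x)\bigr]\nabla_\theta\log\pi_\theta(y\mid x),
\]
which is exactly Eq.~\eqref{eq:grad-decomp}.

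The main point needing care is the normalizer $Z_\theta$ inside $q_\theta$. Its derivative is what turns the coefficient of $q_\theta$ from $1$ into $(1-\tau)$, and it must not be treated as constant. This step relies only on $u$ summing to one over $\mathcal{Y}^+$. No interchange of sum and derivative issues arise, because the output space is finite.
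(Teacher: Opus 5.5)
Your proof is correct and follows the paper's argument essentially step for step. You compute $\nabla_\theta \log Z_\theta$ by the log-derivative trick, then expand $\mathrm{KL}(u\|q_\theta)$ using $\log q_\theta = \log\pi_\theta - \log Z_\theta$ and $\sum_{y} u(y)=1$, and recombine to obtain the $(1-\tau)$ coefficient. Your remark that $Z_\theta(x)>0$ is needed (e.g., for a full-support softmax policy) makes explicit a well-definedness condition the paper leaves implicit.
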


\noindent The proof is provided in Appendix~\ref{app:grad-proof}. The first term is estimated directly from the $n$ correct rollouts. The second term requires samples from the uniform distribution $u$, which we estimate via importance sampling. Since $u(y)/q_\theta(y) \propto 1/q_\theta(y)$, we define inverse weights $v_i = 1/\hat{q}_i$, where
\[
\hat{q}_i = \frac{\pi_\theta(y_i \mid x)}{\sum_{j=1}^{n} \pi_\theta(y_j \mid x)},
\]
with the derivation given in Appendix~\ref{app:snis}. The resulting estimator is
\[
\nabla_\theta L_{\mathrm{UCPO}}
\approx \sum_{i=1}^{n} w_i \nabla_\theta \log \pi_\theta(y_i \mid x),
\qquad
w_i = \frac{1-\tau}{n} + \tau \cdot \frac{v_i}{\sum_{j=1}^n v_j}.
\]
By construction, $\sum_i w_i = 1$. Consequently, low-probability correct solutions receive larger weights, directly counteracting the sampling bias identified in Section~\ref{sec:collapse}.

Equivalently, UCPO redistributes GRPO's total advantage budget $n\cdot A^+$ according to $w_i$, giving larger updates to underrepresented correct solutions: 
\begin{equation}
\label{eq:ucpo-advantage}
A_i^{\mathrm{UCPO}} = n \cdot A^+ \cdot w_i.
\end{equation}

\begin{proposition}[Mass Preservation]
\label{prop:mass}
$\sum_{i=1}^n A_i^{\mathrm{UCPO}} = \sum_{i=1}^n A_i^{\mathrm{GRPO}}$.
\end{proposition}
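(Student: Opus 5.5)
The proof is a direct computation from the definition of $A_i^{\mathrm{UCPO}}$ in Eq.~\eqref{eq:ucpo-advantage} together with the normalization $\sum_i w_i = 1$. We interpret both sums in the statement as ranging over the $n$ correct rollouts $\{y_i\}_{i=1}^n$ in the group, which are the only responses whose advantages UCPO reweights. The incorrect rollouts keep their GRPO advantage $A^-$ unchanged, so they contribute identically to both sides and can be ignored.

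\emph{Step 1 (GRPO side).} By Eq.~\eqref{eq:grpo_advantage}, every correct rollout has the same reward $R_i = 1$. Hence all of them share the advantage $A^+ = (1-\bar R)/(\sigma_R+\varepsilon)$, and $\sum_{i=1}^n A_i^{\mathrm{GRPO}} = n\,A^+$.

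\emph{Step 2 (normalization of the weights).} We verify that $\sum_{i=1}^n w_i = 1$ from the definition $w_i = \frac{1-\tau}{n} + \tau\,\frac{v_i}{\sum_j v_j}$. Summing the first term over $i$ gives $1-\tau$. Summing the second term gives $\tau \cdot \frac{\sum_i v_i}{\sum_j v_j} = \tau$. This ratio is well defined because $v_i = 1/\hat q_i > 0$: each sampled rollout has $\pi_\theta(y_i\mid x) > 0$, so $\hat q_i > 0$. The two terms together sum to $1$. This is the only point needing care, namely positivity of the $v_i$ and the requirement $n \ge 1$. If $n = 0$, both sums are empty and equal $0$, so the claim holds trivially.

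\emph{Step 3 (conclude).} Using Eq.~\eqref{eq:ucpo-advantage},
\[
\sum_{i=1}^n A_i^{\mathrm{UCPO}} = n A^+ \sum_{i=1}^n w_i = n A^+ = \sum_{i=1}^n A_i^{\mathrm{GRPO}}.
\]
If the clipped surrogate of Eq.~\eqref{eq:grpo} is used, clipping acts on the ratio $r_i(\theta)$ and does not alter the advantages, so the identity is unaffected. There is no real obstacle; the argument is bookkeeping, and the key fact is the affine normalization of the $w_i$.
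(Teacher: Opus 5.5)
Your proof is correct and follows the paper's own argument: show $\sum_{i=1}^n w_i = (1-\tau)+\tau = 1$, then conclude $\sum_i A_i^{\mathrm{UCPO}} = nA^+\sum_i w_i = nA^+ = \sum_i A_i^{\mathrm{GRPO}}$. Your extra remarks on the positivity of the $v_i$, the empty case $n=0$, and clipping are harmless additions that the paper leaves implicit.
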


\noindent Thus, UCPO preserves the total advantage mass assigned to correct rollouts and reallocates it towards underrepresented correct responses. The proof is provided in Appendix~\ref{app:proofs}.

\input{tables/res_ds_combo}

%% file: tables/res_ds_combo.tex
\begin{table*}[t]
\centering
\scriptsize
\setlength{\tabcolsep}{4pt}
\renewcommand{\arraystretch}{1.15}
\caption{Pass@1 and Pass@64 (\%) across models and reasoning benchmarks. UCPO improves average Pass@64 across all three model families while remaining competitive at Pass@1. Best results are shown in bold.}
\label{tab:deepseek_passk_combined}

\textbf{DeepSeek-Distill-Qwen-1.5B}

\vspace{2pt}
\begin{tabular}{lcc@{\hspace{4pt}}cc@{\hspace{4pt}}cc@{\hspace{4pt}}cc@{\hspace{4pt}}cc@{\hspace{4pt}}cc}
\toprule
& \multicolumn{2}{c}{\textbf{AIME24}}
& \multicolumn{2}{c}{\textbf{AIME25}}
& \multicolumn{2}{c}{\textbf{AMC}}
& \multicolumn{2}{c}{\textbf{MATH}}
& \multicolumn{2}{c}{\textbf{OlympiadBench}}
& \multicolumn{2}{c}{\textbf{Avg}} \\
\cmidrule(lr){2-3}\cmidrule(lr){4-5}
\cmidrule(lr){6-7}\cmidrule(lr){8-9}
\cmidrule(lr){10-11}\cmidrule(lr){12-13}
\textbf{Method} 
& \textbf{@1} & \textbf{@64} 
& \textbf{@1} & \textbf{@64} 
& \textbf{@1} & \textbf{@64} 
& \textbf{@1} & \textbf{@64} 
& \textbf{@1} & \textbf{@64} 
& \textbf{@1} & \textbf{@64} \\
\midrule
GRPO    & 19.06 & 63.33 & \textbf{20.21} & 46.67 & 55.27 & 87.95 & 71.57 & \textbf{97.60} & 42.63 & 78.52 & \textbf{41.75} & 74.81 \\
KL-Cov  & \textbf{19.64} & 60.00 & 18.13 & 43.33 & 54.78 & 91.57 & 70.47 & 96.80 & 42.69 & 78.22 & 41.14 & 73.98 \\
Ent-Reg & 19.43 & 63.33 & 17.24 & 46.67 & \textbf{56.12} & 89.16 & \textbf{72.23} & \textbf{97.60} & \textbf{43.03} & 78.07 & 41.61 & 74.97 \\
Ent-Adv & 19.58 & 63.33 & 15.94 & 46.67 & 54.07 & 91.57 & 69.37 & 97.00 & 41.59 & 79.11 & 40.11 & 75.54 \\
Pass@K  & 15.47 & \textbf{66.67} & 14.27 & \textbf{53.33} & 47.63 & \textbf{93.98} & 62.74 & 97.40 & 35.24 & 77.93 & 35.07 & 77.86 \\
FGRPO   & 17.92 & \textbf{66.67} & 16.35 & 50.00 & 52.79 & \textbf{93.98} & 67.76 & 97.20 & 40.57 & 78.96 & 39.08 & 77.36 \\
\rowcolor{UCPO!20}
UCPO    & 19.01 & \textbf{66.67} & 18.49 & \textbf{53.33} & 52.97 & \textbf{93.98} & 67.93 & 97.20 & 40.15 & \textbf{80.15} & 39.71 & \textbf{78.27} \\
\bottomrule
\end{tabular}

\vspace{6pt}
\textbf{DeepSeek-Distill-Qwen-7B}

\vspace{2pt}
\begin{tabular}{lcc@{\hspace{4pt}}cc@{\hspace{4pt}}cc@{\hspace{4pt}}cc@{\hspace{4pt}}cc@{\hspace{4pt}}cc}
\toprule
& \multicolumn{2}{c}{\textbf{AIME24}}
& \multicolumn{2}{c}{\textbf{AIME25}}
& \multicolumn{2}{c}{\textbf{AMC}}
& \multicolumn{2}{c}{\textbf{MATH}}
& \multicolumn{2}{c}{\textbf{OlympiadBench}}
& \multicolumn{2}{c}{\textbf{Avg}} \\
\cmidrule(lr){2-3}\cmidrule(lr){4-5}
\cmidrule(lr){6-7}\cmidrule(lr){8-9}
\cmidrule(lr){10-11}\cmidrule(lr){12-13}
\textbf{Method} 
& \textbf{@1} & \textbf{@64} 
& \textbf{@1} & \textbf{@64} 
& \textbf{@1} & \textbf{@64} 
& \textbf{@1} & \textbf{@64} 
& \textbf{@1} & \textbf{@64} 
& \textbf{@1} & \textbf{@64} \\
\midrule
GRPO    & 27.71 & 70.00 & 22.08 & 53.33 & 61.86 & 93.98 & 77.99 & 98.20 & 47.28 & 82.96 & 47.38 & 79.69 \\
KL-Cov  & \textbf{31.15} & 76.67 & \textbf{23.07} & \textbf{56.67} & \textbf{63.42} & 95.18 & 78.21 & 98.20 & 47.99 & 81.93 & \textbf{48.77} & 81.73 \\
Ent-Reg & 29.06 & 76.67 & 21.30 & \textbf{56.67} & 63.08 & 93.98 & \textbf{78.31} & \textbf{99.00} & \textbf{48.26} & 82.37 & 48.00 & 81.74 \\
Ent-Adv & 29.06 & \textbf{80.00} & 21.15 & \textbf{56.67} & 59.71 & 96.39 & 76.08 & \textbf{99.00} & 45.92 & \textbf{83.56} & 46.38 & 83.12 \\
Pass@K  & 23.39 & 70.00 & 19.11 & \textbf{56.67} & 56.44 & 93.98 & 72.59 & 98.20 & 41.77 & 80.74 & 42.66 & 79.92 \\
FGRPO   & 27.55 & 76.67 & 21.25 & 53.33 & 61.77 & 95.18 & 76.25 & 98.80 & 46.47 & 81.93 & 46.66 & 81.18 \\
\rowcolor{UCPO!20}
UCPO    & 30.16 & \textbf{80.00} & 21.51 & \textbf{56.67} & 62.58 & \textbf{97.59} & 76.45 & \textbf{99.00} & 47.39 & 83.26 & 47.62 & \textbf{83.30} \\
\bottomrule
\end{tabular}

\vspace{4pt}
\textbf{Qwen2.5-7B}
\vspace{1pt}

\begin{tabular}{lcc@{\hspace{4pt}}cc@{\hspace{4pt}}cc@{\hspace{4pt}}cc@{\hspace{4pt}}cc@{\hspace{4pt}}cc}
\toprule
& \multicolumn{2}{c}{\textbf{AIME24}}
& \multicolumn{2}{c}{\textbf{AIME25}}
& \multicolumn{2}{c}{\textbf{AMC}}
& \multicolumn{2}{c}{\textbf{MATH}}
& \multicolumn{2}{c}{\textbf{OlympiadBench}}
& \multicolumn{2}{c}{\textbf{Avg}} \\
\cmidrule(lr){2-3}\cmidrule(lr){4-5}
\cmidrule(lr){6-7}\cmidrule(lr){8-9}
\cmidrule(lr){10-11}\cmidrule(lr){12-13}

\textbf{Method} 
& \textbf{@1} & \textbf{@64} 
& \textbf{@1} & \textbf{@64} 
& \textbf{@1} & \textbf{@64} 
& \textbf{@1} & \textbf{@64} 
& \textbf{@1} & \textbf{@64} 
& \textbf{@1} & \textbf{@64} \\
\midrule
GRPO      
& 14.27 & 43.33 & \textbf{5.26} & 40.00 & 44.26 & 90.36 & \textbf{71.61} & 95.40 & 34.85 & 72.74 
& \textbf{34.05} & 68.37 \\

KL-Cov    
& 12.50 & 46.67 & 4.27 & 46.67 & 41.87 & 87.95 & 67.58 & 95.40 & 31.84 & 74.81 
& 31.61 & 70.30 \\

Ent-Reg   
& 11.98 & 50.00 & 4.11 & 40.00 & 42.09 & 86.75 & 66.60 & 94.80 & 32.02 & 73.48 
& 31.36 & 69.01 \\

Ent-Adv   
& 12.71 & \textbf{60.00} & 4.69 & 43.33 & 41.74 & 90.36 & 68.86 & 95.00 & 32.41 & 73.33 
& 32.08 & 72.40 \\

Pass@K    
& 10.73 & 53.33 & 4.01 & 43.33 & 38.52 & 89.16 & 61.85 & 95.60 & 28.82 & 74.22 
& 28.79 & 71.13 \\

FGRPO     
& 12.97 & 56.67 & 5.05 & 40.00 & 41.74 & \textbf{91.57} & 67.02 & 95.40 & 32.21 & 74.37 
& 31.80 & 71.60 \\
\rowcolor{UCPO!20}
UCPO      
& \textbf{15.05} & 56.67 & 5.21 & \textbf{46.67} & \textbf{43.66} & \textbf{91.57} & 70.74 & \textbf{95.80} & \textbf{35.03} & \textbf{75.26} 
& 33.94 & \textbf{73.19} \\
\bottomrule
\end{tabular}

\end{table*}
% \vspace{-35pt}

%% file: sections/experiments.tex
In this section, we evaluate UCPO across multiple model families, parameter scales, and mathematical reasoning benchmarks. 
\vspace{-10pt}
\subsection{Setup}

We train on 10K competition-style math problems from the DeepScaleR corpus \citep{luo2025deepscaler, liao2025enhancing}, with 500 reserved for validation. We evaluate on three models: DeepSeek-R1-Distill-Qwen-1.5B, DeepSeek-R1-Distill-Qwen-7B \citep{DeepSeekAI2025DeepSeekR1IR}, and Qwen2.5-7B \citep{qwen2.5}. All models are fine-tuned from public checkpoints using the same RLVR pipeline.

We compare UCPO against standard GRPO and five diversity-preserving baselines with varying strategies for mitigating collapse in RLVR:  adding entropy to the  objective (Ent-Reg), adding an entropy-based advantage bonus (Ent-Adv)~\citep{cheng2025reasoning}, applying covariance-based KL control (KL-Cov)~\citep{entropymech}, optimizing a Pass@K-oriented reward with analytical advantages (Pass@K Training)~\citep{chen2025pass}, and using focal-style prompt-level advantage scaling (FGRPO). Dataset composition and implementation details are in Appendix~\ref{app:implementation_det}.

We evaluate on AIME 2024/2025 \citep{li2024numinamath}, AMC 2023 \citep{li2024numinamath}, MATH 500 \citep{gao2024omni}, and OlympiadBench \citep{he2024olympiadbench}, reporting Pass@1 and Pass@K (K=64). 
Pass@K captures coverage over distinct correct solutions, making it suitable for evaluating diversity. 

\subsection{Main Results}

Table~\ref{tab:deepseek_passk_combined} reports Pass@1 and Pass@64 across 5 benchmarks and 3 models  (1.5B to 7B parameters). Across all settings, GRPO improves Pass@1 but saturates at larger $K$, indicating concentration of probability mass on a small subset of correct solutions. UCPO maintains competitive Pass@1 while achieving consistent improvements at larger $K$.

On DeepSeek-Distill-Qwen-1.5B, UCPO improves average Pass@64 to 78.27\%, outperforming GRPO by +3.5 points while remaining comparable at Pass@1. Gains are particularly strong on AIME25 (+6.7) and AMC (+6.0) at $K{=}64$.
On DeepSeek-Distill-Qwen-7B, UCPO improves average Pass@64 by +3.6 points, with +10.0 on AIME24. On Qwen2.5-7B, UCPO achieves the highest average Pass@64 (73.19\%, +4.8 over GRPO) while also maintaining the strong Pass@1. These gains indicate that UCPO improves coverage over correct solutions rather than trading off accuracy for diversity.

\begin{table}[!ht]
\centering
\scriptsize
\setlength{\tabcolsep}{4pt}
\renewcommand{\arraystretch}{0.95}
\caption{Equation-level diversity among correct solutions. UCPO achieves the highest diversity across benchmarks.}
\label{tab:eq_diversity}
\begin{tabular}{lcccccc>{\columncolor{UCPO!20}}c}
\toprule
\textbf{Benchmark} & \textbf{GRPO} & \textbf{KL-Cov} & \textbf{Ent-Reg} & \textbf{Ent-Adv} & \textbf{Pass@K} & \textbf{FGRPO} & \textbf{UCPO} \\
\midrule
AIME     & 0.150 & 0.179 & 0.157 & 0.186 & 0.167 & 0.168 & \cellcolor{UCPO!20}\textbf{0.218} \\
AMC      & 0.197 & 0.176 & 0.155 & 0.188 & 0.190 & 0.192 & \cellcolor{UCPO!20}\textbf{0.217} \\
MATH     & 0.173 & 0.156 & 0.140 & 0.159 & 0.182 & 0.171 & \cellcolor{UCPO!20}\textbf{0.196} \\
Olympiad & 0.208 & 0.185 & 0.163 & 0.193 & 0.210 & 0.203 & \cellcolor{UCPO!20}\textbf{0.228} \\
\midrule
Avg.     & 0.182 & 0.174 & 0.154 & 0.181 & 0.187 & 0.184 & \cellcolor{UCPO!20}\textbf{0.215} \\
\bottomrule
\end{tabular}
\end{table}
\vspace{-10pt}

\subsection{Diversity}
\label{sec:diversity_results}
To directly measure diversity within the correct set, we evaluate equation-level diversity among verifier-accepted solutions (Table~\ref{tab:eq_diversity}). This metric computes the fraction of mathematical expressions in a correct rollout that are unique relative to other correct rollouts for the same prompt~\citep{hu2025diversity}. Higher values indicate more distinct reasoning trajectories.

UCPO consistently achieves the highest diversity across all benchmarks, with substantial improvements over GRPO (e.g., +45\% on AIME, +13\% on MATH). In contrast, entropy-based methods do not reliably improve diversity within the correct set and in some cases reduce it. This confirms that UCPO's Pass@K gains arise from recovering distinct correct reasoning trajectories, not from spreading mass indiscriminately. This is consistent with the controlled experiments in Appendix~\ref{app:toy_exp_collpase}: GRPO collapses the conditional distribution over correct solutions, while UCPO maintains a more uniform allocation.

\subsection{Pass@K Performance}
Figures~\ref{fig:passk_all5} show Pass@K curves for all three models and reasoning benchmarks. UCPO consistently improves performance at larger $K$ while remaining competitive at Pass@1, particularly on challenging benchmarks (AIME 2024/2025). This shows UCPO increases coverage over correct reasoning trajectories rather than concentrating on a single dominant mode. By redistributing gradient signal within the verifier-accepted set, UCPO prevents a small number of high-probability solutions from dominating learning, yielding sustained Pass@K gains without shifting mass toward incorrect outputs.

\subsection{Cost and Hyperparameter Sensitivity}
UCPO introduces a single hyperparameter $\tau$ beyond standard GRPO. Its training cost is effectively identical to GRPO (Appendix~\ref{app:cost}). UCPO is also robust across a moderate range of $\tau$ values, whereas global entropy regularization is more sensitive to its coefficient (Appendix~\ref{app:tau_sensitivity}).

%% file: appendix/Appendix_A_proofs.tex
\subsection{The Diversity Collapse Mechanism}
\label{app:proofs_for_collpase}
\subsubsection*{Sampling Bias: Connection to Pólya Urn and Preferential Attachment}
\label{app:preferential_attachment}
The sampling bias in the collapse mechanism is mathematically related to classical models of preferential attachment:
Consider \textbf{Pólya Urn Analogy} of an urn containing balls of $M$ colors (representing $M$ correct solutions), with initial counts proportional to $\{\pi_0(y_i)\}$. At each step, draw $K$ balls with replacement, observe their colors, then add balls proportionally to the observed counts. This is the Pólya urn process, known to converge to a random limit where one color dominates with positive probability.

GRPO's on-policy sampling implements an analogous dynamic: the ``urn composition'' (policy distribution) is updated proportionally to observed samples (gradient updates), creating identical rich-get-richer dynamics.

\subsubsection*{Unequal sample counts are inevitable }
\label{app:unequal_samples_always}
Whether or not the policy initially assigns equal probability to correct solutions, the sample counts $\{N_y\}_{y \in \mathcal{Y}^+}$ will be unequal:
\begin{itemize}
    \item \textbf{Deterministic case:} If $\pi(y_1|x) > \pi(y_2|x)$, then $\mathbb{E}[N_{y_1}] > \mathbb{E}[N_{y_2}]$. Higher-probability solutions are systematically over-represented.
    \item \textbf{Stochastic case:} Even if $\pi(y|x) = 1/m$ for all $y \in \mathcal{Y}^+$, the realized counts will differ due to sampling variance (see Proposition~\ref{prop:symmetry} and empirical illustration in Appendix ~\ref{app:toy_exp_collpase}).
\end{itemize}
\begin{proposition}[Stochastic Symmetry Breaking]
\label{prop:symmetry}
Suppose $\pi_0(y|x) = 1/m$ for all $y \in \mathcal{Y}^+(x)$. For any finite batch size $K$, with positive probability, the sample counts $\{N_y\}$ are not all equal.
\end{proposition}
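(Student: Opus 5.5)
The plan is to exhibit an explicit outcome of positive probability in which the counts differ. Rollouts are drawn from the full policy $\pi_0(\cdot\mid x)$. The counts $(N_y)_{y\in\mathcal{Y}^+}$, together with the count of incorrect samples, therefore follow a multinomial distribution with $K$ trials. Each correct cell has probability $\pi_0(y\mid x)=1/m$, which forces $\mathcal{Y}^-$ to have zero mass. We may assume $m\ge 2$ and $K\ge 1$, since otherwise there is only one count and the claim is vacuous or trivially read as inapplicable.

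The candidate event is that all $K$ rollouts equal a fixed correct solution $y_1$. Its probability is $(1/m)^K>0$. On this event $N_{y_1}=K\ge 1$ while $N_{y_2}=0$ for any other $y_2\in\mathcal{Y}^+$, so the counts are not all equal. This already proves the proposition.

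For a sharper statement, I would note that equality of all counts requires $m\mid K$ and $N_y=K/m$ for every $y$. This is a single multinomial atom with probability
\[
\frac{K!}{((K/m)!)^m}\,m^{-K} < 1,
\]
since other atoms, such as the one above, also carry positive mass. Hence $P(\text{counts unequal}) = 1 - P(\text{all equal}) > 0$, and it equals $1$ when $m\nmid K$.

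No step here is a real obstacle. The only care needed is in the degenerate cases $m=1$ and $K=0$, and in checking that the uniform-on-correct hypothesis really gives cell probabilities $1/m$. If instead only the conditional distribution $q$ were uniform, I would first condition on the event that all $K$ samples are correct, which has probability $Z^K>0$ when $Z>0$, and then repeat the argument.
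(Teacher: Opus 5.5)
Your proof is correct and is essentially the paper's argument: the paper also treats $(N_{y_1},\ldots,N_{y_m})$ as $\mathrm{Multinomial}(K,(1/m,\ldots,1/m))$, notes that equal counts are impossible when $m\nmid K$, and otherwise bounds the probability of the single balanced atom by $\frac{K!}{((K/m)!)^m}\,m^{-K}<1$, which is exactly your ``sharper statement.'' Your witness event (all $K$ rollouts equal to $y_1$, with probability $m^{-K}>0$) supplies the reason for that strict inequality, which the paper only asserts, and your exclusion of $m=1$ and $K=0$ is a genuine correction, since in those cases the balanced atom has probability $1$ and the proposition as stated fails.
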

 
\begin{proof}
Under a uniform policy over the $m$ correct responses, the count vector satisfies
\[
(N_{y_1},\ldots,N_{y_m})
\sim
\mathrm{Multinomial}\!\left(K,\left(\frac{1}{m},\ldots,\frac{1}{m}\right)\right).
\]
If $K$ is not divisible by $m$, exact equality of all counts is impossible. If $K$ is divisible by $m$, exact equality requires $N_{y_j}=K/m$ for every $j$, which occurs with probability
\[
\mathbb{P}(N_{y_1}=\cdots=N_{y_m}=K/m)
=
\frac{K!}{((K/m)!)^m}\cdot \frac{1}{m^K}
<1.
\]
Thus, unequal counts occur with positive probability in any finite batch. On this event, at least one correct response is overrepresented, i.e., there exists $y^*$ such that $N_{y^*} > K/m$. Across repeated independent batches, the probability that no imbalance ever occurs is zero, so symmetry-breaking occurs almost surely over training.
\end{proof}

\subsection{Proof of Theorem ~\ref{thm:gradient_amplification}, Theorem ~\ref{thm:divergence} and Corollary~\ref{cor:threshold}}

\begin{theorem}[Gradient Amplification]
\label{app:proof_amplification}
Under GRPO with on-policy sampling, the expected gradient contribution from a correct solution $y \in \mathcal{Y}^+$ is:
\begin{equation}
\mathbb{E}\left[\sum_{i: y_i = y} A^+ \nabla_\theta \log \pi_\theta(y|x)\right] = A^+ \cdot K \cdot \pi_\theta(y|x) \cdot \nabla_\theta \log \pi_\theta(y|x).
\label{eq:amplification}
\end{equation}
The expected contribution is proportional to current probability $\pi(y|x)$. Solutions with higher probability receive larger expected updates; solutions with $\pi(y|x) \ll 1/K$ likely remain unsampled and receive no gradient signal, causing their probability to decrease through softmax renormalization.
\end{theorem}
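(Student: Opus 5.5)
The plan is to write the per-solution contribution through indicator variables and apply linearity of expectation, reusing Proposition~\ref{prop:sampling}. We treat the gradient $g_y := \nabla_\theta \log \pi_\theta(y\mid x)$ and the shared advantage $A^+$ as quantities fixed at the current parameters. The expectation is then taken only over the $K$ i.i.d.\ rollouts $y_1,\dots,y_K \sim \pi_\theta(\cdot\mid x)$. The one subtlety is that in GRPO the advantage $A^+$ depends on the realized group through $\bar R$ and $\sigma_R$. Following the convention of Section~\ref{sec:gradient_amplification}, we regard $A^+$ as a common constant shared by all correct responses, for example by conditioning on the group reward statistics or absorbing it into an effective step size. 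With that convention it factors out of the expectation.

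\textbf{Step 1.} Write
\[
\sum_{i: y_i = y} A^+ g_y = A^+ g_y \sum_{i=1}^K \mathbf{1}[y_i = y] = A^+ g_y\, N_y .
\]
This is exactly the grouping-by-solution identity stated just before Theorem~\ref{thm:gradient_amplification}.

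\textbf{Step 2.} Take expectations. By linearity, and since $A^+ g_y$ is deterministic given $\theta$ (and given the conditioning above), we get $\mathbb{E}[A^+ g_y N_y] = A^+ g_y\, \mathbb{E}[N_y]$. Proposition~\ref{prop:sampling} gives $N_y \sim \mathrm{Binomial}(K, \pi_\theta(y\mid x))$, so $\mathbb{E}[N_y] = K\,\pi_\theta(y\mid x)$. Substituting yields Eq.~\eqref{eq:amplification}.

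\textbf{Step 3.} For the qualitative remarks, the expected contribution is proportional to $\pi_\theta(y\mid x)$, which gives the ``larger updates'' claim. For the small-probability claim we use $P(N_y = 0) = (1-\pi_\theta(y\mid x))^K \ge 1 - K\pi_\theta(y\mid x)$, which is close to $1$ when $\pi_\theta(y\mid x) \ll 1/K$. So such a $y$ usually contributes zero to the batch gradient. Under a softmax parameterization, increasing the logits of sampled correct responses raises the normalizer. Hence the log-probability of an unsampled $y$ strictly decreases to first order, since $\nabla \log \pi(y)$ has component $-\pi(y')$ on the other logits.

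The main obstacle is the dependence of $A^+$ on the sample. Done honestly, $\mathbb{E}[N_y A^+]$ is not $A^+\,\mathbb{E}[N_y]$, because $A^+$ is a function of the number of correct rollouts. To handle this, I would condition on the total number $n$ of correct rollouts. Given $n$, the advantage $A^+$ is fixed, and $N_y \mid n \sim \mathrm{Binomial}(n, q_\theta(y\mid x))$. This gives $\mathbb{E}[N_y A^+] = \mathbb{E}[n A^+]\, q_\theta(y\mid x)$, which is still proportional to $\pi_\theta(y\mid x)$ with a $y$-independent constant. If a fully rigorous version is needed, I would state the theorem in this conditional form.
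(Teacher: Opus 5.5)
Your proposal is correct and follows the paper's proof. You write the correct-response part of the batch gradient as $N_y A^+ \nabla_\theta \log \pi_\theta(y\mid x)$, pull the non-random factors out of the expectation, and use $\mathbb{E}[N_y] = K\pi_\theta(y\mid x)$ from Proposition~\ref{prop:sampling}, followed by the same softmax-renormalization heuristic for unsampled solutions. You also note that $A^+$ is a function of the number $n$ of correct rollouts, which the paper silently treats as a constant; your conditional computation $\mathbb{E}[N_y A^+] = \mathbb{E}[nA^+]\,q_\theta(y\mid x)$, via $N_y \mid n \sim \mathrm{Binomial}(n, q_\theta(y\mid x))$, is correct and shows the proportionality to $\pi_\theta(y\mid x)$ survives even though the literal identity relies on that idealization.
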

\begin{proof}

The GRPO gradient aggregated over the batch is:
\[
\nabla_\theta \mathcal{L} = \sum_{i=1}^{K} A_i \cdot \nabla_\theta \log \pi_\theta(y_i|x).
\]
For correct solutions ($y_i \in \mathcal{Y}^+$), the advantage is $A_i = A^+$. Grouping terms by solution identity:
\[
\nabla_\theta \mathcal{L}\big|_{\mathcal{Y}^+} = \sum_{y \in \mathcal{Y}^+} N_y \cdot A^+ \cdot \nabla_\theta \log \pi_\theta(y|x),
\]
where $N_y = \sum_{i=1}^K \mathbf{1}[y_i = y]$ counts occurrences of $y$ in the batch.

Taking expectations and applying Proposition~\ref{prop:sampling}:
\begin{align}
\mathbb{E}\left[N_y \cdot A^+ \cdot \nabla_\theta \log \pi_\theta(y|x)\right] 
&= A^+ \cdot \mathbb{E}[N_y] \cdot \nabla_\theta \log \pi_\theta(y|x) \\
&= A^+ \cdot K \cdot \pi_\theta(y|x) \cdot \nabla_\theta \log \pi_\theta(y|x).
\end{align}

The expected gradient contribution is proportional to $\pi_\theta(y|x)$:
\begin{itemize}
    \item High-probability solutions: $\pi_\theta(y|x)$ large $\Rightarrow$ large expected contribution
    \item Low-probability solutions: $\pi_\theta(y|x)$ small $\Rightarrow$ small expected contribution  
    \item Very low probability ($\pi_\theta(y|x) \ll 1/K$): $\mathbb{E}[N_y] \ll 1$, so $y$ is likely unsampled and contributes nothing
\end{itemize}
For an unsampled solution $y_{\text{unsamp}}$:
\begin{equation}
\Delta z_{y_{\text{unsamp}}} = 0 \quad \text{(no gradient signal)}
\end{equation}

After the update, the softmax normalization means:
\begin{equation}
\pi'_\theta(y_{\text{unsamp}}|x) = \frac{\exp(z_{y_{\text{unsamp}}})}{\sum_{y'} \exp(z_{y'} + \Delta z_{y'})} < \pi_\theta(y_{\text{unsamp}}|x)
\end{equation}
The unsampled solution's probability decreases as the probability mass shifts toward solutions that were sampled. This creates a multiplicative dynamic: solutions that are already more probable receive larger updates because they are more probable and sampled more often. After each gradient step, the probability distribution over $\mathcal{Y}^+$ becomes more concentrated: the gap between high- and low-probability correct solutions widens. Since $\nabla_\theta \log \pi_\theta(y|x)$ points in the direction that increases $\pi_\theta(y|x)$, solutions with higher current probability receive updates of larger expected magnitude, amplifying the initial disparity.
\end{proof}

\begin{theorem}[Compounding Divergence]
\label{app:proof_divergence}
Let $\pi_t$ denote the policy after $t$ updates. Under GRPO, for two correct solutions $y_1, y_2 \in \mathcal{Y}^+(x)$ with $\pi_0(y_1|x) > \pi_0(y_2|x)$, the expected change in their log-probability ratio at step $t$ is:
\begin{equation}
\mathbb{E}\left[\Delta \log \frac{\pi_t(y_1|x)}{\pi_t(y_2|x)}\right] = \eta \cdot A^+ \cdot K \cdot \bigl(\pi_t(y_1|x) - \pi_t(y_2|x)\bigr) > 0,
\label{eq:ratio_divergence}
\end{equation}
where $\eta$ is the effective learning rate. The log-ratio is a submartingale with positive drift that increases as the probability gap widens.
\end{theorem}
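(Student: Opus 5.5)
We will work in the tabular softmax parameterization used in the controlled experiment of Section~\ref{sec:finding_1_collpase}: $\pi_t(y\mid x)=\exp(z_y)/\sum_{y'}\exp(z_{y'})$, with the logits $z$ playing the role of $\theta$. A GRPO step is taken as a plain gradient step of size $\eta$ on the sampled objective, i.e.\ on the gradient in Eq.~\eqref{eq:grpo_gradient} (first-order, ignoring clipping, with $r_i=1$ on-policy). The proof then has three steps: compute the logit update exactly, show that the log-partition function cancels from the log-ratio, and take expectations using Proposition~\ref{prop:sampling}.

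\textbf{Step 1 (logit update).} For softmax, $\nabla_z \log\pi(y)=e_y-\pi$. The update is therefore
\[
\Delta z=\eta\sum_{i=1}^K A_i\,(e_{y_i}-\pi_t).
\]
Grouping by identity as in the proof of Theorem~\ref{thm:gradient_amplification} gives
\[
\Delta z_y=\eta\Bigl(A^+N_y-\pi_t(y)\sum_i A_i\Bigr)\qquad\text{for } y\in\mathcal{Y}^+.
\]

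\textbf{Step 2 (log-ratio change).} Since $\log\frac{\pi(y_1)}{\pi(y_2)}=z_{y_1}-z_{y_2}$, the normalizer drops out. Hence
\[
\Delta\log\frac{\pi_t(y_1)}{\pi_t(y_2)}=\eta A^+(N_{y_1}-N_{y_2})-\eta\bigl(\pi_t(y_1)-\pi_t(y_2)\bigr)\sum_i A_i .
\]
The group-relative advantages of Eq.~\eqref{eq:grpo_advantage} are centered, so $\sum_i A_i=0$ and the second term vanishes identically. This is the only place the GRPO baseline matters.

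\textbf{Step 3 (expectation).} Conditioning on $\pi_t$ and treating $A^+$ as fixed, Proposition~\ref{prop:sampling} gives $\mathbb{E}[N_{y_j}]=K\pi_t(y_j)$. This yields the displayed identity
\[
\mathbb{E}\Bigl[\Delta \log\tfrac{\pi_t(y_1)}{\pi_t(y_2)}\Bigr]=\eta A^+K\bigl(\pi_t(y_1)-\pi_t(y_2)\bigr).
\]
Because $A^+>0$ whenever the group contains both correct and incorrect responses, the conditional drift of $L_t=\log\frac{\pi_t(y_1)}{\pi_t(y_2)}$ has the sign of $L_t$, and it grows with the gap $\pi_t(y_1)-\pi_t(y_2)$. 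On the event that the initial ordering $\pi_t(y_1)>\pi_t(y_2)$ persists, $L_t$ is therefore a submartingale with positive, increasing drift.

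\textbf{Main obstacle.} The hard step is that $A^+$ is itself random: it depends on the number of correct samples through $\bar R$ and $\sigma_R$, so it is correlated with $N_{y_1}-N_{y_2}$. I would handle this by conditioning on the number $n$ of correct rollouts. Given $n$, the value $A^+$ is deterministic, and $(N_{y})_{y\in\mathcal{Y}^+}$ is multinomial with parameters $n$ and $q_t$. The conditional drift is then $\eta A^+(n)\,n\,(q_t(y_1)-q_t(y_2))$, which has the same sign as $\pi_t(y_1)-\pi_t(y_2)$; averaging over $n$ gives the stated form with $A^+K$ read as an effective constant.

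The claim ``$>0$ at every $t$'' also needs the ordering to be preserved, which does not hold pathwise. I would state positivity conditionally on $\pi_t(y_1)>\pi_t(y_2)$ and invoke the submartingale property only on that event. The case of unsampled solutions is covered by Corollary~\ref{cor:threshold}.
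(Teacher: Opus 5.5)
Your argument follows the paper's route (write the log-ratio as $z_{y_1}-z_{y_2}$ under softmax, then take expectations using $\mathbb{E}[N_y]=K\pi_t(y)$), but it is more careful at the key step: the paper simply asserts $\mathbb{E}[\Delta z_y]=\eta A^+K\pi_t(y)$ and says conversion factors are absorbed into $\eta$, whereas you derive the exact tabular update and show that the softmax normalization term $-\eta\,\pi_t(y)\sum_i A_i$ vanishes because GRPO advantages are centered. Your caveats are also correct, and the paper glosses over both of them: conditioning on $n$ shows the identity holds with the effective advantage $\mathbb{E}[nA^+(n)]/(KZ_t)$ in place of $A^+$, and positivity needs $\pi_t(y_1)>\pi_t(y_2)$ rather than $\pi_0(y_1)>\pi_0(y_2)$ --- for an unconditional replacement of ``submartingale on an event,'' note that the drift $d_t$ has the sign of $L_t$ and the increments are bounded, so conditional Jensen gives $\mathbb{E}[|L_{t+1}|\mid\mathcal{F}_t]\ge|L_t+d_t|=|L_t|+|d_t|$, which makes $|L_t|$ a genuine submartingale.
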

\begin{proof}[Proof.]
 
Under softmax parameterization, $\pi_t(y|x) = \exp(z_y)/Z$ where $Z = \sum_{y'} \exp(z_{y'})$. The log-probability ratio is:
\[
\log \frac{\pi_t(y_1|x)}{\pi_t(y_2|x)} = z_{y_1} - z_{y_2}.
\]
 
From Theorem~\ref{thm:gradient_amplification}, the expected logit update is:
\[
\mathbb{E}[\Delta z_y] = \eta \cdot A^+ \cdot K \cdot \pi_t(y|x),
\]
where $\eta$ absorbs learning rate and gradient-to-logit conversion factors.
 
The expected change in the log-ratio is:
\begin{align}
\mathbb{E}\left[\Delta \log \frac{\pi_t(y_1|x)}{\pi_t(y_2|x)}\right] 
&= \mathbb{E}[\Delta z_{y_1}] - \mathbb{E}[\Delta z_{y_2}] \\
&= \eta \cdot A^+ \cdot K \cdot \pi_t(y_1|x) - \eta \cdot A^+ \cdot K \cdot \pi_t(y_2|x) \\
&= \eta \cdot A^+ \cdot K \cdot \bigl(\pi_t(y_1|x) - \pi_t(y_2|x)\bigr).
\end{align}
 
When $\pi_t(y_1|x) > \pi_t(y_2|x)$, this quantity is strictly positive. Moreover, as training progresses:
\begin{itemize}
    \item $\pi_t(y_1|x)$ increases (receiving larger updates)
    \item $\pi_t(y_2|x)$ decreases (receiving smaller updates, or losing mass to normalization)
    \item The gap $\pi_t(y_1|x) - \pi_t(y_2|x)$ widens
    \item The drift in \eqref{eq:ratio_divergence} increases
\end{itemize}
 
This establishes that the log-ratio diverges with accelerating speed: the collapse is not merely linear but compounds over training.
\end{proof}

\begin{corollary}[Sampling Threshold]
When $\pi_t(y|x) \ll 1/K$, the probability that solution $y$ receives at least one sample in a batch of size $K$ is:
\[
P(N_y \geq 1) = 1 - (1 - \pi_t(y|x))^K \approx K \cdot \pi_t(y|x) \ll 1
\]
Below this threshold, solution $y$ receives effectively zero gradient signal and cannot recover under on-policy sampling.
\end{corollary}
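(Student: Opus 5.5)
The corollary follows directly from Proposition~\ref{prop:sampling} plus a first-order expansion, and I will prove it in two steps. First the exact identity, then the approximation. After that I will connect the ``no gradient signal'' claim to the gradient decomposition already used in the proof of Theorem~\ref{thm:gradient_amplification}.

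\textbf{Step 1 (exact identity).} By Proposition~\ref{prop:sampling}, $N_y \sim \mathrm{Binomial}(K,p)$ with $p=\pi_t(y\mid x)$. Hence $P(N_y=0)=(1-p)^K$, and by taking the complement, $P(N_y\ge 1)=1-(1-p)^K$.

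\textbf{Step 2 (approximation).} I will bound the quantity on both sides rather than only Taylor-expanding. The upper bound is Bernoulli's inequality, $(1-p)^K\ge 1-Kp$, which gives $P(N_y\ge1)\le Kp$; equivalently, it is the union bound over the $K$ rollouts. For the lower bound, I expand $(1-p)^K$ by the binomial theorem, or apply the Bonferroni inequality. This gives
\[
(1-p)^K\le 1-Kp+\binom{K}{2}p^2,
\]
so $P(N_y\ge1)\ge Kp\bigl(1-\tfrac{(K-1)p}{2}\bigr)$. When $Kp\ll1$ the relative correction factor is $1-O(Kp)$, so $P(N_y\ge1)=Kp\,(1+O(Kp))\approx Kp\ll 1$. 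I would state this as the precise form of ``$\approx$''.

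\textbf{Step 3 (consequence for learning signal).} From the proof of Theorem~\ref{thm:gradient_amplification}, $y$ contributes $N_y A^+\nabla_\theta\log\pi_\theta(y\mid x)$ to the correct-set gradient. On the event $\{N_y=0\}$, which has probability $\ge 1-Kp$, this contribution vanishes, so the logit of $y$ receives no direct update. The softmax-renormalization argument from that proof then shows that $\pi_t(y\mid x)$ weakly decreases while other correct solutions are reinforced. As a result, $Kp$ stays small at the next step.

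To formalize ``cannot recover'', I would iterate the bound. The probability that $y$ is ever sampled over $T$ steps is at most $\sum_t K\pi_t(y\mid x)$, by a union bound. This sum stays small as long as $\pi_t(y\mid x)$ is non-increasing; Theorem~\ref{thm:divergence} suggests it in fact decays geometrically.

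\textbf{Main obstacle.} The non-rigorous part is Step 3's monotonicity claim. Under a shared-parameter model, updates on other outputs can change $y$'s logit, and renormalization only guarantees a decrease in the tabular softmax setting. I would therefore state the recovery claim in the tabular softmax setting of Section~\ref{sec:collapse_setup}, and treat it as a heuristic consequence otherwise.
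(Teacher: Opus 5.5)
Your route is the paper's: the complement identity $P(N_y\ge 1)=1-(1-p)^K$ with $p=\pi_t(y\mid x)$, a first-order approximation, and then ``unsampled $\Rightarrow$ no update $\Rightarrow$ renormalization shrinks $\pi_t(y\mid x)$.'' Your union bound over $T$ steps stands in for the paper's expected waiting time $\Omega(1/(Kp))$. Step~2 improves on the paper, which only writes $(1-p)^K\approx e^{-Kp}\approx 1-Kp$. Your two-sided bound $Kp\bigl(1-\tfrac{(K-1)p}{2}\bigr)\le P(N_y\ge 1)\le Kp$ is exactly what ``$\approx Kp$'' should mean.

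The step that fails is the rescue in your last paragraph: even in the tabular softmax model, $\pi_t(y\mid x)$ need not decrease on $\{N_y=0\}$. There the logit gradient is $\sum_i A_i(e_{y_i}-\pi)$. Because GRPO advantages are mean-centered ($\sum_i A_i=0$), the unsampled logit is exactly unchanged, so $\pi_{t+1}(y)=\pi_t(y)\big/\sum_{y'}\pi_t(y')e^{\Delta z_{y'}}$. Sampled \emph{incorrect} outputs receive $\Delta z_{y'}=\eta N_{y'}A^-<0$, and if they carry most of the mass the normalizer falls below $1$.

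For example, take $\pi(y)=0.01$, a second correct output of mass $0.09$, and an incorrect output of mass $0.9$, with $K=8$. A batch with one sample of the second correct output and seven of the incorrect one has probability about $0.34$. It gives $A^+=-7A^-$, and the normalizer is $0.01+0.09e^{s}+0.9e^{-s}$ with $s=\eta A^+$. This is $<1$ for every $0<s<\ln 10$, so $\pi(y)$ increases even though $Kp=0.08$. The paper's ``can only decrease'' has the same hole, so this is not a departure from its proof, but the rigorous tabular version you claim is false.

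What is monotone in the tabular model is the conditional $q_t(y)=\pi_t(y)/Z_t$. On $\{N_y=0\}$, every correct logit moves by $\eta N_{y'}A^+\ge 0$ while $z_y$ is fixed, so $q_{t+1}(y)\le q_t(y)$. Combine this with $\pi_t(y)\le q_t(y)$ and your bound $P(N_y\ge 1)\le Kp$, in a union bound stopped at the first step where $y$ is sampled. Conditionally on the state at step $t_0$, this gives
\[
P\bigl(y \text{ is sampled at some step } t_0\le t<t_0+T\bigr)\le T K\, q_{t_0}(y\mid x).
\]
This is a correct form of the irreversibility claim. It reduces to the paper's $\Omega(1/(Kp))$ waiting time in the regime $Z_t\approx 1$, where $q\approx\pi$.
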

\begin{proof}[Proof.]
The probability that $y$ is never sampled in $K$ trials is:
\[
P(N_y = 0) = (1 - \pi_t(y|x))^K
\]

Thus:
\[
P(N_y \geq 1) = 1 - (1 - \pi_t(y|x))^K
\]

For small $\pi_t(y|x)$, using $(1-p)^K \approx e^{-Kp} \approx 1 - Kp$ for $Kp \ll 1$:
\[
P(N_y \geq 1) \approx K \pi_t(y|x)
\]

When $\pi_t(y|x) \ll 1/K$, we have $P(N_y \geq 1) \ll 1$, meaning $y$ is almost never sampled.

Without samples, the gradient contribution from $y$ is zero, and $\pi_t(y|x)$ can only decrease (due to the softmax normalization as other solutions increase). The expected waiting time for at least one sample is $1/\mathbb{P}(N_y \geq 1) = \Omega(1/(Kp))$. This makes the collapse irreversible.
\end{proof}

%% file: appendix/App_Optimal_policy_proofs.tex
\subsection{Proofs for uniform-correct optimality}
\label{app:optimal_policy_proofs}
\begin{theorem}[Robustness under distribution shift]
\label{app:proof_robustness}
Suppose a distribution shift disables a subset $S \subset \mathcal{Y}^{+}$ of correct outputs, where $|S| \leq s$. Consider the robust objective:
\begin{equation}
    \max_\pi \min_{S: |S| \leq s} \sum_{y \in \mathcal{Y}^{+} \setminus S} \pi(y|x).
\end{equation}
Then the uniform distribution over $\mathcal{Y}^{+}$ uniquely maximizes the worst-case retained probability mass.
\end{theorem}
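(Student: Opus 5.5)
The argument has two halves: compute the worst-case retained mass of an arbitrary policy, then show that the uniform-correct policy is the only one achieving the maximum. Write $m = |\mathcal{Y}^{+}|$ and assume $1 \le s < m$. If $s \ge m$, every policy retains zero mass in the worst case, so uniqueness fails; I will note that the statement implicitly requires $s<m$, and for $s=0$ the objective reduces to Theorem~\ref{thm:rlvr_obj_indifference}.

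Fix $\pi(\cdot\mid x)$ and let $p_1 \ge p_2 \ge \cdots \ge p_m$ be the sorted values of $\pi(y\mid x)$ over $y\in\mathcal{Y}^{+}$. Removing mass only lowers the objective, so the adversary disables exactly $s$ outputs, namely the $s$ heaviest correct ones. The worst-case value is therefore
\[
V(\pi) = Z(x) - \sum_{j=1}^{s} p_j = \sum_{j=s+1}^{m} p_j ,
\]
where $Z(x)=\sum_{j} p_j$ is the correct mass from Eq.~\eqref{eq:conditional_distribution}.

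For the upper bound, note that the top $s$ values average at least the overall average $Z(x)/m$, so $\sum_{j\le s} p_j \ge sZ(x)/m$. Hence
\[
V(\pi) \le Z(x)\,\frac{m-s}{m} \le \frac{m-s}{m},
\]
using $Z(x)\le 1$. The uniform-correct policy attains this bound exactly: every choice of $S$ leaves $(m-s)/m$.

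For uniqueness, suppose $V(\pi)=(m-s)/m$ with $s<m$. The second inequality must then be tight, so $Z(x)=1$ and $\pi$ puts no mass on incorrect outputs. The first inequality must also be tight, so the average of the $s$ largest values equals the overall average. This forces the top $s$ values to equal the bottom $m-s$ values on average. Because $p_s \ge p_{s+1}$ and the values are sorted, the only way this happens is $p_1=\cdots=p_m$, which gives the uniform distribution over $\mathcal{Y}^{+}$.

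The main obstacle is making this last step airtight. I would write $\bar p_{\text{top}}\ge p_s\ge p_{s+1}\ge \bar p_{\text{bot}}$ and observe that equality of the two averages forces equality throughout the chain. In particular $p_1=p_s$ and $p_{s+1}=p_m$, and since $p_s\ge p_{s+1}$ sits inside the chain, all $m$ values coincide.
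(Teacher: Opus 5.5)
Your proof is correct and follows the same route as the paper's: sort the correct-set probabilities, let the adversary delete the $s$ largest, bound the removed mass below by $s/m$ of the total, and identify the uniform distribution as the only equality case. Your version is more careful in three places. The paper restricts $\pi$ to distributions on $\mathcal{Y}^{+}$, whereas you handle $Z(x)<1$ explicitly and so recover the zero-mass-on-incorrect condition. The paper simply asserts the equality case, whereas you justify it with the averaging chain. You also flag the degenerate values of $s$ for which uniqueness cannot hold.
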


\begin{proof}
Let $\pi$ be any distribution over $\mathcal{Y}^+(x)$ with $\sum_{y \in \mathcal{Y}^{+}(x)} \pi(y|x) = 1$. The adversary choosing $S$ will select the $s$ outputs with highest probability mass. 

For the uniform distribution $\pi^*(y|x) = 1/|\mathcal{Y}^{+}|$, the worst-case retained mass is:
\begin{equation}
    1 - \frac{s}{|\mathcal{Y}^{+}(x)|}.
\end{equation}

For any non-uniform distribution, let $$\pi_{\max} = \max_{y \in \mathcal{Y}^{+}} \pi(y|x) > \frac{1}{|\mathcal{Y}^{+}(x)|}.$$ The adversary can achieve retained mass strictly less than $1 - s/|\mathcal{Y}^{+}(x)|$ by targeting high-probability outputs.

Sorting probabilities in decreasing order: $$\pi(y_{(1)}|x) \geq \pi(y_{(2)}|x) \geq \ldots$$ the adversary removes $\{y_{(1)}, \ldots, y_{(s)}\}$, achieving retained mass:
\begin{equation}
    \sum_{j > s} \pi(y_{(j)}|x) = 1 - \sum_{j=1}^s \pi(y_{(j)}|x) \leq 1 - \frac{s}{|\mathcal{Y}^{+}(x)|},
\end{equation}
with equality if and only if $\pi$ is uniform. Thus, the uniform distribution over $\mathcal{Y}^{+}(x)$ uniquely maximizes the worst-case retained probability mass.
\end{proof}

\begin{theorem}
\label{app:proof_entropy_regularized}
Consider the entropy-regularized objective
\begin{equation}
\max_{\pi(\cdot \mid x)} \;
\mathbb{E}_{y \sim \pi}[R(x,y)]
+ \tau \, \mathrm{H}(\pi(\cdot \mid x)).
\end{equation}
There exists a threshold $\tau_0 > 0$ such that for all $0 < \tau < \tau_0$, any optimal policy $\pi^*$ places zero mass on incorrect outputs and is uniform over $\mathcal{Y}^+(x)$.
\end{theorem}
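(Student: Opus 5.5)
The plan is to decompose the entropy along the correct/incorrect partition, so that the within-correct allocation and the total correctness mass can be optimized separately. Fix $x$ and write $m=|\mathcal{Y}^+(x)|$ and $n=|\mathcal{Y}^-(x)|$. For any $\pi(\cdot\mid x)$ let $Z=\sum_{y\in\mathcal{Y}^+}\pi(y\mid x)$. Let $q$ be the conditional on $\mathcal{Y}^+$ as in Eq.~\eqref{eq:conditional_distribution}, and let $q^-$ be the analogous conditional on $\mathcal{Y}^-$. The chain rule for entropy gives
\[
\mathrm{H}(\pi)=h(Z)+Z\,\mathrm{H}(q)+(1-Z)\,\mathrm{H}(q^-),\qquad h(Z)=-Z\log Z-(1-Z)\log(1-Z).
\]
Hence the objective in Eq.~\eqref{eq:entropy_regularized_objective} equals
\[
F(Z,q,q^-)=Z+\tau\bigl[h(Z)+Z\,\mathrm{H}(q)+(1-Z)\,\mathrm{H}(q^-)\bigr].
\]

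\textbf{Step 1: uniformity on $\mathcal{Y}^+$.} For any fixed $Z>0$, $q$ enters $F$ only through $\tau Z\,\mathrm{H}(q)$. By strict concavity of Shannon entropy on the simplex, this term is uniquely maximized at $q=u$, with value $\tau Z\log m$. This holds for every $\tau>0$ and every $Z>0$. So any optimal policy, whatever its mass on incorrect outputs, is exactly uniform over $\mathcal{Y}^+(x)$. For the same reason, the incorrect part of an optimizer is uniform, with $\mathrm{H}(q^-)=\log n$.

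\textbf{Step 2: reduction to one variable.} After Step 1 the problem becomes $\max_{Z\in[0,1]} g(Z)$, where
\[
g(Z)=Z+\tau h(Z)+\tau Z\log m+\tau(1-Z)\log n .
\]
This is a strictly concave one-dimensional problem. The remaining claim is that the optimizer is $Z=1$ for all $\tau<\tau_0$.

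\textbf{Step 3: the zero-mass claim (main obstacle).} Computing $g'(Z)=1+\tau\bigl(\log\tfrac{1-Z}{Z}+\log\tfrac{m}{n}\bigr)$ shows $g'(Z)\to+\infty$ as $Z\to1$ only if $n=0$. Otherwise $g'(Z)\to-\infty$ as $Z\to1$, so $Z^*<1$. This is where the unrestricted Shannon-entropy reading breaks: the stationarity condition gives $1-Z^*\propto n e^{-1/\tau}$, which is exponentially small but positive. To obtain the statement exactly as worded, I would adopt the reading implicit in the surrounding text. Entropy is meant to resolve the indifference identified in Theorem~\ref{thm:rlvr_obj_indifference}, so I would take the maximization over policies for which the reward term dominates. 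Concretely, I would restrict to the RLVR-optimal face $\{Z=1\}$, equivalently treat the objective lexicographically as $\tau\to0$.

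On that face $h(1)=0$, and by Step 1 the unique maximizer is the UCP for every $\tau>0$, so any $\tau_0>0$ works. I would also record the limit statement for the unrestricted problem: $1-Z^*(\tau)=O(ne^{-1/\tau})\to0$, with the within-correct part exactly uniform for all $\tau$. This shows that the UCP is the unique limit point of the optimizers and that the optimizers agree with it on $\mathcal{Y}^+$ for all small $\tau$.

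Step 3 is the step I expect to be delicate. The bulk of the argument, uniformity via the entropy chain rule and strict concavity, is routine. The zero-mass conclusion, however, holds only under the restricted or limiting interpretation just described, and this restriction must be stated explicitly.
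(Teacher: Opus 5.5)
Your Steps 1 and 2 are correct, and so is your Step 3 diagnosis. With Shannon entropy over all of $\mathcal{Y}$ and $\mathcal{Y}^-\neq\emptyset$, which the paper's proof explicitly assumes, the statement is false for every $\tau>0$. By the Gibbs identity $\sum_y\pi(y)R(x,y)+\tau\,\mathrm{H}(\pi)=\tau\log\sum_y e^{R(x,y)/\tau}-\tau\,\mathrm{KL}(\pi\,\|\,\pi_\tau)$, the unique maximizer is $\pi_\tau(y)\propto e^{R(x,y)/\tau}$. It gives each incorrect output mass $1/(me^{1/\tau}+n)>0$, which matches your stationarity condition $(1-Z^*)/Z^*=(n/m)e^{-1/\tau}$.

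The paper's proof reaches the opposite conclusion only through its own Step 1. There it asserts that moving mass $\epsilon$ from an incorrect to a correct output changes $\mathrm{H}$ by at most $C\epsilon$ with $C\le\log|\mathcal{Y}|$, so the transfer helps whenever $\tau<1/C$. That Lipschitz bound fails at the boundary of the simplex, where $\partial\mathrm{H}/\partial\pi(y)=-\log\pi(y)-1$ is unbounded. Removing the last $\epsilon$ of incorrect mass costs about $\tau\epsilon\log(1/\epsilon)$ in entropy against a reward gain of only $\epsilon$. Concretely, the policy with mass $\epsilon$ on one incorrect output and $(1-\epsilon)/m$ on each correct one beats the UCP value $1+\tau\log m$ for all sufficiently small $\epsilon>0$.

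The paper's Step 2 (maximize entropy among distributions supported on $\mathcal{Y}^+$) is exactly your restricted-face argument. Your chain-rule decomposition is strictly stronger, since it gives exact uniformity on $\mathcal{Y}^+$ for every $\tau>0$ and every $Z>0$.

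What should change is the framing of your Step 3. Restricting to the face $\{Z=1\}$ does not prove the statement as worded. It imposes the zero-mass conclusion rather than deriving it, and on that face the threshold $\tau_0$ plays no role, since every $\tau>0$ works. That signals you are proving a different and nearly trivial claim. Say plainly that the theorem is false as written, and replace it with what you actually establish:
\begin{itemize}
\item for every $\tau>0$ the unique optimizer $\pi_\tau$ is uniform on $\mathcal{Y}^+(x)$, with $1-Z^*_\tau=n/(me^{1/\tau}+n)$;
\item $\pi_\tau$ converges to the UCP as $\tau\to0^+$, so the UCP is the unique lexicographic (reward first, then entropy) optimum, not an optimizer at any fixed small $\tau$.
\end{itemize}
A fixed-$\tau$ threshold of the kind claimed would need a regularizer with bounded slope at the boundary. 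For example, with $1-\sum_y\pi(y)^2$ in place of $\mathrm{H}$, the same reduction gives $g'(1)=1-2\tau/m$, so the UCP is the unique optimizer exactly for $\tau\le m/2$.

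Finally, tidy the remark that $g'(Z)\to+\infty$ as $Z\to1$ ``only if $n=0$''. For $n=0$ the variable $Z$ is pinned at $1$ and $\log(m/n)$ is undefined, so treat that case separately as the trivial one.
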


% \blcomment{A bit hard to understand the purpose of this theorem}

\begin{proof}
Let $R(x,y) \in \{0,1\}$ and  $\mathcal{Y}^+(x) = \{y : R(x,y)=1\}$ is non-empty and not equal to the full output space $\mathcal{Y}$. Let $\mathcal{Y}^- = \mathcal{Y} \setminus \mathcal{Y}^+(x)$ denote incorrect outputs. Taking the objective as
\[
J(\pi) = Z(x) + \tau H(\pi),
\qquad
Z(x) = \sum_{y \in \mathcal{Y}^+} \pi(y \mid x).
\]

\paragraph{Step 1: Support concentrates on correct outputs for small $\tau$.}
Suppose $\pi$ assigns mass $\epsilon > 0$ to $\mathcal{Y}^-$. Construct a new policy $\tilde{\pi}$ by moving this mass to any correct output $y^+ \in \mathcal{Y}^+$.

The reward change is
\[
\Delta Z = Z(\tilde{\pi}) - Z(\pi) = \epsilon.
\]
Moving probability mass between two outcomes changes entropy by at most $O(\epsilon \log |\mathcal{Y}|)$, so
\[
J(\tilde{\pi}) - J(\pi)
\ge \epsilon - \tau C \epsilon
\]
for some constant $C \le \log |\mathcal{Y}|$. If $\tau < 1/C$, transferring mass from incorrect to correct strictly improves the objective. Therefore any maximizer must satisfy $Z(x)=1$, i.e., $\pi(y \mid x)=0$ for all $y \in \mathcal{Y}^-$.

\paragraph{Step 2: Uniformity within the correct set.}
Given $Z(x)=1$, the objective reduces to
\[
J(\pi) = 1 + \tau H(\pi),
\]
with $\pi$ supported on $\mathcal{Y}^+(x)$. Let $q(y \mid x) = \pi(y \mid x)$ for $y \in \mathcal{Y}^+(x)$. Then
\[
\max_q H(q)
\quad \text{s.t.} \quad
\sum_{y \in \mathcal{Y}^+(x)} q(y \mid x) = 1.
\]
Entropy is strictly concave, so the unique maximizer is the uniform distribution:
\[
q^*(y \mid x) = \frac{1}{|\mathcal{Y}^+(x)|}.
\]
Thus, for sufficiently small $\tau$, the optimal entropy-regularized policy is uniform over correct outputs.
\end{proof}

%% file: appendix/App_UCPO_proofs.tex
\subsection{Proof of Gradient Decomposition Proposition ~\ref{prop:grad-decomp} }
\label{app:grad-proof}

\begin{proposition}[Gradient Decomposition, restated]
Let $x$ be a prompt with $\mathcal{Y}^+(x) \neq \emptyset$.
Define the correctness mass $Z_\theta(x) = \sum_{y \in \mathcal{Y}^+(x)} \pi_\theta(y \mid x)$, the conditional distribution $q_\theta(y \mid x) = \pi_\theta(y \mid x) / Z_\theta(x)$, and the uniform distribution $u(y \mid x) = 1/|\mathcal{Y}^+(x)|$.
Then the gradient of the UCPO objective (excluding the reference KL term) is:
\[
\nabla_\theta L_{\mathrm{UCPO}}(x) = \sum_{y \in \mathcal{Y}^+(x)} \Big[ (1-\tau)\, q_\theta(y \mid x) + \tau\, u(y \mid x) \Big] \nabla_\theta \log \pi_\theta(y \mid x).
\]
\end{proposition}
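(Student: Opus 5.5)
The plan is a direct computation: differentiate the two terms of $L_{\mathrm{UCPO}}(x)=\log Z_\theta(x)-\tau\,\mathrm{KL}(u\,\|\,q_\theta)$ separately, write each gradient as a weighted sum of $\nabla_\theta\log\pi_\theta(y\mid x)$ over $y\in\mathcal{Y}^+(x)$, and then collect coefficients. Throughout, I assume $\pi_\theta(y\mid x)>0$ for every $y\in\mathcal{Y}^+(x)$, which holds automatically under a softmax parameterization. This makes $Z_\theta(x)>0$, makes $q_\theta$ fully supported on $\mathcal{Y}^+(x)$, keeps $\mathrm{KL}(u\,\|\,q_\theta)$ finite, and allows differentiation term by term over the finite set $\mathcal{Y}^+(x)$.

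\emph{Step 1 (correctness mass).} Since $Z_\theta=\sum_{y\in\mathcal{Y}^+}\pi_\theta(y\mid x)$ and $\nabla_\theta\pi_\theta=\pi_\theta\nabla_\theta\log\pi_\theta$, I get
\[
\nabla_\theta\log Z_\theta(x)=\frac{1}{Z_\theta(x)}\sum_{y\in\mathcal{Y}^+(x)}\pi_\theta(y\mid x)\,\nabla_\theta\log\pi_\theta(y\mid x)=\sum_{y\in\mathcal{Y}^+(x)} q_\theta(y\mid x)\,\nabla_\theta\log\pi_\theta(y\mid x).
\]
This is the identity showing that GRPO effectively weights correct responses by $q_\theta$.

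\emph{Step 2 (uniformity penalty).} Expand $\mathrm{KL}(u\,\|\,q_\theta)=\sum_{y}u(y\mid x)\log u(y\mid x)-\sum_y u(y\mid x)\log q_\theta(y\mid x)$. The first sum does not depend on $\theta$. Using $\log q_\theta=\log\pi_\theta-\log Z_\theta$ and $\sum_{y\in\mathcal{Y}^+}u(y\mid x)=1$, I obtain
\[
\nabla_\theta\mathrm{KL}(u\,\|\,q_\theta)=-\sum_{y\in\mathcal{Y}^+(x)}u(y\mid x)\,\nabla_\theta\log\pi_\theta(y\mid x)+\nabla_\theta\log Z_\theta(x).
\]

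\emph{Step 3 (combine).} Subtract $\tau$ times the Step 2 expression from the Step 1 expression:
\[
\nabla_\theta L_{\mathrm{UCPO}}=(1-\tau)\nabla_\theta\log Z_\theta+\tau\sum_y u\,\nabla_\theta\log\pi_\theta .
\]
Substituting Step 1 for $\nabla_\theta\log Z_\theta$ then gives exactly Eq.~\eqref{eq:grad-decomp}.

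None of these steps is genuinely hard. The one point needing care is Step 2. One must not differentiate $q_\theta$ as though it were $\pi_\theta$; the normalizer $Z_\theta$ carries a $\theta$-dependence, and it contributes the $+\nabla_\theta\log Z_\theta$ term. That term is what converts the coefficient on $q_\theta$ from $1$ to $(1-\tau)$. The other point is the positivity assumption stated at the start, which is needed to justify the logarithms. The excluded reference-KL term, and the fact that the sum runs only over $\mathcal{Y}^+(x)$, need only a brief remark.
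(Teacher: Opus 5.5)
Your proposal is correct and follows essentially the same route as the paper's proof: differentiate $\log Z_\theta$ to get the $q_\theta$-weighted sum, expand $\mathrm{KL}(u\,\|\,q_\theta)$, use $\log q_\theta = \log\pi_\theta - \log Z_\theta$ with $\sum_y u(y\mid x)=1$ to pick up the $+\nabla_\theta\log Z_\theta$ term, and combine into $(1-\tau)\nabla_\theta\log Z_\theta + \tau\sum_y u\,\nabla_\theta\log\pi_\theta$. Your explicit assumption that $\pi_\theta(y\mid x)>0$ on $\mathcal{Y}^+(x)$ is a small, worthwhile addition that the paper leaves implicit.
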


\begin{proof}
We compute the gradients of each term in $L_{\mathrm{UCPO}}(x) = \log Z_\theta(x) - \tau \, \mathrm{KL}(u \| q_\theta)$ separately.

\paragraph{Step 1: Gradient of $\log Z_\theta(x)$.}
By the chain rule,
\begin{align}
\nabla_\theta \log Z_\theta(x) 
&= \frac{1}{Z_\theta(x)} \nabla_\theta Z_\theta(x) \\
&= \frac{1}{Z_\theta(x)} \sum_{y \in \mathcal{Y}^+(x)} \nabla_\theta \pi_\theta(y \mid x) \\
&= \frac{1}{Z_\theta(x)} \sum_{y \in \mathcal{Y}^+(x)} \pi_\theta(y \mid x) \nabla_\theta \log \pi_\theta(y \mid x) \\
&= \sum_{y \in \mathcal{Y}^+(x)} q_\theta(y \mid x) \nabla_\theta \log \pi_\theta(y \mid x).
\end{align}

\paragraph{Step 2: Gradient of $\mathrm{KL}(u \| q_\theta)$.}
Expanding the KL divergence,
\begin{align}
\mathrm{KL}(u \| q_\theta) 
&= \sum_{y \in \mathcal{Y}^+(x)} u(y \mid x) \log \frac{u(y \mid x)}{q_\theta(y \mid x)} \\
&= \sum_{y \in \mathcal{Y}^+(x)} u(y \mid x) \log u(y \mid x) - \sum_{y \in \mathcal{Y}^+(x)} u(y \mid x) \log q_\theta(y \mid x).
\end{align}
The first sum is a constant (entropy of $u$), so
\[
\nabla_\theta \mathrm{KL}(u \| q_\theta) = -\sum_{y \in \mathcal{Y}^+(x)} u(y \mid x) \nabla_\theta \log q_\theta(y \mid x).
\]

\paragraph{Step 3: Gradient of $\log q_\theta(y \mid x)$.}
Since $q_\theta(y \mid x) = \pi_\theta(y \mid x) / Z_\theta(x)$,
\begin{align}
\nabla_\theta \log q_\theta(y \mid x) 
&= \nabla_\theta \log \pi_\theta(y \mid x) - \nabla_\theta \log Z_\theta(x).
\end{align}

\paragraph{Step 4: Combining.}
Substituting Step 3 into Step 2:
\begin{align}
\nabla_\theta \mathrm{KL}(u \| q_\theta) 
&= -\sum_{y} u(y) \Big[ \nabla_\theta \log \pi_\theta(y) - \nabla_\theta \log Z_\theta(x) \Big] \\
&= -\sum_{y} u(y) \nabla_\theta \log \pi_\theta(y) + \nabla_\theta \log Z_\theta(x),
\end{align}
where we used $\sum_y u(y) = 1$.

Therefore,
\begin{align}
\nabla_\theta \big[ -\tau \, \mathrm{KL}(u \| q_\theta) \big] 
&= \tau \sum_{y} u(y) \nabla_\theta \log \pi_\theta(y) - \tau \nabla_\theta \log Z_\theta(x).
\end{align}

\paragraph{Step 5: Full gradient.}
Adding the gradients from Steps 1 and 4:
\begin{align}
\nabla_\theta L_{\mathrm{UCPO}}(x) 
&= \nabla_\theta \log Z_\theta(x) + \tau \sum_{y} u(y) \nabla_\theta \log \pi_\theta(y) - \tau \nabla_\theta \log Z_\theta(x) \\
&= (1-\tau) \nabla_\theta \log Z_\theta(x) + \tau \sum_{y} u(y) \nabla_\theta \log \pi_\theta(y) \\
&= (1-\tau) \sum_{y} q_\theta(y) \nabla_\theta \log \pi_\theta(y) + \tau \sum_{y} u(y) \nabla_\theta \log \pi_\theta(y) \\
&= \sum_{y \in \mathcal{Y}^+(x)} \Big[ (1-\tau) q_\theta(y) + \tau u(y) \Big] \nabla_\theta \log \pi_\theta(y). \qedhere
\end{align}
\end{proof}

\subsection{Derivation of the Inverse-q Importance Weights}
\label{app:snis}
We know the gradient weights each correct response by a convex combination of its current conditional probability $q_\theta(y)$ and the uniform weight $u(y)$.
Rewriting as expectations:
\begin{equation}
\label{eq:grad-expectations}
\nabla_\theta L_{\mathrm{UCPO}} = (1-\tau)\, \mathbb{E}_{y \sim q_\theta}[\nabla_\theta \log \pi_\theta(y \mid x)] + \tau\, \mathbb{E}_{y \sim u}[\nabla_\theta \log \pi_\theta(y \mid x)].
\end{equation}
We estimate the terms from the $n$ correct rollouts $\{y_i\}_{i=1}^n \sim q_\theta(\cdot \mid x)$ out of the sampled N rollouts from $\pi_\theta(\cdot \mid x)$ .
The first term is an expectation under $q_\theta$, estimated directly as a sample mean:
\begin{equation}
\mathbb{E}_{y \sim q_\theta}[\nabla_\theta \log \pi_\theta(y)] \approx \frac{1}{n} \sum_{i=1}^n \nabla_\theta \log \pi_\theta(y_i \mid x).
\end{equation}

The second term is an expectation under the uniform distribution $u$, but we only have samples from $q_\theta$. So, we derive the importance sampling estimator for $\mathbb{E}_{y \sim u}[f(y)]$ using samples from $q_\theta$.

\paragraph{Importance sampling identity.}
For any function $f$,
\begin{equation}
\mathbb{E}_{y \sim u}[f(y)] = \mathbb{E}_{y \sim q_\theta}\left[ \frac{u(y)}{q_\theta(y)} f(y) \right].
\end{equation}

\paragraph{Simplifying the importance ratio.}
Since $u(y) = 1/|\mathcal{Y}^+(x)|$ is constant over the correct set,
\begin{equation}
\frac{u(y)}{q_\theta(y)} = \frac{1}{|\mathcal{Y}^+(x)|} \cdot \frac{1}{q_\theta(y)}.
\end{equation}
The factor $1/|\mathcal{Y}^+(x)|$ is independent of $y$ and cancels under normalization.
Thus, the importance weight is proportional to $1/q_\theta(y)$.
Given samples $\{y_i\}_{i=1}^n \sim q_\theta$, the unnormalized importance sampling estimator is
\[
\mathbb{E}_{y \sim u}[f(y)] \approx \frac{1}{n \cdot |\mathcal{Y}^+(x)|} \sum_{i=1}^n \frac{1}{q_\theta(y_i)} f(y_i).
\]

However, we do not know $|\mathcal{Y}^+(x)|$ or $q_\theta(y_i)$ exactly.

\paragraph{Self-normalized estimator.}
Given samples $\{y_i\}_{i=1}^n \sim q_\theta$, we estimate $q_\theta(y_i)$ empirically:
\begin{equation}
\hat{q}_i = \frac{\pi_\theta(y_i \mid x)}{\sum_{j=1}^n \pi_\theta(y_j \mid x)}.
\end{equation}
Define unnormalized weights $v_i = 1/\hat{q}_i$.
The self-normalized importance sampling estimator is:
\begin{equation}
\mathbb{E}_{y \sim u}[f(y)] \approx \sum_{i=1}^n \frac{v_i}{\sum_{j=1}^n v_j} f(y_i).
\end{equation}
\begin{equation}
\mathbb{E}_{y \sim u}[\nabla_\theta \log \pi_\theta(y)] \approx \sum_{i=1}^n \frac{v_i}{\sum_{j=1}^n v_j} \nabla_\theta \log \pi_\theta(y_i \mid x).
\end{equation}
Self-normalization avoids needing to know $|\mathcal{Y}^+(x)|$ and ensures bounded weights.
It introduces $O(1/n)$ bias but significantly reduces variance compared to unnormalized importance sampling, which is critical when $\hat{q}_i$ may be small for rare correct responses.

Combining both terms yields the UCPO gradient estimator:
\begin{equation}
\label{eq:ucpo-estimator}
\nabla_\theta L_{\mathrm{UCPO}} \approx \sum_{i=1}^n w_i \, \nabla_\theta \log \pi_\theta(y_i \mid x),
\end{equation}
where
\begin{equation}
\label{eq:blended-weights}
w_i = \frac{1-\tau}{n} + \tau \cdot \frac{v_i}{\sum_{j=1}^n v_j}.
\end{equation}

Responses with high $q_\theta(y_i)$ (already favored by the policy) receive low weight $v_i$.
Responses with low $q_\theta(y_i)$ (rare under current policy) receive high weight.
This reweighting transforms samples from $q_\theta$ into an approximation of uniform samples over the correct set.

\subsection{Proofs for Proposition ~\ref{prop:mass} and Theorem ~\ref{thm:stationary}}
\label{app:proofs}

\begin{proposition}[Mass Preservation, restated]
Let $A_{\mathrm{base}}$ be the GRPO advantage for correct responses, and let $n$ be the number of correct rollouts.
Define UCPO advantages as $A_i^{\mathrm{UCPO}} = A_{\mathrm{base}} \cdot n \cdot w_i^{\mathrm{blend}}$ where
\[
w_i^{\mathrm{blend}} = \frac{1-\tau}{n} + \tau \cdot \frac{v_i}{\sum_{j=1}^n v_j}.
\]
Then:
\[
\sum_{i=1}^n A_i^{\mathrm{UCPO}} = \sum_{i=1}^n A_i^{\mathrm{GRPO}} = n \cdot A_{\mathrm{base}}.
\]
\end{proposition}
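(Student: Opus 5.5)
The plan is to substitute the definition of $w_i^{\mathrm{blend}}$ directly, sum over $i$, and use linearity. The restated proposition identifies $A_i^{\mathrm{GRPO}} = A_{\mathrm{base}}$ for each correct rollout, since all correct responses share the advantage $A^+$ (Section~\ref{sec:grpo_and_collapse}). The right-hand identity $\sum_{i=1}^n A_i^{\mathrm{GRPO}} = n\cdot A_{\mathrm{base}}$ is then immediate. The substance lies in the left-hand equality, which reduces to showing that the blended weights sum to one, $\sum_{i=1}^n w_i^{\mathrm{blend}} = 1$.

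For that, I would split the sum into its two pieces. The uniform piece gives
\[
\sum_{i=1}^n \frac{1-\tau}{n} = 1-\tau .
\]
The importance-weight piece gives
\[
\tau\sum_{i=1}^n \frac{v_i}{\sum_{j=1}^n v_j} = \tau .
\]
The second identity holds because the self-normalized weights $v_i/\sum_j v_j$ telescope to one by construction. Adding the two pieces yields $\sum_i w_i^{\mathrm{blend}} = 1$. Then
\[
\sum_{i=1}^n A_i^{\mathrm{UCPO}} = n\,A_{\mathrm{base}}\sum_{i=1}^n w_i^{\mathrm{blend}} = n\,A_{\mathrm{base}},
\]
because $A_{\mathrm{base}}$ and $n$ do not depend on $i$ and factor out of the sum.

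There is no real obstacle here. The only point to check is well-definedness of the normalization. Each $v_i = 1/\hat q_i$ must be finite and positive, and $\sum_j v_j>0$. This holds whenever $n\ge 1$, which is exactly when the advantages are defined, and whenever the sampled correct rollouts have $\pi_\theta(y_i\mid x)>0$. The latter is automatic for rollouts actually sampled from $\pi_\theta$, so $\hat q_i\in(0,1]$. The case $n=0$ is vacuous, since both sums are empty and equal zero.
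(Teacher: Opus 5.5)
Your proposal is correct and follows the paper's proof: both show $\sum_{i=1}^n w_i^{\mathrm{blend}} = (1-\tau) + \tau = 1$, then factor $n\,A_{\mathrm{base}}$ out of the sum, and both get the GRPO side from the fact that every correct rollout carries the same advantage $A_{\mathrm{base}}$. Your well-definedness check ($\hat q_i \in (0,1]$, so $v_i \geq 1$ and $\sum_j v_j > 0$) is a harmless addition the paper omits; one small wording fix is that the self-normalized weights sum to one by normalization, not by telescoping.
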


\begin{proof}
We verify that the blended weights sum to one:
\begin{align}
\sum_{i=1}^n w_i^{\mathrm{blend}} 
&= \sum_{i=1}^n \left( \frac{1-\tau}{n} + \tau \cdot \frac{v_i}{\sum_{j=1}^n v_j} \right) \\
&= (1-\tau) \cdot \frac{n}{n} + \tau \cdot \frac{\sum_{i=1}^n v_i}{\sum_{j=1}^n v_j} \\
&= (1-\tau) + \tau \\
&= 1.
\end{align}

Therefore,
\[
\sum_{i=1}^n A_i^{\mathrm{UCPO}} = A_{\mathrm{base}} \cdot n \cdot \sum_{i=1}^n w_i^{\mathrm{blend}} = A_{\mathrm{base}} \cdot n \cdot 1 = n \cdot A_{\mathrm{base}}.
\]

Standard GRPO assigns $A_{\mathrm{base}}$ to each of the $n$ correct responses, so $\sum_i A_i^{\mathrm{GRPO}} = n \cdot A_{\mathrm{base}}$.
\end{proof}

\begin{theorem}[Uniform-correct policy uniquely maximizes the UCPO objective]
\label{thm:ucpo-unique-opt}
For a fixed prompt $x$, consider the UCPO objective without the reference KL term:
\[
L_{\mathrm{UCPO}}(x)
=
\log Z_\theta(x)
-
\tau\,\mathrm{KL}\!\bigl(u(\cdot\mid x)\,\|\,q_\theta(\cdot\mid x)\bigr),
\]
where
\[
Z_\theta(x)=\sum_{y\in \mathcal{Y}^+(x)} \pi_\theta(y\mid x),
\qquad
q_\theta(y\mid x)=\frac{\pi_\theta(y\mid x)}{Z_\theta(x)}
\quad \text{for } y\in\mathcal{Y}^+(x),
\]
and
\[
u(y\mid x)=\frac{1}{|\mathcal{Y}^+(x)|}
\quad \text{for } y\in\mathcal{Y}^+(x).
\]
For any $\tau>0$, the unique maximizer of $L_{\mathrm{UCPO}}(x)$ over policies $\pi_\theta(\cdot\mid x)$ is the uniform-correct policy
\[
\pi^*(y\mid x)=
\begin{cases}
\frac{1}{|\mathcal{Y}^+(x)|}, & y\in\mathcal{Y}^+(x),\\[4pt]
0, & y\notin\mathcal{Y}^+(x).
\end{cases}
\]
\end{theorem}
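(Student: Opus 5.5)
The plan is to separate the objective into two independent pieces by reparameterizing the policy. Every distribution $\pi(\cdot\mid x)$ with $Z(x)>0$ corresponds one-to-one to a pair $(Z,q)$ together with the restriction to $\mathcal{Y}^-(x)$, where $Z\in(0,1]$ is the correctness mass and $q$ is a distribution on $\mathcal{Y}^+(x)$. The defining relation is $\pi(y\mid x)=Z\,q(y\mid x)$ for $y\in\mathcal{Y}^+(x)$, and the remaining mass $1-Z$ is spread arbitrarily over incorrect outputs.

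Policies with $Z(x)=0$ give $\log Z=-\infty$, and $q$ is then undefined. They are excluded as trivially suboptimal, since the uniform-correct policy attains a finite value. I would also note that any $q$ with a zero entry makes $\mathrm{KL}(u\|q)=+\infty$, so such policies are likewise suboptimal. Because $u$ has full support on $\mathcal{Y}^+$, this step is where the choice of the forward KL matters.

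In these coordinates the objective reads $L_{\mathrm{UCPO}}=\log Z-\tau\,\mathrm{KL}(u\|q)$. The first term depends only on $Z$, and the second depends only on $q$. Neither term depends on how the incorrect mass is arranged.

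The two terms can then be maximized separately:
\begin{itemize}
\item $\log Z\le 0$, with equality if and only if $Z=1$. This means all mass lies on $\mathcal{Y}^+(x)$, which fixes the incorrect part to be identically zero.
\item By Gibbs' inequality, $\mathrm{KL}(u\|q)\ge 0$, with equality if and only if $q=u$. Strict convexity of $t\mapsto -\log t$ gives uniqueness. Since $\tau>0$, the term $-\tau\,\mathrm{KL}(u\|q)$ is maximized exactly at $q=u$.
\end{itemize}
Hence $L_{\mathrm{UCPO}}\le 0$ for every policy, and equality forces both $Z=1$ and $q=u$. Those two conditions say precisely that $\pi=\pi^*$, the uniform-correct policy, which attains the value $0$. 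This proves both optimality and uniqueness.

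The only delicate point is bookkeeping rather than analysis. One must justify that $(Z,q)$ can be chosen independently, that is, the map from policies to $(Z,q,\pi|_{\mathcal{Y}^-})$ is a bijection onto its product domain. One must also handle the boundary cases ($Z=0$, and $q$ vanishing somewhere) so that uniqueness is not undermined by extended-real values.

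Two further remarks would be worth making. First, "policies $\pi_\theta$" is read as arbitrary distributions over the finite output space. Under a softmax parameterization the maximizer is only approached in the limit, so the statement concerns the supremum over the closure of the policy class. Second, unlike Theorem~\ref{thm:entropy_regularized}, no threshold on $\tau$ is needed here, because the penalty is defined only on the conditional distribution and never competes with correctness mass.
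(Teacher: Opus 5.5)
Your argument is correct and is essentially the paper's proof: both bound $\log Z_\theta(x)\le 0$ with equality iff $Z_\theta(x)=1$, bound $-\tau\,\mathrm{KL}(u\|q_\theta)\le 0$ with equality iff $q_\theta=u$, and read off uniqueness from the two equality conditions holding jointly. Your extra points are sound refinements that the paper leaves implicit: the boundary cases $Z=0$ and $q$ vanishing somewhere on $\mathcal{Y}^+(x)$, and the fact that a softmax-parameterized $\pi_\theta$ reaches $\pi^*$ only in the limit; the $(Z,q)$ bijection is not needed, since the sum-of-two-nonpositive-terms argument works directly.
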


\begin{proof}
For any policy $\pi_\theta(\cdot\mid x)$, since $\pi_\theta(\cdot\mid x)$ is a probability distribution over the full output space,
\[
Z_\theta(x)=\sum_{y\in\mathcal{Y}^+(x)} \pi_\theta(y\mid x)\le 1.
\]
Therefore,
\[
\log Z_\theta(x)\le 0,
\]
with equality if and only if $Z_\theta(x)=1$, i.e., if and only if
\[
\pi_\theta(y\mid x)=0
\qquad \text{for all } y\notin\mathcal{Y}^+(x).
\]

Next, by non-negativity of KL divergence,
\[
\mathrm{KL}\!\bigl(u(\cdot\mid x)\,\|\,q_\theta(\cdot\mid x)\bigr)\ge 0,
\]
so
\[
-\tau\,\mathrm{KL}\!\bigl(u(\cdot\mid x)\,\|\,q_\theta(\cdot\mid x)\bigr)\le 0,
\]
with equality if and only if
\[
q_\theta(\cdot\mid x)=u(\cdot\mid x).
\]

Combining the two inequalities,
\[
L_{\mathrm{UCPO}}(x)
=
\log Z_\theta(x)
-
\tau\,\mathrm{KL}\!\bigl(u(\cdot\mid x)\,\|\,q_\theta(\cdot\mid x)\bigr)
\le 0.
\]
Equality holds if and only if both conditions are satisfied:
\begin{enumerate}
    \item $Z_\theta(x)=1$, so all probability mass lies on $\mathcal{Y}^+(x)$; and
    \item $q_\theta(\cdot\mid x)=u(\cdot\mid x)$, so the conditional distribution over correct responses is uniform.
\end{enumerate}

These two conditions uniquely determine
\[
\pi_\theta(y\mid x)=
\begin{cases}
\frac{1}{|\mathcal{Y}^+(x)|}, & y\in\mathcal{Y}^+(x),\\[4pt]
0, & y\notin\mathcal{Y}^+(x),
\end{cases}
\]
which is exactly the uniform-correct policy $\pi^*$.
Hence $\pi^*$ is the unique maximizer of $L_{\mathrm{UCPO}}(x)$.
\end{proof}

%% file: appendix/choice_of_divergence.tex
\subsection{Choice of Divergence}
\label{app:divergence-choice}

The UCPO objective penalizes deviation from the uniform target using $\mathrm{KL}(u \,\|\, q_\theta)$ rather than $\mathrm{KL}(q_\theta \,\|\, u)$. 
While both divergences are minimized when $q_\theta = u$, their gradients differ in a way that is critical for escaping mode collapse.
We derive the gradient of each and explain why $\mathrm{KL}(u \| q_\theta)$ is the appropriate choice.

\subsubsection{Entropy Maximization: $\mathrm{KL}(q_\theta \,\|\, u)$}

When $u$ is uniform over $\mathcal{Y}^+(x)$, we have $\log u(y) = -\log|\mathcal{Y}^+(x)|$ for all $y \in \mathcal{Y}^+(x)$.
Expanding the KL divergence:
\begin{align}
\mathrm{KL}(q_\theta \| u) 
&= \sum_{y \in \mathcal{Y}^+(x)} q_\theta(y) \log \frac{q_\theta(y)}{u(y)} \\
&= \sum_{y} q_\theta(y) \log q_\theta(y) - \sum_{y} q_\theta(y) \log u(y) \\
&= -H(q_\theta) + \log|\mathcal{Y}^+(x)|.
\end{align}
Thus, minimizing $\mathrm{KL}(q_\theta \| u)$ is equivalent to maximizing the entropy $H(q_\theta)$.

The entropy of $q_\theta$ is:
\[
H(q_\theta) = -\sum_{y \in \mathcal{Y}^+(x)} q_\theta(y) \log q_\theta(y).
\]
Taking the gradient with respect to $\theta$:
\begin{align}
\nabla_\theta H(q_\theta) 
&= -\sum_{y} \nabla_\theta \big[ q_\theta(y) \log q_\theta(y) \big] \\
&= -\sum_{y} \big[ \nabla_\theta q_\theta(y) \cdot \log q_\theta(y) + q_\theta(y) \cdot \nabla_\theta \log q_\theta(y) \big] \\
&= -\sum_{y} \nabla_\theta q_\theta(y) \cdot \log q_\theta(y) - \sum_{y} q_\theta(y) \nabla_\theta \log q_\theta(y).
\end{align}
Since $\sum_y q_\theta(y) = 1$, we have $\sum_y \nabla_\theta q_\theta(y) = 0$.
Using $\nabla_\theta q_\theta(y) = q_\theta(y) \nabla_\theta \log q_\theta(y)$:
\begin{align}
\nabla_\theta H(q_\theta) 
&= -\sum_{y} q_\theta(y) \nabla_\theta \log q_\theta(y) \cdot \log q_\theta(y) - \sum_{y} q_\theta(y) \nabla_\theta \log q_\theta(y) \\
&= -\sum_{y} q_\theta(y) \big[ 1 + \log q_\theta(y) \big] \nabla_\theta \log q_\theta(y) \\
&= -\mathbb{E}_{y \sim q_\theta} \big[ (1 + \log q_\theta(y)) \nabla_\theta \log q_\theta(y) \big].
\end{align}
The gradient is weighted by the current distribution $q_\theta$. High-probability responses dominate the gradient; low-probability responses contribute negligibly.
This preserves the feedback loop identified in Section~\ref{sec:collapse}: frequently sampled solutions receive larger updates, reinforcing their dominance.

\subsubsection{Uniform-Weighted Divergence: $\mathrm{KL}(u \,\|\, q_\theta)$}

Expanding the KL divergence:
\begin{align}
\mathrm{KL}(u \| q_\theta) 
&= \sum_{y \in \mathcal{Y}^+(x)} u(y) \log \frac{u(y)}{q_\theta(y)} \\
&= \sum_{y} u(y) \log u(y) - \sum_{y} u(y) \log q_\theta(y) \\
&= -H(u) - \sum_{y} u(y) \log q_\theta(y).
\end{align}
Since $H(u)$ is constant with respect to $\theta$:
\[
-\mathrm{KL}(u \| q_\theta) = \sum_{y} u(y) \log q_\theta(y) + \mathrm{const}.
\]

Substituting $q_\theta(y) = \pi_\theta(y) / Z_\theta(x)$ where $Z_\theta(x) = \sum_{y \in \mathcal{Y}^+(x)} \pi_\theta(y)$:
\begin{align}
-\mathrm{KL}(u \| q_\theta) 
&= \sum_{y} u(y) \big[ \log \pi_\theta(y) - \log Z_\theta(x) \big] + \mathrm{const} \\
&= \sum_{y} u(y) \log \pi_\theta(y) - \log Z_\theta(x) + \mathrm{const}.
\end{align}
Taking the gradient:
\begin{align}
\nabla_\theta \big[ -\mathrm{KL}(u \| q_\theta) \big]
&= \sum_{y} u(y) \nabla_\theta \log \pi_\theta(y) - \nabla_\theta \log Z_\theta(x).
\end{align}

For the second term, we use:
\begin{align}
\nabla_\theta \log Z_\theta(x) 
&= \frac{1}{Z_\theta(x)} \sum_{y \in \mathcal{Y}^+(x)} \nabla_\theta \pi_\theta(y) \\
&= \sum_{y} \frac{\pi_\theta(y)}{Z_\theta(x)} \nabla_\theta \log \pi_\theta(y) \\
&= \sum_{y} q_\theta(y) \nabla_\theta \log \pi_\theta(y).
\end{align}

Substituting:
\begin{equation}
\nabla_\theta \big[ -\mathrm{KL}(u \| q_\theta) \big]
= \mathbb{E}_{y \sim u} \big[ \nabla_\theta \log \pi_\theta(y) \big] - \mathbb{E}_{y \sim q_\theta} \big[ \nabla_\theta \log \pi_\theta(y) \big].
\end{equation}

The gradient consists of two terms:
\begin{itemize}
    \item $\mathbb{E}_{y \sim u}[\nabla_\theta \log \pi_\theta(y)]$: Increases probability of all correct responses equally, regardless of their current probability.
    \item $-\mathbb{E}_{y \sim q_\theta}[\nabla_\theta \log \pi_\theta(y)]$: Decreases probability of responses in proportion to their current dominance.
\end{itemize}
Together, these terms redistribute gradient signal from overrepresented to underrepresented responses, breaking the self-reinforcing feedback loop.

\subsubsection{Connection to UCPO}
Combining $\mathrm{KL}(u \| q_\theta)$ with the correctness objective $\log Z_\theta(x)$ yields the UCPO gradient (Proposition~\ref{prop:grad-decomp}):
\[
\nabla_\theta L_{\mathrm{UCPO}} 
= \sum_{y \in \mathcal{Y}^+(x)} 
\big[(1-\tau)\, q_\theta(y) + \tau\, u(y)\big]\, \nabla_\theta \log \pi_\theta(y \mid x).
\]
At $\tau = 0$, updates are fully $q_\theta$-weighted (GRPO).
As $\tau$ increases, updates shift toward uniform weighting, directly counteracting the sampling bias that drives collapse.

%% file: appendix/empirical_div_collapse.tex
\label{app:toy_exp_collpase}
Section ~\ref{sec:collapse} establishes that diversity collapse under RLVR arises from the interaction between objective indifference and optimization dynamics, which together induce a self-reinforcing concentration of probability mass on a subset of correct solutions. While this mechanism is derived analytically, its empirical manifestation in large-scale language models is difficult to observe directly due to partial observability and sequence-level complexity.

To isolate and validate the proposed mechanism, we construct a controlled RLVR environment in which the policy is fully observable and all quantities can be computed exactly. The goal of this experiment is to causally verify each component of the collapse mechanism under minimal assumptions. 

\paragraph{Setup.}
We define a discrete output space of 20 tokens, among which a subset of 3 tokens are designated as correct:
\[
\mathcal{Y}^{+} = \{y_1, y_2, y_3\}, 
\qquad 
\mathcal{Y}^{-} = \mathcal{Y} \setminus \mathcal{Y}^{+}.
\]

The policy $\pi_{\theta}(y)$ is parameterized as a softmax over logits $z_y$:
\[
\pi_{\theta}(y) = \frac{\exp(z_y)}{\sum_{y'} \exp(z_{y'})}.
\]

To reflect realistic pretrained biases, the initial policy assigns non-uniform probability mass within the correct set, while distributing the remaining mass across incorrect outputs. This ensures that symmetry is broken at initialization, allowing us to study how small imbalances evolve under training.

\paragraph{Training Dynamics.}
At each iteration:
\begin{itemize}
    \item We sample $K$ outputs from $\pi_{\theta}$ (on-policy sampling).
    \item Rewards are assigned deterministically:
    \[
    R(y) = \mathbf{1}[y \in \mathcal{Y}^{+}].
    \]
    \item The policy is updated using either GRPO, UCPO, and Global entropy regularization.
\end{itemize}

\paragraph{Tracked Metrics.}
Since the full distribution is explicitly available, we analyze training dynamics through the following quantities:

\textbf{(i) Sampling statistics.}  
The number of times each output is sampled: $N_y$.

\textbf{(ii) Conditional distribution over correct solutions.}  
We define the normalized distribution restricted to correct outputs:
\[
q_\theta(y \mid \mathcal{Y}^{+}) = \frac{\pi_{\theta}(y)}{\sum_{y' \in \mathcal{Y}^{+}} \pi_{\theta}(y')}, 
\quad y \in \mathcal{Y}^{+}.
\]

\textbf{(iii) Entropy of correct solutions.}  
To measure diversity within the correct set:
\[
H(q_\theta) = - \sum_{y \in \mathcal{Y}^{+}} q_\theta(y \mid \mathcal{Y}^{+}) \log q_\theta(y \mid \mathcal{Y}^{+}).
\]

\textbf{(iv) Total correctness mass.}  
The probability assigned to correct outputs:
\[
Z_\theta = \sum_{y \in \mathcal{Y}^{+}} \pi_{\theta}(y).
\]

\textbf{(v) Relative dominance.}  
We monitor pairwise probability ratios between correct solutions to capture mode concentration:
\[
\frac{\pi_{\theta}(y_i)}{\pi_{\theta}(y_j)}, \quad y_i, y_j \in \mathcal{Y}^{+}.
\]

\subsection{GRPO Induces Self-Reinforcing Collapse}
\label{app:finding_1_collpase}
We  validate each step of the collapse mechanism (Section 5), with the corresponding evidence shown in Figure ~\ref{fig:collapse_figure}. 
\begin{enumerate}
    \item \textbf{Sampling bias: } Figure ~\ref{fig:collapse_figure}(a) shows that higher-probability solutions are sampled more frequently, with empirical counts closely matching theoretical expectation:
\[
\mathbb{E}[N_y] = K \cdot \pi_{\theta}(y).
\]

Even under near-uniform initialization (Appendix ~\ref{app:finding_2_always_skewed}), stochastic sampling produces unequal counts, confirming the inherent bias of on-policy estimation.
    \item \textbf{Gradient amplification:} Figure ~\ref{fig:collapse_figure}(b) plots $f(p) = p(1 - p)$, the function proportional to the expected gradient magnitude for a softmax policy. The green dots mark the initial probabilities of the three correct tokens. The key observation is that tokens already at higher probability receive a larger expected gradient update. Under GRPO, all correct samples receive the same normalized advantage $A^{+}$, so the total expected update to $\log \pi(y)$ scales as:
\[
\Delta \log \pi_\theta(y) \propto \eta \cdot A^{+} \cdot K \cdot \pi_\theta(y).
\]

This confirms that GRPO amplifies sampling asymmetries rather than correcting them.
    \item \textbf{Probability ratios diverge monotonically.} Figure ~\ref{fig:collapse_figure}(c) tracks the evolution of probability ratios between correct solutions. The log-ratio between dominant and minority solutions increases steadily over training:
\[
\log \frac{\pi_{\theta}(y_1)}{\pi_{\theta}(y_2)} \uparrow.
\]

This demonstrates the positive feedback loop predicted in Theorem ~\ref{thm:divergence}, that the expected change in log-ratio is proportional to the difference in probabilities, which is always positive when $y_0 \geq y_1$.
    \item \textbf{Collapse is irreversible once below the sampling threshold.} Figure ~\ref{fig:collapse_figure}(d) shows the conditional distribution
\[
q_\theta(y \mid \text{correct}) = \frac{\pi_{\theta}(y)}{Z_\theta}
\]
at five snapshots during training (steps $0, 50, 100, 200, 299$). The blue dashed line marks the uniform target $1/3$. By step $299$, virtually all mass has migrated to $y_0$. The solutions $y_1$ and $y_2$ fall so far below the sampling threshold ($\mathbb{E}[N_y] \ll 1$) that they receive negligible gradient signal and can no longer recover. This confirms the irreversibility predicted in Corollary ~\ref{cor:threshold}.
\end{enumerate}

\begin{figure*}[!t]
        \centering
        \includegraphics[width=\linewidth]{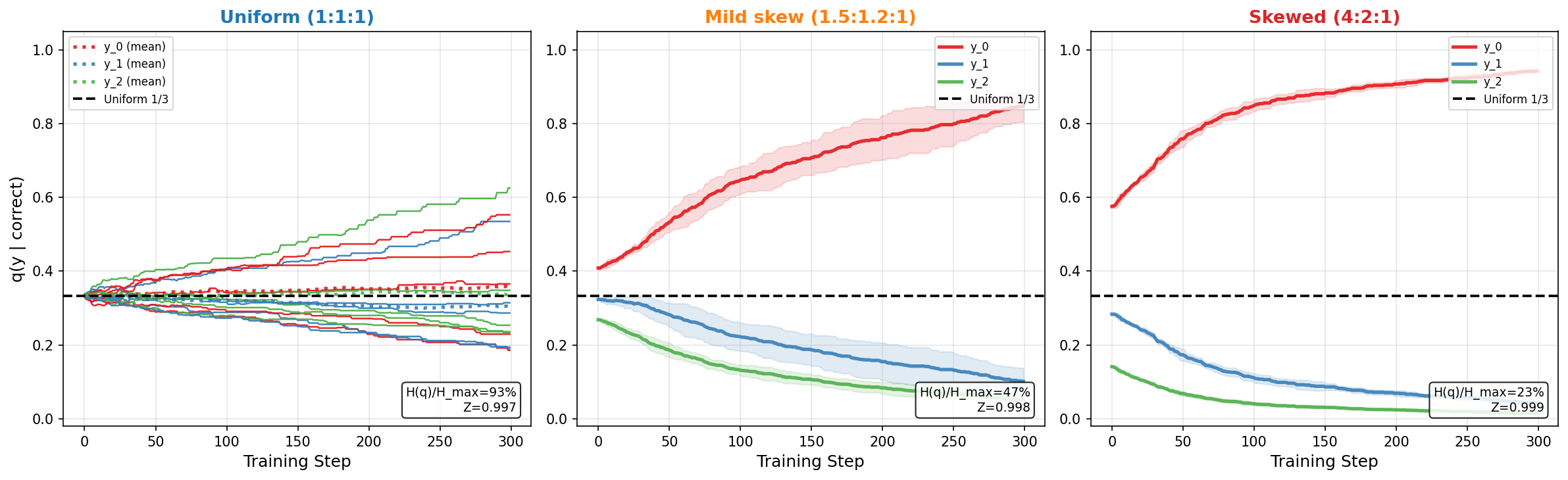}
        \vspace{-1em}
   \caption{Collapse dynamics under GRPO in a controlled RLVR environment with three correct tokens and three initialization profiles. Each panel tracks the conditional distribution $q(y \mid \mathcal{Y}^+)$ over 300 training steps across 5 random seeds; the dashed blue line marks the uniform target $1/3$. Even under uniform initialization, stochastic sampling breaks symmetry and GRPO collapses to a single dominant token. With increasing initialization skew, collapse becomes faster and more deterministic. In all cases, total correctness mass $Z$ remains high, showing that GRPO improves correctness while collapsing within-correct diversity.}
    \label{fig:init_robustness}
\end{figure*}

\subsection{Collapse Is Not Limited to Skewed Initializations}
\label{app:finding_2_always_skewed}

In Figure~\ref{fig:collapse_figure}, we use a skewed initialization of $4:2:1$ over
correct tokens to make the collapse dynamics visually clear. However, even under near-uniform initialization, stochastic sampling produces unequal counts, confirming the inherent bias of on-policy estimation, consistent with Proposition \ref{prop:sampling}. To verify this we evaluate three initialization profiles spanning the range from maximally symmetric to strongly biased. The reference policy assigns background mass $0.01$ to each incorrect token (before normalization) across all profiles:

\begin{itemize}
    \item \textbf{Uniform (1:1:1):} $q_\theta(y_0) = q_\theta(y_1) = q_\theta(y_2) = 0.333$. Tests the null
    hypothesis that collapse requires an initial imbalance.
    \item \textbf{Mild skew (1.5:1.2:1):} $q_\theta(y_0) = 0.41,\; q_\theta(y_1) = 0.32,\; q_\theta(y_2) =
    0.27$. Reflects the small asymmetry.
    \item \textbf{Skewed (4:2:1):} $q_\theta(y_0) = 0.57,\; q_\theta(y_1) = 0.29,\; q_\theta(y_2) = 0.14$.
\end{itemize}

We run 5 independent seeds per profile and track the conditional entropy $H(q_\theta)$ and
correctness mass $Z_\theta$ throughout training. Results are shown in Figure~\ref{fig:init_robustness}.
Under skewed and mild-skew initialization, the dominant token $y_0$ wins deterministically across all 5 seeds: GRPO always amplifies the initially more-probable correct token, confirming the collapse mechanism. Final conditional entropy is
$H(q_\theta)/H_\mathrm{max} \approx 16\%$ (skewed) and $32\%$ (mild skew), both far below
the uniform target.

Under uniform initialization, collapse still occurs in every seed, but the winner is arbitrary: across 5 seeds, $y_0$ collapses to the dominant mode in 3 runs, $y_1$ in 1 run, and $y_2$ in 1 run. This is a direct consequence of Proposition~\ref{prop:symmetry} (stochastic symmetry breaking): even when $\pi_0(y \mid x) = 1/|Y^+|$, finite-batch sampling produces unequal counts $\{N_y\}$ with probability 1, seeding the positive feedback loop. GRPO then amplifies whichever token happened to be over-represented in the first batch.

These results establish that mode collapse is not an artifact of a skewed initialization, it is an intrinsic consequence of the interaction between on-policy sampling variance and GRPO's indifference to within-correct diversity. The skewed profile collapses faster and more predictably; the uniform profile collapses more slowly and to an arbitrary mode.

 \begin{figure*}[!ht]
        \centering
        \includegraphics[width=\linewidth]{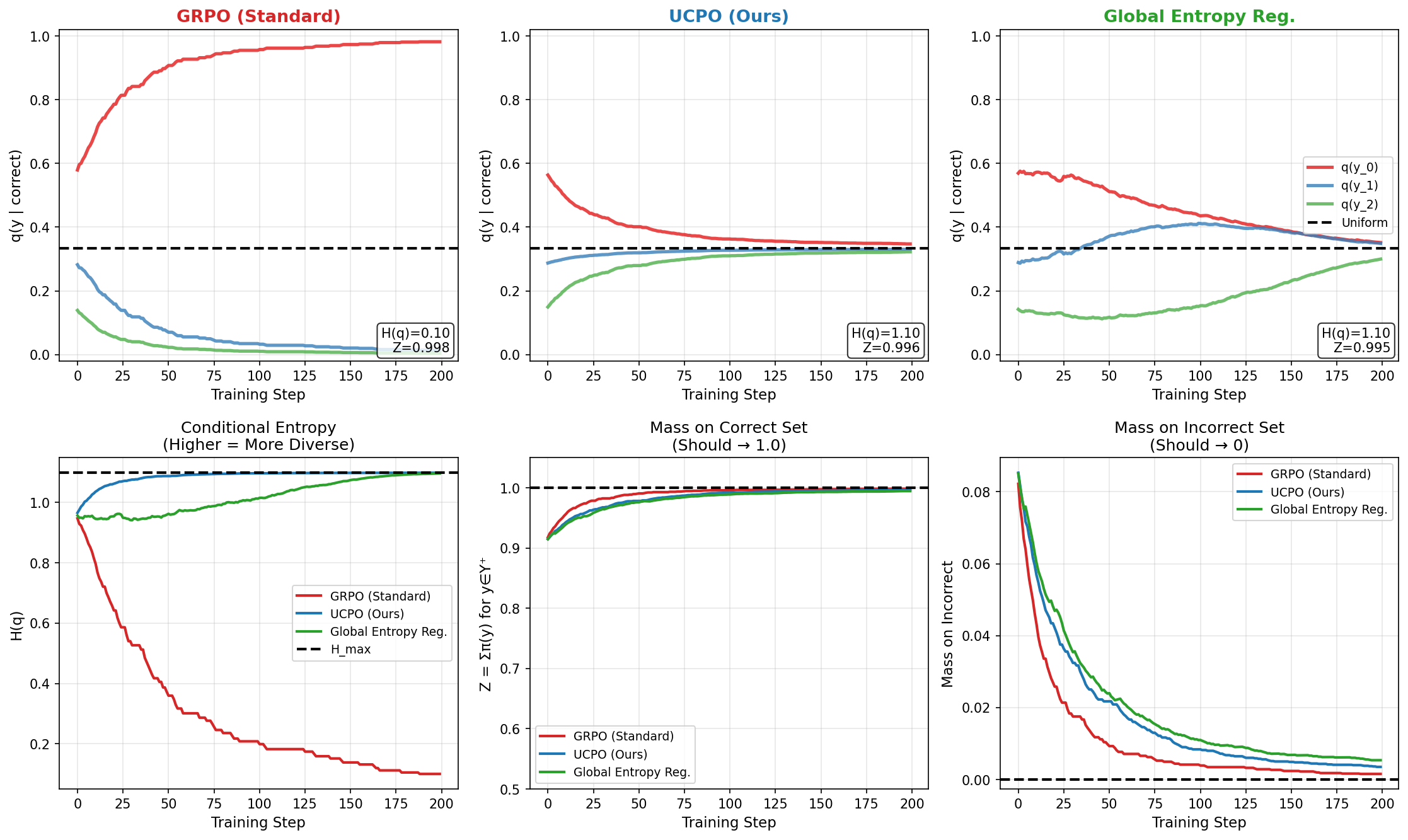}
    \caption{Training dynamics under GRPO, UCPO, and global entropy regularization.
GRPO drives the conditional distribution over correct tokens toward a single dominant mode, causing conditional entropy to collapse. UCPO instead maintains a near-uniform distribution within the correct set while preserving total correctness mass. Global entropy regularization increases entropy only by assigning probability mass to incorrect tokens, demonstrating that global entropy acts on the wrong support. }
    \label{fig:method_comp}
\end{figure*}

\subsubsection*{GRPO Induces Concentration within the Correct Set}

We further analyze the training dynamics of GRPO, UCPO, and global entropy regularization in depth (Figure~\ref{fig:method_comp}).
As shown in Figure~\ref{fig:method_comp}(a), GRPO drives the conditional distribution $q_\theta(y \mid \mathcal{Y}^{+})$ toward a single dominant solution. Correspondingly, Figure~\ref{fig:method_comp}(d) shows rapid decay in conditional entropy $H(q_\theta)$, while Figure~\ref{fig:method_comp}(e) shows that total correctness mass $Z$ remains high.

This indicates that GRPO improves correctness by concentrating probability mass rather than maintaining coverage over multiple valid solutions while UCPO and Global entropy regularization maintain a balance distribution. As a result, diversity within the correct set collapses despite high sampling efficiency.

\begin{figure*}[!ht]
        \centering
        \includegraphics[width=\linewidth]{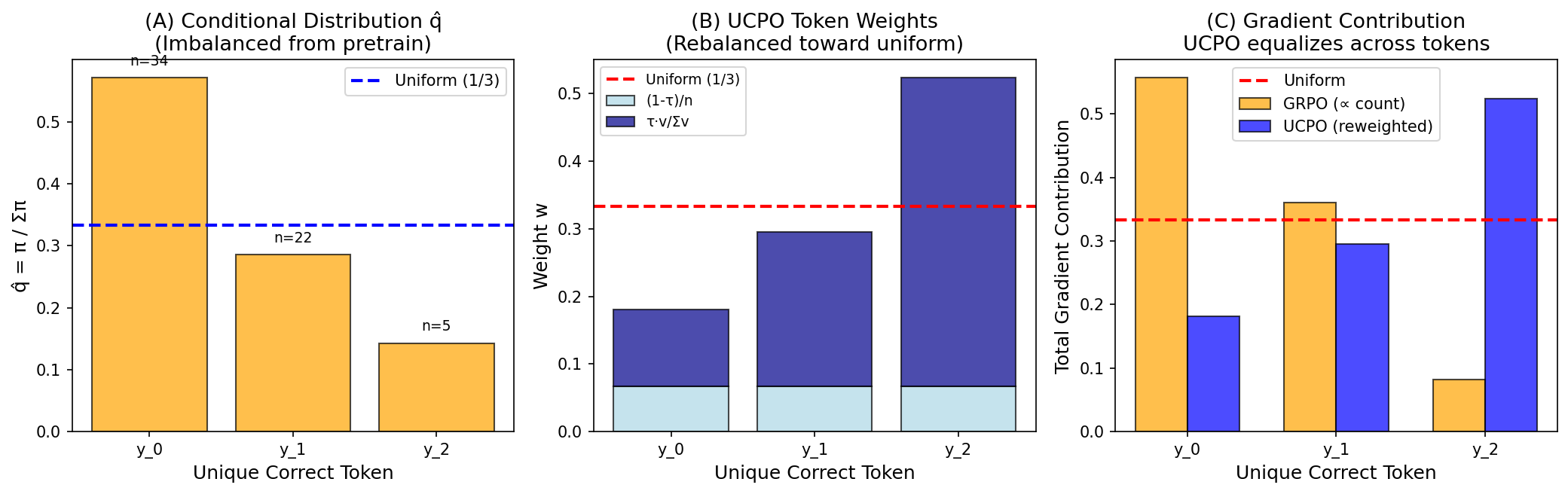}
    \caption{UCPO reweights gradient mass within the correct set.
Starting from an imbalanced conditional distribution over correct tokens, UCPO assigns larger weight to underrepresented correct tokens and smaller weight to dominant ones. Unlike GRPO, whose total gradient contribution is proportional to sample count, UCPO approximately equalizes total contribution across unique correct tokens while preserving the total advantage mass.}
    \label{fig:ucpo_grad_weight}
\end{figure*}

\subsection{UCPO Preserves Diversity via Gradient Redistribution}
\label{app:finding_4_ucpo_weights}
UCPO modifies training dynamics by redistributing gradient signal within the correct set according to the reweighting mechanism derived in Section ~\ref{sec:UCPO}.

As shown in Figure~\ref{fig:method_comp}(a), UCPO maintains a balanced conditional distribution:
$
q_\theta(y \mid \mathcal{Y}^{+}) \approx \text{uniform}.
$
Figure~\ref{fig:method_comp}(d) shows that conditional entropy remains high throughout training, while Figure~\ref{fig:method_comp}(e) shows that correctness mass $Z$ is preserved. 

To isolate the mechanism of UCPO, we further analyze gradient contributions per solution. Figure~\ref{fig:ucpo_grad_weight}(b)  shows that UCPO equalizes gradient contributions across correct solutions and removes the implicit bias toward dominant modes while preserving total gradient mass. In contrast to GRPO where contributions scale with sample count (Figure~\ref{fig:ucpo_grad_weight}(a)), leads to dominance of frequently sampled solutions.

These results demonstrate that UCPO counteracts the sampling-induced imbalance by redistributing updates within the correct set, preventing concentration while maintaining correctness.

\begin{figure*}[!ht]
        \centering
        \includegraphics[width=\linewidth]{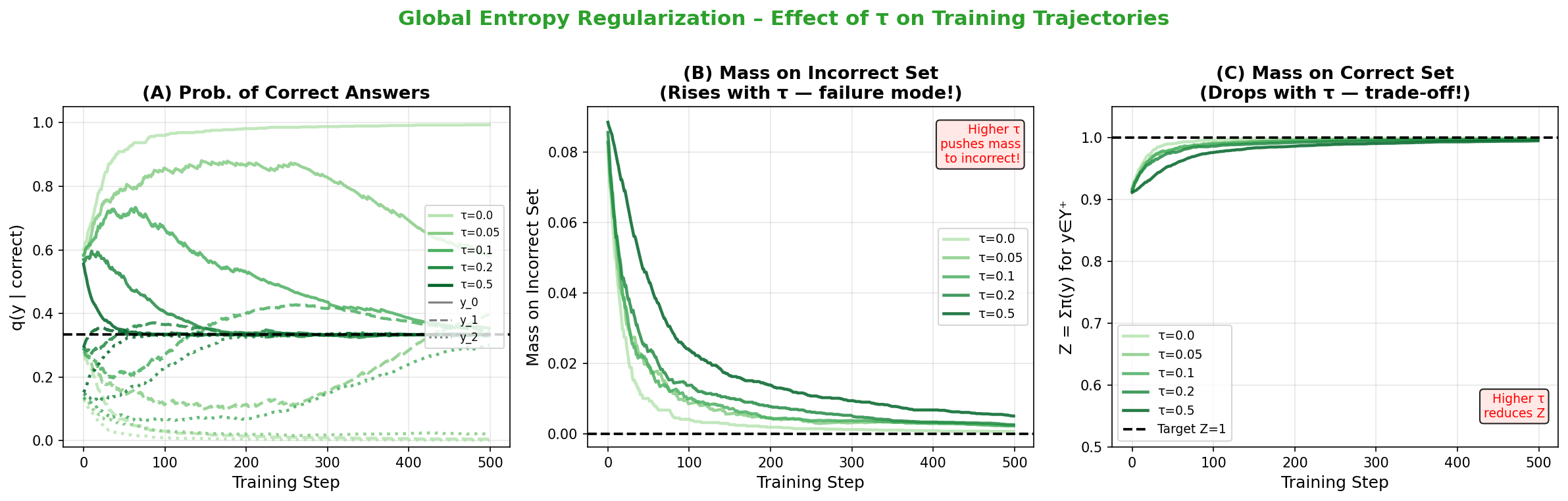}
    \caption{Effect of global entropy regularization strength in the controlled environment. (A) As the entropy coefficient $\tau_\text{ent}$ increases, the conditional distribution over correct tokens remains diverse but at a cost. (B) Mass on incorrect tokens grows with $\tau_\text{ent}$, confirming that the entropy bonus incentivizes incorrect outputs equally.
(C) Correctness mass $Z_\theta$ decreases as $\tau_\text{ent}$ increases, demonstrating the fundamental trade-off: global entropy regularization improves diversity only by sacrificing correctness, as it acts on the entire output space rather than within the correct set.}
    \label{fig:global_ent_reg_mass}
\end{figure*}

\subsection{Global Entropy Regularization Increases Entropy on the Wrong Support}
\label{app:finding_5_global_ent}

We analyze global entropy regularization in Figure~\ref{fig:method_comp} and Figure~\ref{fig:global_ent_reg_mass}.

Figure~\ref{fig:method_comp}(d) shows that entropy regularization increases overall entropy, but Figure~\ref{fig:method_comp}(f) reveals that this increase is partly driven by probability mass assigned to incorrect outputs.

Figure~\ref{fig:global_ent_reg_mass}(b) further shows that as $\tau_\text{ent}$ increases, the equilibrium mass on incorrect tokens grows monotonically. Since the entropy bonus is applied uniformly across all outputs, it provides equal incentive to increase the probability on incorrect and underrepresented correct solutions.

As a result, Figure~\ref{fig:global_ent_reg_mass}(c) shows that improvements in entropy $H(q_\theta)$ come at the cost of reduced correctness mass $Z_\theta$. This confirms that global entropy regularization increases entropy on the wrong support and cannot preserve diversity within the correct set.

\begin{figure*}[!ht]
        \centering
        \includegraphics[width=\linewidth]{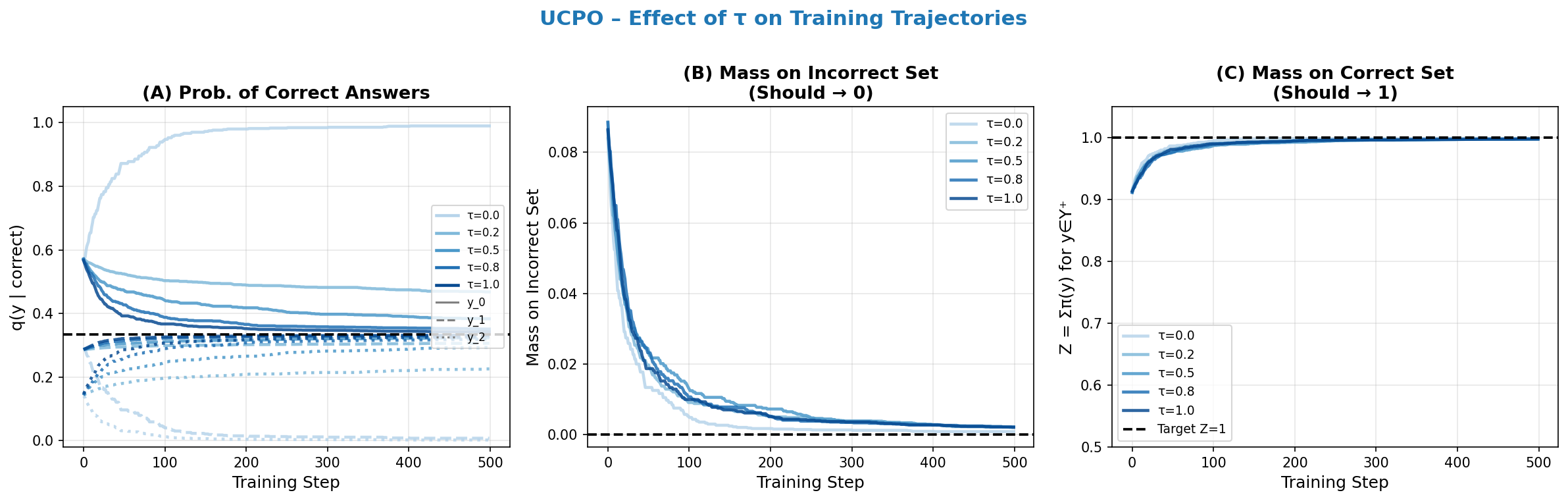}
    \caption{Effect of UCPO interpolation strength $\tau$ in the controlled environment. (A) Increasing $\tau$ progressively maintains diversity within the correct set, preventing concentration onto a single mode. (B) Mass on incorrect tokens remains negligible across all $\tau$ values, confirming that UCPO operates exclusively within the correct set.
(C) Correctness mass $Z_\theta$ stays near 1.0 for all $\tau$, demonstrating that UCPO achieves diversity without the correctness trade-off observed under global entropy regularization.}
    \label{fig:ucpo_mass}
\end{figure*}

\subsection{UCPO Maintains Correctness–Diversity Trade-off}
\label{app:finding_6_ucpo}
Figure~\ref{fig:ucpo_mass} shows that increasing $\tau$ improves diversity while preserving correctness. Unlike entropy regularization, the mass on incorrect outputs remains negligible across all $\tau$, and $Z_\theta$ remains close to 1.0.

This demonstrates that UCPO can increase diversity without incurring the correctness trade-off observed in global entropy methods.

%% file: appendix/why_global_ent_fails.tex
\label{app:global-entropy-fails}
A direct way to operationalize the entropy-regularized criterion from Section~\ref{sec:optimal_policy} is to add a global entropy bonus to the RLVR objective:
\begin{equation}
J_{\mathrm{ent}}(\pi)
=
\mathbb{E}_{y \sim \pi}\!\left[ R(x,y) \right]
+
\tau_\text{ent} \cdot H\!\left(\pi(\cdot \mid x)\right).
\label{eq:global_entropy}
\end{equation}
Although this objective can recover the Uniform-Correct Policy for a suitable choice of $\tau$ (Theorem~\ref{thm:entropy_regularized}), it does not reliably recover UCP in practice. The coefficient $\tau$ is difficult to tune (Table ~\ref{tab:tau_sensitivity_global_entropy}), and for suboptimal values the optimal policy need not remain uniform-correct. Further, the induced optimization dynamics do not directly target balanced allocation within the correct set.

First, the entropy term is computed over the full output space $\mathcal{Y}$, including incorrect solutions, rather than within the correct set $\mathcal{Y}^+(x)$. The reward term pushes mass toward correct outputs, while the entropy term pushes mass toward all outputs indiscriminately. This creates a structural trade-off: diversity is encouraged on the wrong support. Empirically, global entropy regularization increases overall entropy but does not improve diversity within the correct set (Section~\ref{sec:diversity_results}), with much of the entropy increase arising from mass allocated to incorrect outputs (Appendix~\ref{app:finding_5_global_ent}).

Second, under GRPO with binary rewards, when all rollouts for a prompt are correct (or all incorrect), the group-normalized advantages are $A_i = 0$ for every rollout. In this case, the reward term contributes no gradient, and the update is driven entirely by the entropy bonus. The policy is then pushed to spread mass across all outputs, including incorrect ones, rather than preserving diversity within the correct set. This failure mode is particularly common for easier prompts, where the model achieves high accuracy early in training.
Thus, global entropy regularization fails to approximate the optimum of Eq.~\eqref{eq:entropy_regularized_objective}: diversity should be encouraged within correct responses, not across all outputs.

Table~\ref{tab:tau_sensitivity_global_entropy} shows that global entropy regularization provides only marginal and inconsistent gains over GRPO. The best average Pass@64 is obtained at $\tau_{\mathrm{ent}}=0.001$, improving over GRPO by only $+0.19$ points (82.04 vs.\ 81.85). The effect also varies across benchmarks: for example, $\tau_{\mathrm{ent}}=0.0005$ improves AMC Pass@64 but substantially hurts AIME Pass@64, and no coefficient consistently improves all datasets. These results support our claim that global entropy is a brittle mechanism for improving multi-sample coverage, because it acts over the full output space rather than specifically reallocating probability within the correct set.

 \begin{table}[h]
\centering
\caption{Performance at Pass@1 and Pass@64 across benchmarks for different global entropy coefficients for Global Entropy Regularization on DeepSeek-R1-Distill-Qwen-1.5B. All other hyperparameters are held fixed.}
\label{tab:tau_sensitivity_global_entropy}
\resizebox{0.8\textwidth}{!}{
\begin{tabular}{lcccccccccc}
\toprule
& \multicolumn{2}{c}{AIME} 
& \multicolumn{2}{c}{AMC} 
& \multicolumn{2}{c}{MATH} 
& \multicolumn{2}{c}{OlympiadBench} 
& \multicolumn{2}{c}{Avg} \\
\cmidrule(lr){2-3} \cmidrule(lr){4-5} \cmidrule(lr){6-7} \cmidrule(lr){8-9} \cmidrule(lr){10-11}
$\tau_{\text{ent}}$ & @1 & @64 & @1 & @64 & @1 & @64 & @1 & @64 & @1 & @64 \\
\midrule
0.0 (GRPO) 
& 19.06 & 63.33 
& 55.27 & 87.95 
& 71.57 & 97.60 
& 42.63 & 78.52 
& 47.13 & 81.85 \\

0.0005
& 19.79 & 53.33 
& 54.84 & 92.77 
& 69.45 & 97.20 
& 41.69 & 77.93 
& 46.44 & 80.31 \\

0.001
& 19.43 & 63.33 
& 56.12 & 89.16 
& 72.23 & 97.60
& 43.03 & 78.07 
& 47.70 & 82.04\\

0.003
& 19.38 & 60.00 
& 54.71 & 90.36
& 71.45 & 97.20 
& 42.58 & 77.93 
& 47.03 & 81.37 \\

0.005
& 19.48 & 63.33 
& 55.25 & 90.36 
& 72.08 & 97.20 
& 43.86 & 77.04 
& 47.67 & 81.98 \\
\bottomrule
\end{tabular}
}
\end{table}

%% file: appendix/App_exp_details.tex
\label{app:implementation_det}
\subsection{Datasets}
We use the RL training dataset  from \citet{liao2025enhancing}, derived from DeepScaleR 40k corpus \citep{luo2025deepscaler}. The training pool consists of competition-style mathematical reasoning problems drawn from MATH \citep{hendrycksmath2021}, AIME (1983–2023) problem sets, AMC (pre-2023) contests, and Omni-MATH \citep{gao2024omni},  with difficulty rebalancing to reduce trivial problems and improve RL training efficiency. The final training set contains 10,000 problems, with an additional 500 problems reserved for validation.

\subsection{Models}
We evaluate on three models spanning different scales and pretraining approaches: DeepSeek-R1 series: DeepSeek-R1-Distill-Qwen-7B and DeepSeek-R1-Distill-Qwen-1.5B \citep{DeepSeekAI2025DeepSeekR1IR} as well as Qwen2.5-7B \citep{qwen2.5}.
These models span both general model and distillation models, while also covering different parameter scales (1.5B to 7B). All models are initialized from publicly released checkpoints and fine-tuned using the same RLVR training pipeline.

\subsection{Implementation and Evaluation Details}
Hyperparameters are listed in 
Table ~\ref{tab:training_hyperparameters_ds_1.5B}, ~\ref{tab:training_hyperparameters_ds_7B} and ~\ref{tab:training_hyperparameters_qwen_7B}.  For evaluation we report Pass@1 and Pass@K ($K=64$) across five benchmarks. Pass@K measures the fraction of problems for which at least one of $K$ sampled completions is accepted by the verifier. For each benchmark problem, we sample 64 rollouts at temperature 1.0. For inference for each benchmark problem, we sample rollouts using temperature $1.0$ and maximum generation length of $3072$ for DeepSeek-R1-Distill-Qwen-1.5B and DeepSeek-R1-Distill-Qwen-7B and $4096$ for Qwen-2.5-7B. 
\begin{table}[t]
\centering
\small
\setlength{\tabcolsep}{8pt}
\begin{tabular}{l c l c}
\toprule
\textbf{Parameter} & \textbf{Value} & \textbf{Parameter} & \textbf{Value} \\
\midrule
Pretrained Model        & DeepSeek-R1-Distill-Qwen-1.5B   & Device               & 4 $\times$ Nvidia-H100 \\
Prompts per batch       & 64                & Generations per prompt & 8 \\
Gradient update per RL step & 1            & Max prompt length    & 1024 \\
Max response length     & 3072              & Learning rate        & $5 \times 10^{-6}$ \\
Clip ratio low          & 0.2               & Clip ratio high      & 0.2 \\
Epochs         & 2               & $\beta$              & 0.0 \\
Rollout temperature  & 1               & $\tau$           & 0.2 \\

\bottomrule
\end{tabular}
\caption{Training configuration for DeepSeek-R1-Distill-Qwen-1.5B}
\label{tab:training_hyperparameters_ds_1.5B}
\end{table}

\begin{table}[t]
\centering
\small
\setlength{\tabcolsep}{8pt}
\begin{tabular}{l c l c}
\toprule
\textbf{Parameter} & \textbf{Value} & \textbf{Parameter} & \textbf{Value} \\
\midrule
Pretrained Model        & DeepSeek-R1-Distill-Qwen-7B   & Device               & 8 $\times$ Nvidia-H100 \\
Prompts per batch       & 64                & Generations per prompt & 8 \\
Gradient update per RL step & 1            & Max prompt length    & 1024 \\
Max response length     & 3072              & Learning rate        & $2 \times 10^{-6}$ \\
Clip ratio low          & 0.2               & Clip ratio high      & 0.2 \\
Epochs         & 2               & $\beta$              & 0.0 \\
Rollout temperature  & 1                & $\tau$           & 0.2 \\
\bottomrule
\end{tabular}
\caption{Training configuration for DeepSeek-R1-Distill-Qwen-7B}
\label{tab:training_hyperparameters_ds_7B}
\end{table}

\begin{table}[t]
\centering
\small
\setlength{\tabcolsep}{8pt}
\begin{tabular}{l c l c}
\toprule
\textbf{Parameter} & \textbf{Value} & \textbf{Parameter} & \textbf{Value} \\
\midrule
Pretrained Model        & Qwen2.5-Math-7B   & Device               & 8 $\times$ Nvidia-H100 \\
Prompts per batch       & 64                & Generations per prompt & 8 \\
Gradient update per RL step & 1            & Max prompt length    & 1024 \\
Max response length     & 4096              & Learning rate        & $1 \times 10^{-6}$ \\
Clip ratio low          & 0.2               & Clip ratio high      & 0.2 \\
Epochs         & 2               & $\beta$              & 0.0 \\
Rollout temperature  & 1              & $\tau$           & 0.2 \\

\bottomrule
\end{tabular}
\caption{Training configuration for Qwen2.5-Math-7B}
\label{tab:training_hyperparameters_qwen_7B}
\end{table}
 

%% file: appendix/App_additional_results.tex
 \subsection{Pass@K Experiments}
\label{app:more_pass_K_baselines}

\begin{figure*}[!ht]
    \centering
        \includegraphics[width=\linewidth]{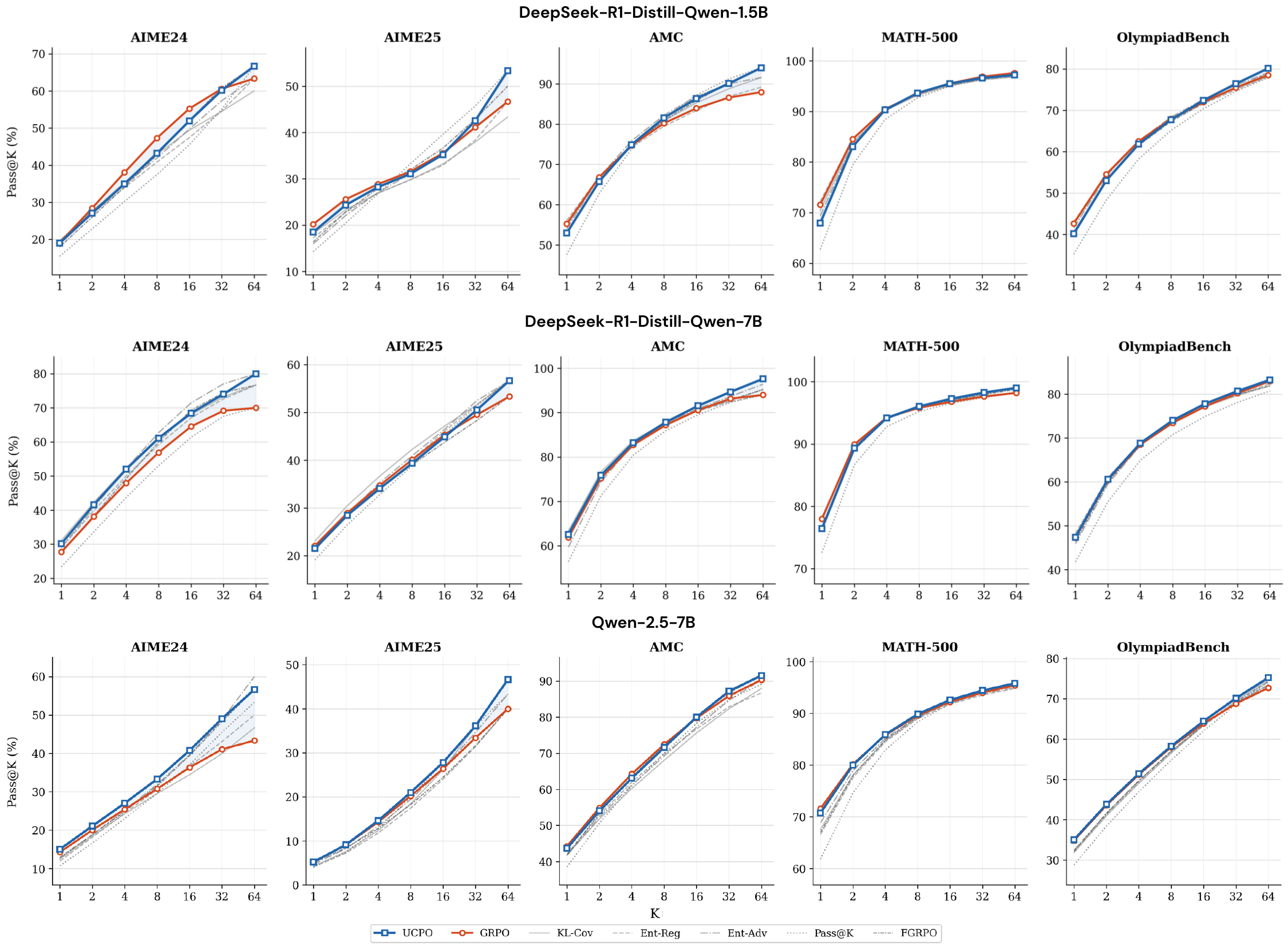}
   \caption{Pass@k curves across mathematical reasoning benchmarks and models. UCPO consistently achieves higher Pass@k, indicating better preservation of diverse correct solution paths. This behavior is consistent with UCPO redistributing gradient signal within the correct set, counteracting the collapse mechanism.}
    \label{fig:passk_all5}
\end{figure*}

Figures~\ref{fig:passk_all5} present the  Pass@K curves for $K \in \{1, 2, 4, 8, 16, 32, 64\}$ on DeepSeek-R1-Distill-Qwen-1.5B,  DeepSeek-R1-Distill-Qwen-7B and Qwen-2.5-7B, respectively, across all five benchmarks (AIME24, AIME25, AMC, MATH, OlympiadBench). We compare UCPO against all baselines: GRPO, KL-Cov, Ent-Reg, Ent-Adv, Pass@K Training, and FGRPO.
  UCPO curves maintain improved performance through $K = 64$. 
The gap between UCPO and GRPO widens monotonically with $K$ especially across harder datasets such as AIME. On the 7B model, this gap is particularly pronounced on AIME24 (up to $+10$ points at $K=64$).

\subsection{Cost Overhead}
\label{app:cost}

\begin{figure}[h]
\centering
\includegraphics[width=0.7\linewidth]{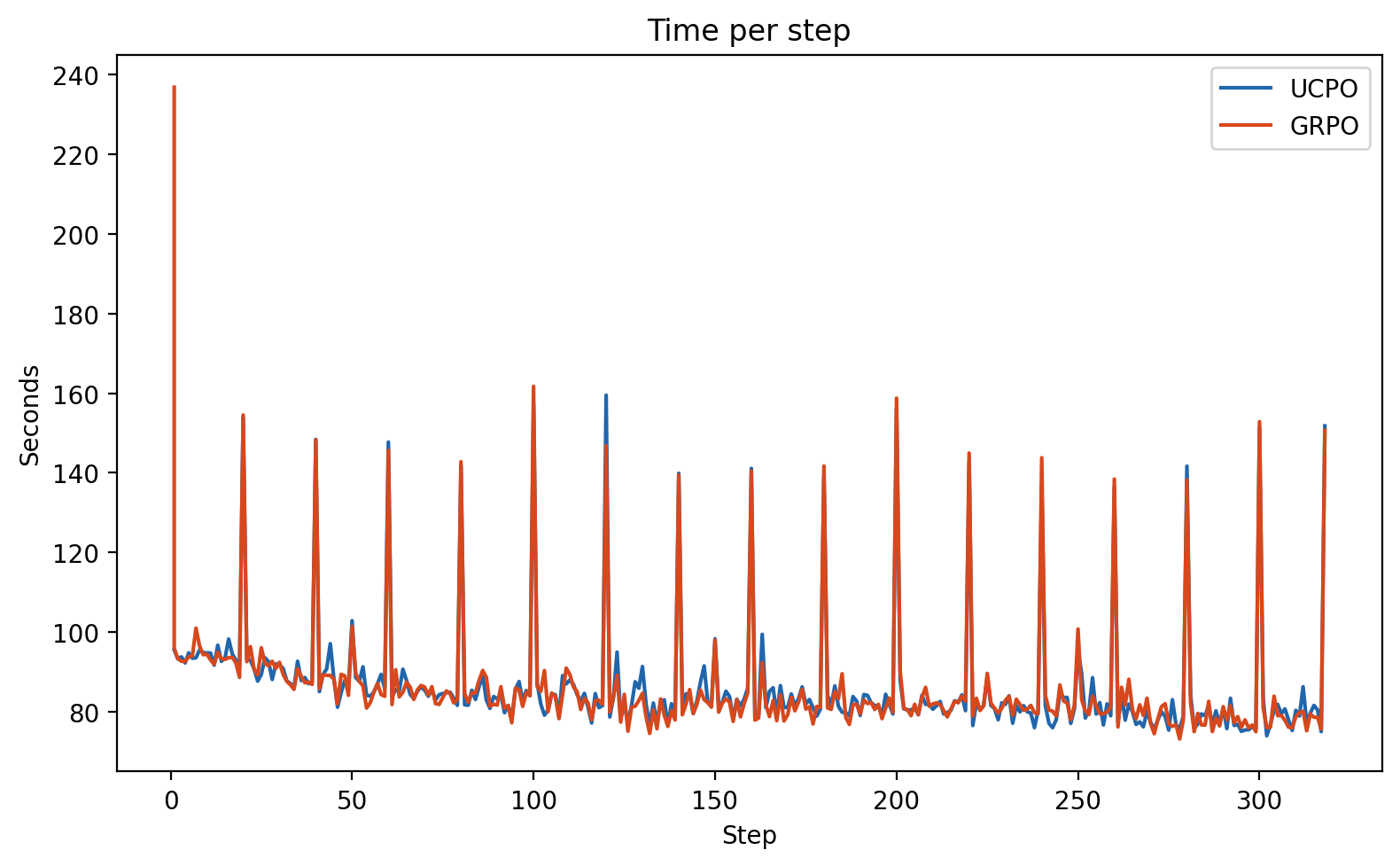}
\caption{Wall-clock time per training step for GRPO and UCPO 
under identical configuration (DeepSeek-R1-Distill-Qwen-1.5B, 
4$\times$H100). The two methods exhibit nearly identical 
training cost throughout. Periodic spikes correspond to 
checkpoint saving.}
\label{fig:cost}
\end{figure}

Figure~\ref{fig:cost} shows that UCPO and GRPO have 
effectively identical per-step training time. UCPO's only 
additional computation is the $O(n)$ scalar advantage 
reweighting (Eq.~\ref{eq:blended-weights}), which is negligible 
relative to rollout generation and the policy gradient 
backward pass. UCPO requires no additional forward passes, 
no reference model, and no extra rollouts.
 
\subsection{Sensitivity to $\tau$}\label{app:tau_sensitivity}
 
UCPO introduces a single hyperparameter $\tau \in [0,1]$ that interpolates between standard GRPO ($\tau = 0$) and fully uniform-correct weighting ($\tau = 1$). We evaluate sensitivity on DeepSeek-R1-Distill-Qwen-1.5B across four values: $\tau \in \{0.2, 0.3, 0.4, 0.5\}$. 
 \begin{table}[h]
\centering
\caption{Performance at Pass@1 and Pass@64 across benchmarks for different UCPO $\tau$ values on DeepSeek-R1-Distill-Qwen-1.5B. All other hyperparameters are held fixed.}
\label{tab:tau_sensitivity}
\resizebox{0.8\textwidth}{!}{
\begin{tabular}{lcccccccccc}
\toprule
& \multicolumn{2}{c}{AIME} 
& \multicolumn{2}{c}{AMC} 
& \multicolumn{2}{c}{MATH} 
& \multicolumn{2}{c}{OlympiadBench} 
& \multicolumn{2}{c}{Avg} \\
\cmidrule(lr){2-3} \cmidrule(lr){4-5} \cmidrule(lr){6-7} \cmidrule(lr){8-9} \cmidrule(lr){10-11}
$\tau$ & @1 & @64 & @1 & @64 & @1 & @64 & @1 & @64 & @1 & @64 \\
\midrule
0.0 (GRPO) 
& 19.06 & 63.33 
& 55.27 & 87.95 
& 71.57 & 97.60 
& 42.63 & 78.52 
& 47.13 & 81.85 \\

0.1
& 19.38 & 66.67 
& 53.65 & 91.57 
& 68.89 & 97.6 
& 41.26 & 77.48 
& 45.79 & 83.33 \\

0.2 
& 19.01 & 66.67 
& 52.97 & 93.98 
& 67.93 & 97.20 
& 40.15 & 80.15 
& 45.02 & 84.50 \\

0.3 
& 19.01 & 66.67 
& 54.10 & 93.98 
& 68.95 & 97.40 
& 40.51 & 79.41 
& 45.64 & 84.36 \\

0.4 
& 18.96 & 63.33 
& 53.11 & 93.98 
& 73.54 & 97.00 
& 43.24 & 76.44 
& 47.21 & 82.69 \\

0.5 
& 18.23 & 56.67 
& 52.03 & 91.57 
& 70.22 & 97.40 
& 41.98 & 77.78 
& 45.61 & 80.85 \\

\bottomrule
\end{tabular}
}
\end{table}

Results are shown in Table~\ref{tab:tau_sensitivity}. Several patterns emerge. First, the diversity gains (Pass@64) are robust over GRPO ($\tau=0$) by $+1.2$ to $+2.7$ points. Second, Pass@1 remains competitive for $\tau \leq 0.4$. Third, at $\tau = 0.5$, Pass@64 begins to degrade, particularly on AIME ($56.67$, below GRPO). This is consistent with the theoretical expectation: at high $\tau$, the gradient is dominated by the uniform term $\mathbb{E}_{y \sim u}[\nabla \log \pi_\theta(y)]$, which may over-correct by distributing gradient signal too aggressively toward low-probability solutions, the importance weights $v_i = 1/\hat{q}_i$
become large and noisy (high variance), which can destabilize training.
 
These results suggest that $\tau$ provides a robust operating range. The diversity--correctness trade-off is mild: Pass@64 improves substantially while Pass@1 fluctuates within normal variance. We use $\tau = 0.2$ as the default throughout our experiments as it yields the strongest Pass@64 improvements. 

We also evaluated global entropy regularization across different entropy coefficients (Table~\ref{tab:tau_sensitivity_global_entropy}). Compared with UCPO, global entropy regularization underperforms at Pass@64 and  is more sensitive to its coefficient. This shows that global entropy is a less effective mechanism for improving multi-sample coverage, consistent with its action over the full output space rather than the conditional distribution over verifier-accepted responses.

\section{Equation Diversity}
We compute equational diversity following \citep{hu2025diversity}. For each prompt, we consider only correct rollouts. From each rollout, we extract
formulas appearing inside LaTeX math delimiters
(\texttt{\$...\$}, \texttt{\textbackslash(...\textbackslash)}, \texttt{\textbackslash[...\textbackslash]})
within the first $L$ characters of the response.
For rollout $i$, let $F_i$ denote the set of extracted formulas.

We define the equation uniqueness score for rollout $i$ as
\[
D_{\mathrm{eq}}^{(i)}
=
\frac{
\left| F_i \setminus \bigcup_{j \neq i} F_j \right|
}{
\max\left(1, |F_i|\right)
}.
\]

This score measures the fraction of formulas in rollout $i$ that do not appear in
any other correct rollout for the same prompt.
We average $D_{\mathrm{eq}}^{(i)}$ across correct rollouts to obtain a
prompt-level score, and then average across prompts within each dataset.
Higher values indicate greater equation-level diversity among correct solutions.

\begin{figure*}[!ht]
    \centering
        \includegraphics[width=\linewidth]{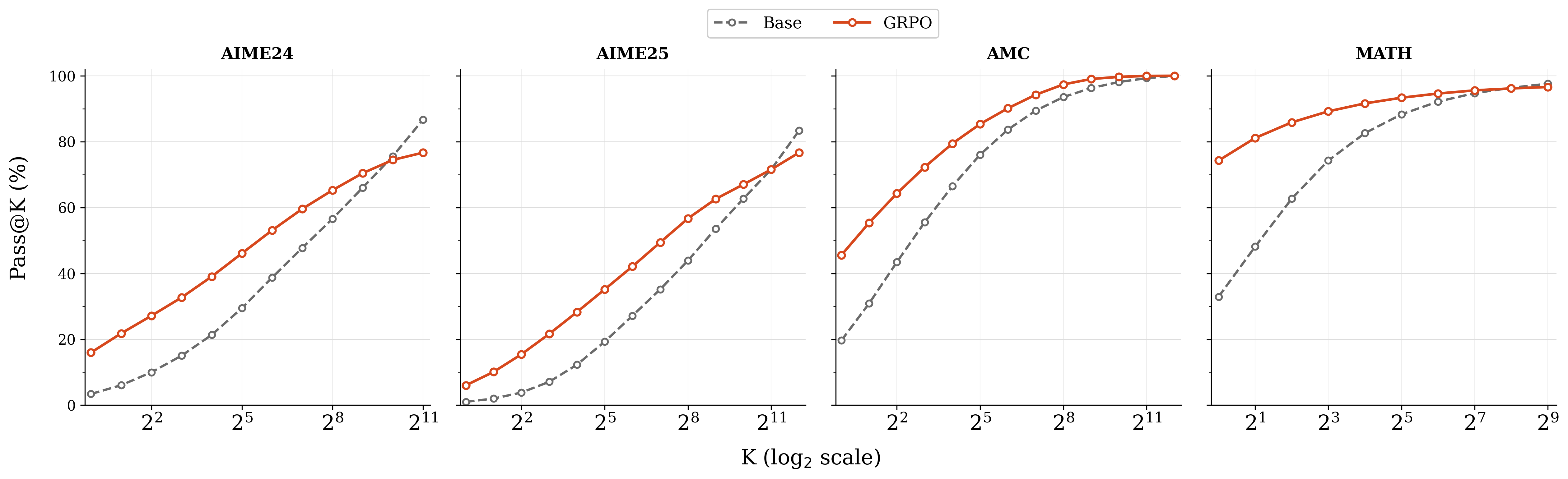}
    \caption{Pass@K comparison between the base model and GRPO on Qwen-2.5-7B. While GRPO improves Pass@1, it underperforms the base model at larger K, indicating contraction of the solution space.}
    \label{fig:div_collpase_base_vs_grpo}
\end{figure*}